\definecolor{bred}{rgb}{0.8,0,0}
\newtheorem{theorem}{Theorem}[section]
\newtheorem{corollary}{Corollary}[theorem]
\newtheorem{proposition}{Proposition}[theorem]
\newtheorem{lemma}[theorem]{Lemma}
\newtheorem{remark}[theorem]{Remark}
\def\rmd{\mathrm{d}}
\def\rset{\mathbb{R}}
\newcommand{\N}{\mathbb{N}}
\newcommand{\Z}{\mathbb{Z}}
\newcommand{\R}{\mathbb{R}}
\newcommand{\ind}{\mathbf{1}}
\newcommand{\lvrv}[1]{\left\Vert #1\right\Vert}
\newcommand{\eps}{\varepsilon}
\newcommand{\la}{\lambda}
\newcommand{\vinf}[1]{\lvrv{#1}_{\infty}}
\newcommand{\vlip}[1]{\lvrv{#1}_{\text{Lip}}}
\newcommand{\vsob}[1]{{\left\vert\kern-0.25ex\left\vert\kern-0.25ex\left\vert #1
		\right\vert\kern-0.25ex\right\vert\kern-0.25ex\right\vert}}
\newcommand{\lin}[1]{\text{Lin}\left(#1\right)}
\newcommand{\lip}[1]{\text{Lip}\left(#1\right)}
\newcommand{\id}[1]{\text{id}_{\R^{#1}}}
\newcommand{\hl}{H_\lambda(\theta_n,X_{n+1})}
\newcommand{\zs}{\bar{\zeta}_{s}^{\lambda, n}}
\newcommand{\fs}{\bar{\theta}^\lambda_{\lfloor s \rfloor}}
\newcommand{\ths}{\bar{\theta}^\lambda_ s}
\newcommand{\Xs}{X_{\lceil s\rceil}}
\newcommand{\zt}{\bar{\zeta}_{t}^{\lambda, n}}
\newcommand{\ft}{\bar{\theta}^\lambda_{\lfloor t \rfloor}}
\newcommand{\Xt}{X_{\lceil t\rceil}}
\newcommand{\tht}{\bar{\theta}^\lambda_ t}
\newcommand{\tn}{\theta^\lambda_n}
\DeclareMathOperator*{\il}{\mathbf{1}_{A_{n,M}}}
\DeclareMathOperator*{\ilp}{\mathbf{1}_{A_{n,M}}^C}
\DeclareMathOperator*{\ig}{\mathbf{1}_{A_{n,M_0}}}
\DeclareMathOperator*{\igp}{\mathbf{1}_{A_{n,M_0}^C}}
\DeclareMathOperator*{\E}{\mathbb{E}}
\begin{document}

\title{Taming neural networks with TUSLA: Non-convex learning via adaptive stochastic gradient Langevin algorithms
\thanks{All the authors were supported by The
 Alan Turing Institute, London under the EPSRC grant EP/N510129/1. A. L. and M. R. thank for the ``Lend\"ulet'' grant
 LP 2015-6 of the Hungarian Academy of Sciences.}}

\author[1]{Attila Lovas}
\author[2]{Iosif Lytras}
\author[1]{Mikl\'os R\'asonyi}
\author[2, 3, 4]{Sotirios Sabanis}

\affil[1]{\footnotesize Alfr\'ed R\'enyi Institute of Mathematics, 1053 Budapest, Re\'altanoda utca 13--15, Hungary}
\affil[2]{\footnotesize School of Mathematics, The University of Edinburgh, UK.}
\affil[3]{\footnotesize The Alan Turing Institute, London, UK.}
\affil[3]{\footnotesize National Technical University of Athens, Athens, 15780, Greece.}

\maketitle

\begin{abstract}
Artificial neural networks (ANNs) are typically highly nonlinear systems which are finely tuned via the optimization of their associated, non-convex loss functions. In many cases, the gradient of any such loss function has superlinear growth, making the use of the widely-accepted (stochastic) gradient descent methods, which are based on Euler numerical schemes, problematic. We offer a new learning algorithm based on an appropriately constructed variant of the popular stochastic gradient Langevin dynamics (SGLD), which is called tamed unadjusted stochastic Langevin algorithm (TUSLA). We also provide a nonasymptotic analysis of the new algorithm's convergence properties in the context of non-convex learning problems with the use of ANNs. Thus, we provide  finite-time guarantees for TUSLA to find approximate minimizers of both empirical and population risks. The roots of the TUSLA algorithm are based on the taming technology for diffusion processes with superlinear coefficients as developed in \cite{tamed-euler, SabanisAoAP} and for MCMC algorithms in \cite{tula}. Numerical experiments are presented which confirm the theoretical findings and illustrate the need for the use of the new algorithm in comparison to vanilla SGLD within the framework of ANNs.

\end{abstract}

\section{Introduction}

A new generation of stochastic gradient decent algorithms, namely stochastic gradient Langevin dynamics (SGLD), can be efficient in finding global minimizers of possibly complicated, high-dimensional landscapes under suitable regularity assumptions for the gradient, see \cite{raginsky, welling2011bayesian} and references therein. These algorithms are based on the fundamental concept that the problem of finding the minimizer of a non-convex objective function $u$ is connected to the problem of sampling from the target distribution $\pi_{\beta}(\mathrm{d} \theta) \propto \exp (-\beta u(\theta)) \mathrm{d} \theta $, for $\beta$ sufficiently large, see \cite{hwang}. Under mild conditions, this is the invariant distribution of the Langevin SDE:
\begin{equation}\label{eq-LangSDE}
    L_{0}=\theta_{0}, \quad \mathrm{~d} L_{t}=-\nabla u\left(L_{t}\right) \mathrm{d} t+\sqrt{2 \beta^{-1}} \mathrm{~d} B_{t}, \quad t \in \mathbb{R}_{+}.
\end{equation} The SGLD algorithm is given by \begin{equation}\label{eq-SGLD}
    \theta_{0}^{\mathrm{SGLD}}:=\theta_{0}, \quad \theta_{n+1}^{\mathrm{SGLD}}=\theta_{n}^{\mathrm{SGLD}}-\lambda H\left(\theta_{n}^{\mathrm{SGLD}}, X_{n+1}\right)+\sqrt{2 \lambda \beta^{-1}} \xi_{n+1}, \quad n \in \mathbb{N}_{0}
\end{equation}
where $\lambda>0$ is the stepsize, $\left(X_{n}\right)_{n \in \mathbb{N}_{0}}$ is an i.i.d. sequence of random variables $\beta>0$, and $\left\{\xi_{n}\right\}_{n \geq 1}$ is a sequence of independent standard $d$-dimensional Gaussian random variables.This algorithm is a version of an Euler discretization of \eqref{eq-LangSDE}  where in the drift coefficient, $\nabla u$ is replaced by an unbiased estimator $H$ such that $\nabla u(\theta)=\E[H(\theta,X_0)]$.\\
However, in the specific case of tuning ANNs, or simply neural networks henceforth,  problems could arise already at the theoretical level. As discussed in Section \ref{example_non_diss} below in some detail, the functionals to be minimized in such a task may fail any form of dissipativity which should be a \emph{sine qua non} for guaranteeing the stability of associated gradient algorithms. Adding a quadratic regularization term cannot always remedy this, due to the superlinear features of the associated gradients (see Proposition \ref{A2_for_ANN}), in which case one needs to replace it with a higher order penalty term. However, the addition of such a term maintains the violation of the global Lipschitz continuity for the regularized gradient (due to Proposition \ref{polip H}), which in turn renders the use of gradient descent methods problematic. This issue has been highlighted in the case of Euler discretizations (of which SGLD is an example) in \cite{hutzenthaler2011}, where it is proven that the difference between the exact solution of the corresponding stochastic differential equation (SDE) and the numerical approximation at even a finite time point diverges to infinity in the strong mean square sense.

A natural way to address the above issue is to combine higher order regularization with taming techniques to improve the stability of any resulting algorithm. In particular, the use of taming techniques in the construction of stable numerical approximations for nonlinear SDEs has gained substantial attention in recent years and was introduced by \cite{hutzenthaler2012} and, independently, by \cite{tamed-euler, SabanisAoAP}. The latter taming approach was used in the creation of a new generation of Markov chain Monte Carlo (MCMC) algorithms, see \cite{tula, hola}, which are designed to sample from distributions such that the gradient of their log density is only locally Lipschitz continuous and is allowed to grow superlinearly at infinity.

It is essential here to recall the importance of Langevin based algorithms. Their nonasymptotic convergence analysis has been highlighted in recent years by numerous articles in the literature. For the case of deterministic gradients one could consult \cite{dalalyan2017theoretical, durmus2017nonasymptotic, durmus2019high, berkeley, hola} and references therein, whereas for stochastic gradients of convex potentials details can be found in \cite{brosse2018promises, dalalyan2019user} and in \cite{convex} which goes beyond the case of iid data. Further, due to the newly obtained results in the study of contraction rates for Langevin dynamics, see \cite{eberle2019couplings, Harris}, the case of nonconvex potentials within the framework of stochastic gradients was studied in \cite{raginsky, xu2018global} and, in particular, substantial progress has been made in \cite{nonconvex} by obtaining the best known convergence rates even in the presence of dependent data streams. The latter article has inspired the development of the SGLD theory under local conditions, see \cite{zhang2019nonasymptotic}, which provides theoretical convergence guarantees for a wide class of applications, including scalable posterior sampling for Bayesian inference and nonconvex optimization arising in variational inference problems.

Despite all this very significant progress, the use of SGLD algorithms for the fine tuning of neural networks remained only at a heuristic level without any theoretical guarantees for the discovery of approximate minimizers of empirical and population risks. To the best of the authors' knowledge, the current article is the first work to address this shortcoming in the theory of Langevin algorithms by presenting a novel algorithm, which is called tamed unadjusted stochastic Langevin algorithm (TUSLA), along with a nonasymptotic analysis of its convergence properties.

We conclude this section by introducing some notation. Let $(\Omega,\mathcal{F},P)$ be a probability space. We denote by $\E[X]$  the expectation of a random variable $X$.
For $1\leq p<\infty$, $L^p$ is used to denote the usual space of $p$-integrable real-valued random variables.
Fix an integer $d\geq 1$. For an $\mathbb{R}^d$-valued random variable $X$, its law on $\mathcal{B}(\mathbb{R}^d)$, i.e. the Borel sigma-algebra of $\mathbb{R}^d$, is denoted by $\mathcal{L}(X)$. Scalar product is denoted
by $\langle \cdot,\cdot\rangle$, with $|\cdot|$ standing for the
corresponding norm (where the dimension of the space may vary depending on the context). For $\mu\in\mathcal{P}(\mathbb{R}^d)$ and for a non-negative measurable $f:\mathbb{R}^d\to\mathbb{R}$, the notation $\mu(f):=\int_{\mathbb{R}^d} f(\theta)\mu(\rmd \theta)$ is used. For any integer $q \geq 1$, let $\mathcal{P}(\mathbb{R}^q)$ denote the set of probability measures on $\mathcal{B}(\mathbb{R}^q)$.
For $\mu,\nu\in\mathcal{P}(\mathbb{R}^d)$, let $\mathcal{C}(\mu,\nu)$ denote the set of probability measures $\zeta$
on $\mathcal{B}(\mathbb{R}^{2d})$ such that its respective marginals are $\mu,\nu$. For two probability measures $\mu$ and $\nu$, the Wasserstein distance of order $p \geq 1$ is defined as
\begin{equation} \label{eq:definition-W-p}
{W}_p(\mu,\nu):=\inf_{\zeta\in\mathcal{C}(\mu,\nu)}
\left(\int_{\rset^d}\int_{\rset^d}|\theta-\theta'|^p\zeta(\rmd \theta \rmd \theta')\right)^{1/p},\ \mu,\nu\in\mathcal{P}(\rset^d).
\end{equation}
{We note here that our main results contain several constants, which are given explicitly, in most cases, within the relevant proofs. However, in order to help the reader identify their structure and dependence on real-problem parameters in a systematic way, two tables appear in the Appendix which list these constants along with the necessary information.}

\section{Main results and assumptions}
We consider initially the setting which is required for the precise formulation of the newly proposed algorithm. To this end, let us denote by $(\mathcal{G}_n)_{n\in\mathbb{N}}$ a given filtration representing the flow of past information. Moreover, let $(X_n)_{n\in\mathbb{N}}$ be an $\mathbb{R}^m$-valued, $(\mathcal{G}_n)$-adapted process and $(\xi_{n})_{n\in\mathbb{N}}$ be  an $\mathbb{R}^d$-valued Gaussian process.  It is assumed throughout the paper that the random variable $\theta_0$ (initial condition), $\mathcal{G}_{\infty}$ and $(\xi_{n})_{n\in\mathbb{N}}$ are independent. Let also $G:\mathbb{R}^d \times \mathbb{R}^m\rightarrow \mathbb{R}^d$ be a continuously differentiable function. The required assumptions are as follows.
\subsection{Assumptions and key observations}
\label{Assumptions}
Although the assumptions below are presented in a formal way for the general case of locally Lipschitz continuous gradients, the connection with neural networks is given explicitly in Section \ref{MNNs}. In particular, the function $G$ below can be seen as the stochastic gradient described in equation \eqref{wq:G}.
\noindent
\newtheorem{assumption}{Asssumption}
\begin{assumption} \label{A2}
	There exist positive constants $L_1, \rho$ and $q\ge1$ such that
	\[\left|G(\theta, x)-G\left(\theta^{\prime}, x\right)\right| \leq L_1(1+|x|)^{\rho}(1+|\theta|+|\theta'|)^{q-1}\left|\theta-\theta^{\prime}\right|,
	\mbox{ for all } x \in \mathbb{R}^{m}
\]
and $\theta, \theta^{\prime} \in \mathbb{R}^d$.

\end{assumption}
\newtheorem{regularization}{Definition}
\begin{regularization} \label{A1}
Let  $\eta\in (0,1)$ be a regularization parameter and $r$ be a constant such that $r\geq \frac{q}{2}+1.$ Then, the stochastic gradient with the necessary regularised term is given by
\[
H(\theta,x):=G(\theta,x) +\eta \theta |\theta|^{2r}
\]
for all $x\in \mathbb{R}^{m}$ and $\theta \in \mathbb{R}^d$. Moreover, $
g(\theta):=\mathbb{E}\left[G\left(\theta, X_{0}\right)\right]$ and $h(\theta):=\E\left[H(\theta,X_0)\right]$ for every $\theta \in \mathbb{R}^d$.\\
The gradient of the 'regularized' objective function $u$ is given as
\begin{equation}
    \nabla u(\theta)=h(\theta)=\E \left[H(\theta,X_0)\right]=\E \left[G(\theta,X_0)\right]+\eta \theta |\theta|^{2r}.
\end{equation}
\end{regularization}
\newtheorem{Optimization problem}{Remark}
\begin{Optimization problem} \label{Opt_problem}
As an example, $H$ can be seen as the gradient of a function of the form
\[\begin{aligned}
U(\theta,x):=F(\theta,x)+ \frac{\eta}{2(r+1)}|\theta|^{2(r+1)}, \mbox{ and } G(\theta,x):=\nabla_{\theta} F(\theta,x),
\end{aligned}
\]
$\mbox{ for all  }\theta \in \mathbb{R}^d, x \in \mathbb{R}^m.$
\end{Optimization problem}
\begin{assumption} \label{A3}
The process $(X_n)_{n\ge0}$ is a sequence of i.i.d. random variables with $\E |X_0|^{16\rho(2r+1)}< \infty$, where $\rho$ is given in Assumption \ref{A2} and $r$ in Definition \ref{A1}. In addition, the initial condition is such that
$\E |\theta_0|^{16(2r+1)}<\infty$.
\end{assumption}
\newtheorem{Differentiation of g}[Optimization problem]{Remark}

\begin{Differentiation of g}\label{remark2.3}
By taking a closer look at Assumption \ref{A2}, one observes that  the growth of $G$ can be controlled, i.e.  for every $\theta \in \mathbb{R}^d$ and $x \in \mathbb{R}^m$
\begin{equation}\label{G_growth}
|G(\theta,x)|\leq K(x)(1+|\theta|^{q}),
\end{equation}
where $K(x)=2^{q-1}(L_1(1+|x|)^{\rho}+|G(0,x)|$.
\end{Differentiation of g}
\newtheorem{remark2}[Optimization problem]{Remark}
\begin{remark2} \label{remark2}
In view of Assumptions \ref{A2} and \ref{A3}, one obtains that
$$
\langle \theta,  \E\left[G(\theta,X_0)\right]\rangle \geq  -\E\left[K(X_0)\right](|\theta|+|\theta|^{q+1}).
$$
which leads to
\[
\langle \theta, h(\theta)\rangle=\langle \theta ,\E G(\theta,X_0)\rangle +\langle \theta,\eta \theta |\theta|^{2r}\rangle \geq \eta |\theta|^{2r+2}-\E[K(X_0)]|\theta|(1+|\theta|^q).
\]
Furthermore, for  $A= \E[K(X_0)]$ and $B= \left(3\E[K(X_0)]\right)^{q+2}\eta^{-q-1}$, it holds that
\begin{equation}\label{eq-AB}
    \langle \theta, h(\theta)\rangle\geq A|\theta|^2-B.
\end{equation}
\end{remark2}
%\newtheorem{remark3}[Optimization problem]{Remark}
%\begin{remark3}
%Assumption \ref{A2} yields that $$
%\langle \theta,  \E\left[G(\theta,X_0)\right]\rangle \geq  -\E\left[K(X_0)\right](|\theta|+|\theta|^{q+1}).
%$$
%\end{remark3}
\newtheorem{Remark1}{Proposition}%\label{A4}
\begin{Remark1}\label{Proposition 1}
Let Assumptions \ref{A2} and \ref{A3} hold. Then, for every $\theta, \theta^{\prime} \in \mathbb{R}^d$,
\[\langle \theta-\theta',h(\theta)-h(\theta')\rangle\geq -a |\theta-\theta'|^2,\]
where $a=L_2(1+2|R|)^{q-1}$ and $R$ , $L_2$ are given explicitly in the proof.
\end{Remark1}
At this point, a natural question arises about the use of the above specified regularization term $\eta\theta|\theta|^{2r}$. One notes first that a dissipativity property such as the one in Remark \ref{remark2} is, typically, stated as an assumption in  the stochastic gradient literature. This is due to the fact that dissipativity plays a pivotal role in the derivation of moment estimates and, consequently, in the algorithm's stability. Although in many well known examples such a condition is verified, it is desirable that a theoretical framework is built for more complicated cases. In the current work, the $\eta-r$-regularization is a novel way to deal with scenarios where the validity of a dissipativity condition cannot  be verified. The reason for this is that local Lipshcitz continuity typically yields significantly underestimated lower bounds for the growth of the gradient.  Thus, in order to provide here full theoretical guarantees of the behaviour and convergence properties of our proposed algorithm, regularization of the order $2r+1$ is used in order to compensate for such (extreme) lower bounds. Remark 3 describes how such a compensation is achieved. One further notes that it is possible that better lower bounds can be guaranteed a-priori, which depend on more specific information about the structure of the gradient, and thus a suitable dissipativity condition can be achieved by a weaker regularization. For example, if a dissipativity condition as in Remark \ref{remark2} is already satisfied for the gradient of the objective function, we simply set $\eta=0.$ That is to say, in real-world applications, the confirmation whether the gradient of an objective function is
dissipative becomes a problem-specific calculation, which finally dictates whether (and what kind of)
high-order regularisation is required.\\
To sum up, when one works within the full theoretical framework as it is shaped by Assumption \ref{A2}, the proposed regularization guarantees that a suitable dissipativity condition holds true, something that is central to the analysis of algorithms for non-convex potentials.\\
The following proposition states that the stochastic gradient is not globally Lipschitz continuous in $\theta$, hence a new approach is required for learning schemes which rely on the analysis of Langevin dynamics with gradients satisfying weaker smoothness conditions. Crucially though, the local Lipschitz continuity property remains true and, moreover, the associated local Lipschitz constant is controlled by powers of the state variables which allow us to use an approach based on taming techniques.
\newtheorem{polip H}[Remark1]{Proposition}
\begin{polip H} \label{polip H}
Let Assumptions \ref{A2} and \ref{A3} hold. Then, in view of Definition \ref{A1} one obtains that
\[
|H(\theta,x)-H(\theta',x)|\leq L (1+|x|)^\rho(1+|\theta|+|\theta'|)^l |\theta-\theta'|, \mbox{ for all } x \in \mathbb{R}^{m},
\]
and $\theta, \theta^{\prime} \in \mathbb{R}^d$, where $L=L_1+8r\eta$ and $l=2r+1$.
\end{polip H}

\subsection{The new algorithm and main results}
We introduce a new iterative scheme, which is a hybrid of the stochastic gradient Langevin dynamics (SGLD) algorithm and of the tamed unadjusted Langevin algorithm and uses `taming', see \cite{tamed-euler, SabanisAoAP}, \cite{tula} and references therein, for asserting control on the superlinearly growing gradient. This new algorithm is called TUSLA, tamed unadjusted stochastic  Langevin algorithm,  and is given by
\begin{equation}\label{eq:TUSLA}
\theta^{\lambda}_{n+1}:=\theta^{\lambda}_n-\lambda H_\lambda(\theta^{\lambda}_n,X_{n+1})+ \sqrt{2\lambda\beta^{-1}} \xi_{n+1},\ n\in\mathbb{N},
\end{equation}
where $ \theta^{\lambda}_0:=\theta_0$ and
\begin{equation} \label{def: H_lambda}
    H_\lambda(\theta,x):=\frac{H(\theta,x)}{1+\sqrt{\lambda} |\theta|^{2r}}, \qquad \mbox{for every } \theta \in \mathbb{R}^d, x \in \mathbb{R}^m,
\end{equation}
where $\{\xi_n\}_{n\ge 1}$ is a sequence of independent standard $d$-dimensional Gaussian random variables and $H$ is given in Definition \ref{A1}. {This algorithm has two new elements compared to standard SGLD algorithms. The first is the added regularization term in the numerator of the drift term, which enables to us to derive important conditions (e.g. see Proposition  \ref{Proposition 1}) with minimal assumptions. The second new element is the division of the regularised gradient by a suitable term, which enables the new algorithm to inherit the stability properties of tamed algorithms.} Consequently, it addresses known stability issues of SGLD algorithms, and can be seen as an SGLD algorithm with an adaptive step size. This is due to the fact that, at each iteration, the stochastic gradient $H$ is multiplied with a step size which is controlled by the $2r$-th power of the (vector) norm of the parameter, i.e. by $\lambda\left(1+\sqrt{\lambda} |\theta|^{2r}\right)^{-1}$.

Henceforth, $\lambda$ is assumed to be controlled by
\begin{equation} \label{lambda_max}
    \lambda_{max}=\min\{1,\frac{\kappa^2}{4\eta\left(8(p+1)\binom{p}{\lceil \frac{p}{2}\rceil}^2\right)^2 }\}
\end{equation}
where $p$ depends on which $2p$-th moment of $\theta_n$ we need to estimate and $\kappa$ is given in the proof in  \eqref{eq-kappa}, see Appendix.
\newtheorem{remark5}[Optimization problem]{Remark}
\begin{remark5}
Observe that, due to Remark \ref{remark2.3} and \eqref{def: H_lambda},
\begin{equation} \label{H_growth}
\begin{aligned}
  \E[\sqrt{\lambda}|H_\lambda(\theta_n^{\lambda},X_{n+1})|\big{|}\theta_n^{\lambda}]& \leq \sqrt{\lambda}\frac{\E \left[K(X_0)\right](1+|\theta_n^{\lambda}|^q)+\eta |\theta_n^{\lambda}|^{2r+1}}{1+\sqrt{\lambda}|\theta^\lambda_n|^{2r}}\\& \le \E \left[K(X_0)\right]+\eta|\theta_n^{\lambda}|.
  \end{aligned}
 % \\ & \leq k'|\theta_n^{\lambda}|
\end{equation}
%where $k'=(E(K(X_0))+\eta)^2.$
Moreover,
\begin{align} \label{sqr_growth}
    \E[{\lambda}|H_\lambda(\theta_n^{\lambda},X_{n+1})|^2|\theta_n^{\lambda}] &\le 4 \E[K^2(X_0)]  +2\eta^2|\theta_n^{\lambda}|^2.
\end{align}
\end{remark5}
It is well-known that, under mild conditions, which in this case are satisfied due to Assumptions \ref{A2}--\ref{A3} and, in particular, due to \eqref{eq-AB}, the so-called (overdamped) Langevin SDE which is given by
\begin{equation} \label{langevin-SDE}
\mathrm{d} Z_{t}=-h\left(Z_{t}\right) \mathrm{d} t+ \sqrt{2\beta^{-1}} \mathrm{d} B_{t}, \quad t>0
\end{equation}
with a (possibly random) initial condition $\theta_0$ and with $B_t$ denoting a $d$-dimensional Brownian motion, admits a unique invariant measure $\pi_\beta$ given by
\begin{equation}\label{eq-pibeta}
    \pi_\beta(x)=\frac{e^{-\beta u(x)} }{\int e^{-\beta u(x)}dx}
\end{equation}
where $u$ is a function such that $\nabla u=h.$
The two main results are given below with regards to the convergence of TUSLA \eqref{eq:TUSLA} to $\pi_\beta$ in metrics $W_{1}$ and $W_{2}$ as defined in \eqref{eq:definition-W-p}.
\newtheorem{W1 measure}{Theorem}
\begin{W1 measure}\label{Thrm1}
Let Assumptions \ref{A2} and \ref{A3} hold. Then, there exist positive constants $C_1$, $C_2$, $\hat{c}$, $\dot{c}$ and $z_1$ such that, for every $0<\lambda\leq \lambda_{\max}$,
\[\begin{aligned}
W_{1}\left(\mathcal{L}\left(\theta_{n}^{\lambda}\right), \pi_{\beta}\right)& \leq\sqrt{\lambda} (z_1+\sqrt{e^{3a}(C_1+C_2+C_3)})\\&+\hat{c} e^{-\dot{c}\lambda n }\left[1+\mathbb{E}\left[V_{2}\left(\theta_{0}\right)\right]+\int_{\mathbb{R}^{d}} V_{2}(\theta) \pi_{\beta}(d \theta)\right],
\end{aligned}
\]
where $V_2$ is defined in \eqref{def:Lyapunov} and $a$ is defined in Proposition \ref{Proposition 1}. The constants are given explicitly in the proof.
\end{W1 measure}
\newtheorem{measure conv}{Corollary}
\begin{measure conv}\label{Thrm2}
Let Assumptions \ref{A2} and \ref{A3} hold. Then, there exist positive constants $C_1$, $C_2$ and $z_2$ such that, for every $0<\lambda \leq \lambda_{max}$,
\[\begin{aligned}
W_{2}\left(\mathcal{L}\left(\theta_{n}^{\lambda}\right), \pi_{\beta}\right)  &\leq \sqrt{e^{3a}(C_1+C_2+C_3)}\sqrt{\lambda}+z_2\lambda^\frac{1}{4} \\&+\sqrt{2 \hat{c} e^{-\dot{c}\lambda n} \left(1+\mathbb{E}\left[V_{2}\left(\theta_{0}\right)\right]+\int_{\mathbb{R}^{d}} V_{2}(\theta) \pi_{\beta}(d \theta)\right)},
\end{aligned}
\]
where $V_2$ is defined in \eqref{def:Lyapunov}. The constants are given explicitly in the proof.
\end{measure conv}

If we further assume the setting of Remark \ref{Opt_problem}, where $h:=\nabla u$ with $u(\theta) = \E[U(\theta, X_0)] \ge 0$, then the following non-convex optimization problem can be formulated
\[
\text{minimize} \quad u(\theta) : = \mathbb{E}[U(\theta, X_0)],
\]
where $\theta \in \mathbb{R}^d$ and $X_0$ is a random element with some unknown probability law. One then needs to estimate a $\hat{\theta}$, more precisely its law,  such that the expected excess risk $\mathbb{E}[u(\hat{\theta})] - \inf_{\theta \in \mathbb{R}^d} u(\theta)$ is minimized. This optimization problem can thus  be decomposed into subproblems, see \cite{raginsky}, one of which is a problem of sampling from the target distribution $ \pi_{\beta}( \theta) \wasypropto \exp(-\beta u(\theta))$ with $\beta>0$. The results in Theorem \ref{Thrm1} and Corollary \ref{Thrm2} provide the estimates for this sampling problem. Moreover, at  an intuitive level, one understands that the two problems, namely sampling and optimization,  are linked in this case since $\pi_{\beta}$ concentrates around the minimizers of $u$ when $\beta$ takes sufficiently large values, see \cite{hwang} for more details.  In fact, one observes that if $\theta_{n}^{\lambda}$ is used in place of $\hat{\theta}$, then  expected excess risk can be estimated as follows
\begin{equation} \label{sum_T1_T2}
\mathbb{E}\left[u\left(\theta_{n}^{\lambda}\right)\right]-u_{\star} =\underbrace{\mathbb{E}\left[u\left(\theta_{n}^{\lambda}\right)\right] -\mathbb{E}\left[u\left(\theta_{\infty}\right)\right]}_{\mathcal{T}_{1}} +\underbrace{\mathbb{E}\left[u\left(\theta_{\infty}\right)\right]-u_{\star}}_{\mathcal{T}_{2}}
\end{equation}
where $u_{\star}:=\inf _{\theta \in \mathbb{R}^{d}} u(\theta)$ and $\theta_\infty$ stands for a random variable that follows $\pi_\beta$. Moreover, the estimates for $\mathcal{T}_{1}$ rely on the $W_{2}$ estimates of Corollary \ref{Thrm2} and the estimates for $\mathcal{T}_{2}$ on the properties of the corresponding Gibbs algorithm, see \cite[Section~3.5]{raginsky}.

\newtheorem{excess_risk}[W1 measure]{Theorem}
\begin{excess_risk}\label{excess_risk}
Let Assumptions \ref{A2} and \ref{A3} hold. Then, if $\beta \geq \frac{2}{A}$,
\begin{align*}
  \mathbb{E}\left[u\left(\theta_{n}^{\lambda}\right)\right]-u_{\star}  \leq & \left( \frac{a_1}{l+1}\sqrt{\E|\theta_0|^{2l}+C'_l}+\frac{a_1}{l+1}\sqrt{\sigma_{2l}}+r_2\right)
  W_{2}\left(\mathcal{L}\left(\theta_{n}^{\lambda}\right), \pi_{\beta}\right)   \\
   &  + \frac{d}{2 \beta} \log \left(\frac{e K}{A}\left(\frac{B \beta}{d}+1\right)\right)-\frac{1}{\beta}\log\left(1-e^{-(R_0\sqrt{K\beta}-\sqrt{d})^2}\right),
\end{align*}
where $a_1=2^l(\E[K(X_0)] +\eta)$, $r_2=2\E[K(X_0)]$, $\sigma_{2l}$ is the $2l$-moment of $\pi_\beta$,
\[
R_0 = \inf \{y\geq \sqrt{B/A}: \quad y^2  (1+4y)^l>\frac{d+1}{\beta L\E (1+|X_0|)^\rho}\}
\]
$K=L\mathbb{E}(1+|X_0|)^\rho(1+4R_0)^l$ and $W_{2}\left(\mathcal{L}\left(\theta_{n}^{\lambda}\right), \pi_{\beta}\right) $ is given in Corollary \ref{Thrm2}.
\end{excess_risk}
\section{Comparison with related work and our contributions}
While in \cite{zhang2019nonasymptotic} the analysis of non-convex (stochastic) optimization problems is presented, its main focus remains on objective functions with gradients which are globally Lipschitz in the parameter (denoted by $\theta$), while this assumption is significantly relaxed in our article and thus a much larger class of optimization problems is included. Despite the technical obstacles imposed by this more general framework, our article succeeds in dealing with both the sampling problem and the excess risk minimization problem achieving the best known rates of convergence (for non-convex optimization problems). Moreover, while the achieved convergence rates in $W_1$ and $W_2$ distances are the same for both articles, since both of them rely on contraction estimates from \cite{eberle2019couplings}, the novelty in our article is achieved by the newly developed methodology, which allows considerable loosening of the smoothness assumption. This, in turn, allows the inclusion of the fine tuning (via expected risk minimization) of the parameters of feed-forward neural networks in their full generality within our setting, i.e. even in the presence of online data streams (online learning) or with data from distributions with unbounded support. To the best of the authors' knowledge, this is the first such result. \\
More concretely regarding the comparison of assumptions, one observes that Assumption 2 in \cite{zhang2019nonasymptotic} is considerably stronger than our Assumption \ref{A2}, since in the latter only local Lipschitzness of the objective function's gradient is assumed. Moreover, there is no dissipativity assumption in our setting in contrast to Assumption 3 of \cite{zhang2019nonasymptotic}. However, we need to include a high-order regularisation term in our objective function, which is controlled by a tiny quantity $\eta$, and as a result a dissipative condition is satisfied  which can be found in our Remark 3. We stress here that the high order regularisation becomes necessary only in the absence of dissipativity (if a dissipativity condition holds we set $\eta=0$). Finally, Assumption \ref{A3}, regarding moment requirements, is comparable with Assumption 1 of \cite{zhang2019nonasymptotic} as it is problem dependent.\\
We turn now our attention to the  article \cite{tula}, which also uses a taming approach to address the instability due to superlinear gradients. One immediately notes that \cite{tula} focuses on deterministic gradients, whereas we work with the full stochastic counterparts, and thus, even in the context of the corresponding  sampling problem, our setting is much more general. Furthermore,  the $W_2$ estimates in \cite{tula} are obtained within a strongly convex setting (see Assumption H3 in \cite{tula}). Here we note that although we obtain a rate of 1/4 in $W_2$ in our non-convex setting, this trivially increases to 1/2, as in \cite{tula}, if a strong convexity condition is assumed as one then replaces the contraction estimates due to \cite{eberle2019couplings} with standard $W_2$ estimates under strong convexity.\\
Finally, we discuss the constants which appear in Theorem \ref{Thrm1} and \ref{excess_risk}. A careful analysis  of our results shows that there is an exponential dependence in dimension (see Tables \ref{tab:Other constants} and \ref{tab:basic constants} in the Appendix), which is inherited from the contraction results of \cite{eberle2019couplings}. The same is true for the corresponding results in \cite{nonconvex} and \cite{zhang2019nonasymptotic} as the aforementioned contraction results are central to the analysis of the full non-convex case.  Any other dependence on the dimension is polynomial and is obtained via the finiteness of the required moments (see Lemma \ref{pmoments}), very much like in \cite{tula}. Note that if the more restrictive, convex setting of the aforementioned article is adopted, then the exponential dependence on the dimension in our results simply ceases to exist.

\section{Preliminary estimates} \label{Prelim}
\newtheorem{lemma2}{Lemma}
At this point the necessary moments estimates are presented, which guarantee the stability of the new algorithm, along with the necessary (for the approach taken in the proof of the main results) auxiliary processes.
\begin{lemma2} \label{pmoments}
Let Assumption \ref{A2} and \ref{A3} hold.
For all $n \in \mathbb{N}$, $p \in [1,\, 8(2r+1)]$ and $0<\lambda<\lambda_{max},$
\[
\E|\theta^\lambda_{n+1}|^{2p}\leq (1-\lambda \frac{\kappa}{2}\eta)^n \E |\theta_0|^{2p}+C'_{p} \mbox{ and, thus, }
\sup_n \E|\theta_n^{\lambda}|^{2p}<\E|\theta_0|^{2p}+ C'_p,
\]
where $C'_p$ and $\kappa$ is given explicitly in the proof.
\end{lemma2}
Before proceeding with the detailed calculations regarding the convergence properties of TUSLA, a suitable family of Lyapunov functions is introduced. For each $m\geq 1$, define the Lyapunov function $V_m$ by
\begin{equation} \label{def:Lyapunov}
V_{m}(\theta):=\left(1+|\theta|^{2}\right)^{m / 2},\quad  \theta \in \mathbb{R}^{d},
\end{equation}
and similarly $v_m(x)=(1+x^2)^\frac{m}{2}$ for any real $x \geq 0$.\\ Both functions are continuously differentiable and
$\lim _{|\theta| \rightarrow \infty} \nabla V_{m}(\theta)/V_{m}(\theta)=0.$\\
We next introduce the auxiliary processes which are used in our analysis.\\ For each $\lambda>0$,  $Z_{t}^{\lambda}:=Z_{\lambda t},$  $t \in \mathbb{R}_{+},$ where the process $\left(Z_{s}\right)_{s \in \mathbb{R}_{+}}$ is defined in
\eqref{langevin-SDE}.\\ We also define $\tilde{B}_{t}^{\lambda}:=B_{\lambda t} / \sqrt{\lambda}, t \in \mathbb{R}_{+},$ where $\left(B_{s}\right)_{s \in \mathbb{R}_{+}}$ denotes the standard Brownian
motion. We note that $\tilde{B}_{t}^{\lambda}$ is a Brownian motion and
\begin{equation}
    \mathrm{d} Z_{t}^{\lambda}=-\lambda h\left(Z_{t}^{\lambda}\right) \mathrm{d} t+\sqrt{2 \lambda \beta^{-1}} \mathrm{~d} \tilde{B}_{t}^{\lambda}, \quad Z_{0}^{\lambda}=\theta_{0} \in \mathbb{R}^{d}.
\end{equation}

{\text { Denote by } $\mathcal{F}_{t}$ \text { the natural filtration of } $B_{t}, t \in \mathbb{R}_{+}$ . \text {Then, } $\mathcal{F}_{t}^{\lambda}:=\mathcal{F}_{\lambda t}, t \in \mathbb{R}_{+} \text {is }$
 the natural filtration of  $\tilde{B}_{t}^{\lambda}, t \in \mathbb{R}_{+} $ and \text { is independent of } $\mathcal{G}_{\infty} \vee \sigma\left(\theta_{0}\right) $\text { . }}

\newtheorem{definition 1}[regularization]{Definition}
\begin{definition 1}
We define the continuous-time interpolation of TUSLA, see \eqref{eq:TUSLA}, as
\begin{equation} \label{eq:CI_TUSLA}
    \mathrm{d} \bar{\theta}_{t}^{\lambda}=-\lambda H_\lambda\left(\bar{\theta}_{\lfloor t\rfloor}^{\lambda}, X_{\lceil{t}\rceil}\right) \mathrm{d} t+ \sqrt{2 \lambda\beta^{-1}} \mathrm{d} \tilde{B}_{t}^{\lambda}
\end{equation}
with initial condition $\bar{\theta}_{0}^{\lambda}=\theta^\lambda_0.$
\end{definition 1}
\newtheorem{combinelaw}[Optimization problem]{Remark}
\begin{combinelaw} \label{same_moments}
Moreover, due to the homogeneous nature of the coefficients of the continuous-time interpolation of the TUSLA algorithm, the law of the  interpolated process \eqref{eq:CI_TUSLA} has the same law with the process of TUSLA  \eqref{eq:TUSLA} a.s
at grid points, i.e. $
   \mathcal{L}\left(\bar{\theta}_{n}^{\lambda}\right)=\mathcal{L}\left(\theta_{n}^{\lambda}\right), \quad \forall n \in \mathbb{N}$.
Combining this with the bounds obtained in Lemmas \ref{pmoments}, one deduces that under the same assumptions,
\begin{equation} \label{law-connection}
    \sup_{t\geq 0} \E|\bar{\theta}_{\lfloor t \rfloor}^{\lambda}|^{2p}\leq \E |\theta_0|^{2p} +C'_p.
\end{equation}
\end{combinelaw}
\newtheorem{definition 2}[regularization]{Definition}
Furthermore consider a continuous-time process $\zeta_{t}^{s, v, \lambda}, t \geq s$ which is the solution to the SDE
\begin{equation} \label{zeta}
    \mathrm{d} \zeta_{t}^{s, v, \lambda}=-\lambda h\left(\zeta_{t}^{s, v, \lambda}\right) \mathrm{d} t+ \sqrt{2 \lambda\beta^{-1}} \mathrm{d} \tilde{B}_{t}^{\lambda}
\end{equation}
with initial condition $\zeta_{s}^{s, v, \lambda}:=v, v \in \mathbb{R}^{d}$. Let $T:=\lfloor 1 / \lambda\rfloor$.
\begin{definition 2}
Fix $n\in \mathbb{N}$ and define $
\bar{\zeta}_{t}^{\lambda, n}:=\zeta_{t}^{n T, \bar{\theta}_{n T}^{\lambda}, \lambda}$
where $\zeta_{t}^{n T, \bar{\theta}_{n T}^{\lambda}, \lambda}$ is defined in \eqref{zeta}.
\end{definition 2}
Henceforth, any constant denoted by $C'_{p}$, for $p\geq 1$, is given explicitly  in the proof of Lemma \eqref{pmoments}.
\newtheorem{bounds continuous}[lemma2]{Lemma}
\begin{bounds continuous} \label{V4_bound_continuous}
Let Assumptions \ref{A2} and \ref{A3} hold.
Then, for $0<\lambda<\lambda_{\max}$
\[\mathbb{E}\left[V_{4}\left(\bar{\theta}_{nT}^{\lambda}\right)\right] \leq 2(1- \lambda\frac{\kappa}{2}\eta)^{nT} \mathbb{E}|\theta_0|^4+2+2C'_2.\]
\end{bounds continuous}
\newtheorem{lemma citation}[lemma2]{Lemma}
\begin{lemma citation}\label{Lemma citation}
Let Assumption \ref{A3} holds. Then, for any $p\geq 2,$ $\theta\in \mathbb{R}^d$,
\[
\Delta V_{p}/\beta-\left\langle h(\theta), \nabla V_{p}(\theta)\right\rangle \leq-\bar{c}(p) V_{p}(\theta)+\tilde{c}(p),
\]
where $\bar{M}_{p}=\sqrt{1 / 3+4 B /(3 A)+4 d /(3 A \beta)+4(p-2) /(3 A \beta)}$ , $\bar{c}(p)=A p / 4$,   $\tilde{c}(p)=(3 / 4) A p v_{p}\left(\bar{M}_{p}\right)$, $\bar{c}(p)=A p / 4$ and $A$, $B$ are  given explicitly  in the proof.
\end{lemma citation}
\newtheorem{moments auxiliary}[lemma2]{Lemma}
\begin{moments auxiliary} \label{aux_moments}
Let Assumptions \ref{A2} and \ref{A3} hold. Then,
\begin{align*}\E \left[V_2\left(\bar{\zeta_t}^{\lambda,n}\right)\right]  \leq & \mathbb{E}\left[V_{2}\left(\theta_{0}\right)\right] +2\left(C_{X}\eta^{-1} + 2M_0^2(2 + \eta)+2d(\eta\beta)^{-1}\sqrt{\lambda_{max}}\right) \\&  +\frac{\tilde{c}(2)}{\bar{c}(2)} +1,\\ \mbox{ and }\\
\E\left[V_4\left(\bar{\zeta_t}^{\lambda,n}\right)\right] &\leq 2\mathbb{E}|\theta_0|^4+2+2C'_2 +\frac{\tilde{c}(4)}{\bar{c}(4)}.
\end{align*}
The associated constants $\bar{c}(p),\tilde{c}(p)$ come from Lemma \ref{Lemma citation} and the rest from the moment computations in Lemma \ref{pmoments}.
\end{moments auxiliary}
\subsection{Proofs of main results} \label{lemmas_main_proof}
We mainly present the proof of Theorem \ref{Thrm1}. The goal is to establish a non-asymptotic bound for
$W_1(\mathcal{L}(\theta^{\lambda}_n),\pi_{\beta})$, which can be split as follows:
$$%\begin{align*}
W_1(\mathcal{L}(\theta^{\lambda}_n),\pi_{\beta}) \leq
W_1(\mathcal{L}(\bar{\theta}^{\lambda}_n),\mathcal{L}(Z^{\lambda}_n))+W_1(\mathcal{L}(Z^{\lambda}_n),\pi_{\beta}).
$$ %\end{align*}
To achieve this, we introduce a functional which is associated with the contraction results in \cite{Harris} and is crucial for obtaining convergence rate estimates in $W_1$ and $W_2$. Let $\mathcal{P}_{V_2}$ denote the subset of $\mathcal{P}(\mathbb{R}^d)$ such that every $\mu \in\mathcal{P}_{V_2}$ satisfies $\int_{\mathbb{R}^d} V_2(\theta)\mu(d\theta)<\infty$. The functional $w_{1,2}$ is given by
\begin{equation} \label{seminorm}
    w_{1, 2}(\mu, \nu):=\inf _{\zeta \in \mathcal{C}(\mu, \nu)} \int_{\mathbb{R}^{d}} \int_{\mathbb{R}^{d}}\left[1 \wedge | \theta-\theta|^{\prime}\right]\left[\left(1+V_{2}(\theta)+V_{2}\left(\theta^{\prime}\right)\right) \zeta\left(\mathrm{d} \theta \mathrm{d} \theta^{\prime}\right)\right.
\end{equation}
where $\mathcal{C}(\mu, \nu)$ is defined immediately before \eqref{eq:definition-W-p}.
The functional $w_{1,2}$ is related to the Wasserstein distances in the following way:
\newtheorem{w12conn}[lemma2]{Lemma}
\begin{w12conn}\label{w12conn}
For any $\mu, \nu \in \mathcal{P}_{V_{p}}\left(\mathbb{R}^{d}\right)$, the following inequalities hold for $w_{1,2}$
$$
W_{1}(\mu, \nu) \leq w_{1,2}(\mu, \nu), \quad W_{2}(\mu, \nu) \leq \sqrt{2 w_{1,2}(\mu, \nu)} .
$$
\end{w12conn}
We can now proceed with the statement of the contraction property of the Langevin SDE \eqref{langevin-SDE} in $w_{1,2}$, which yields the desired result for $W_1(\mathcal{L}(Z^{\lambda}_n),\pi_{\beta})$.
\newtheorem{w1,2}[Remark1]{Proposition}
\begin{w1,2}\label{eberle}
Let $Z_{t}^{\prime}, t \in \mathbb{R}_{+}$ be the solution of the Langevin SDE \eqref{langevin-SDE} with initial condition $Z_{0}^{\prime}=\theta_{0}$ which is independent of $\mathcal{G}_{\infty}$ and $\left|\theta_{0}\right| \in L^{2} .$ Then,
\[
w_{1,2}\left(\mathcal{L}\left(Z_{t}\right), \mathcal{L}\left(Z_{t}^{\prime}\right)\right) \leq \hat{c} e^{-\dot{c} t} w_{1,2}\left(\mathcal{L}\left(\theta_{0}\right), \mathcal{L}\left(\theta_{0}^{\prime}\right)\right)
\]
where $w_{1,2}$ is defined in \eqref{seminorm}.
\end{w1,2}
Since the functional $w_{1,2}$ is closely related to $W_1$ and $W_2$ distances as shown in Lemma \ref{w12conn}, the statement of Proposition \ref{eberle} which is based on the results of the pivotal work in \cite{Harris}, indirectly shows the contraction behaviour in $W_1$ and $W_2$ distances.\\
The following two Lemmas combined establish the required $W_1(\mathcal{L}(\bar{\theta}^{\lambda}_n),\mathcal{L}(Z^{\lambda}_n))$ estimate.
\newtheorem{Contraction constants}[lemma2]{Lemma}
\newtheorem{lemma aux}[lemma2]{Lemma}
\begin{lemma aux} \label{Lemma 4.7}
Let Assumptions \ref{A2} and \ref{A3} hold. For $0<\lambda<\lambda_{max}$ and $t\in [nT,(n+1)T],$
\[W_{2}\left(\mathcal{L}\left(\bar{\theta}_{t}^{\lambda}\right), \mathcal{L}\left(\bar{\zeta}_{t}^{\lambda, n}\right)\right) \leq \sqrt{\lambda} \sqrt{e^{3a} (C_1+C_2+C_3)}\]
where $C_1$, $C_2$ are given explicitly in the proof.
\end{lemma aux}
The auxiliary process $\bar{\zeta}_{t}^{\lambda, n}$ plays the role of a `stepping stone' to bridge the gap between $\bar{\theta}_{t}^{\lambda}$ and $Z_{t}^{\lambda}$.
\newtheorem{Coupling}[lemma2]{Lemma}
\begin{Coupling}\label{Lemma 4.8}
Let Assumptions \ref{A2} and \ref{A3} hold. For $0<\lambda\leq \lambda_{max}$ and $t\in [nT,(n+1)T]$,
\[W_{1}\left(\mathcal{L}\left(\bar{\zeta}_{t}^{\lambda, n}\right), \mathcal{L}\left(Z_{t}^{\lambda}\right)\right)\leq \sqrt{\lambda} z_1 \]
where $z_1$ is  given explicitly in the proof.
\end{Coupling}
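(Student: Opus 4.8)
\emph{Approach.} The key observation is that $\bar{\zeta}_t^{\lambda,n}$ and $Z_t^\lambda$ solve the \emph{same} time-rescaled Langevin SDE \eqref{zeta} driven by $\tilde{B}^\lambda$ with the exact drift $-\lambda h$; they differ only in their initial data, $\bar{\zeta}^{\lambda,n}$ being restarted at rescaled time $nT$ from the TUSLA value $\bar{\theta}_{nT}^\lambda$, while $Z^\lambda$ is the exact flow issued from $\theta_0$ at time $0$. By time-homogeneity and the Markov property I would view $Z^\lambda$ on $[nT,t]$ as the diffusion restarted at $nT$ from $Z_{nT}^\lambda$, and apply the contraction estimate of Proposition \ref{eberle} over a rescaled interval of length $t-nT$, which after the time change corresponds to real time $\lambda(t-nT)$. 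This gives
\[
w_{1,2}\!\left(\mathcal{L}(\bar{\zeta}_t^{\lambda,n}),\mathcal{L}(Z_t^\lambda)\right)\le \hat{c}\,e^{-\dot{c}\lambda(t-nT)}\,w_{1,2}\!\left(\mathcal{L}(\bar{\theta}_{nT}^\lambda),\mathcal{L}(Z_{nT}^\lambda)\right).
\]
Because $t\in[nT,(n+1)T]$ the exponential is $\le 1$, and a pointwise comparison of the cost functions gives $W_1\le w_{1,2}$: on $\{|\theta-\theta'|\le1\}$ the factor $1\wedge|\theta-\theta'|$ already returns $|\theta-\theta'|$, while on $\{|\theta-\theta'|>1\}$ one has $|\theta-\theta'|\le|\theta|+|\theta'|\le V_2(\theta)+V_2(\theta')$. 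Thus the whole proof reduces to bounding $a_n:=w_{1,2}(\mathcal{L}(\bar{\theta}_{nT}^\lambda),\mathcal{L}(Z_{nT}^\lambda))$ by a quantity of order $\sqrt{\lambda}$, uniformly in $n$.

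\emph{The recursion.} To control $a_n$ I would set up a one-block recursion, inserting the intermediate process $\bar{\zeta}_{(n+1)T}^{\lambda,n}$ and using the triangle inequality for the semimetric $w_{1,2}$:
\[
a_{n+1}\le w_{1,2}\!\left(\mathcal{L}(\bar{\theta}_{(n+1)T}^\lambda),\mathcal{L}(\bar{\zeta}_{(n+1)T}^{\lambda,n})\right)+w_{1,2}\!\left(\mathcal{L}(\bar{\zeta}_{(n+1)T}^{\lambda,n}),\mathcal{L}(Z_{(n+1)T}^\lambda)\right).
\]
The second term pairs two exact diffusions restarted at $nT$ from $\bar{\theta}_{nT}^\lambda$ and $Z_{nT}^\lambda$, so Proposition \ref{eberle} over a block of length $T$ bounds it by $\hat{c}\,e^{-\dot{c}\lambda T}a_n$. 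The first term is exactly the one-block discrepancy between the interpolated TUSLA and the restarted exact diffusion, which Lemma \ref{Lemma 4.7} controls in $W_2$ by $\sqrt{\lambda}\sqrt{e^{3a}(C_1+C_2+C_3)}$.

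\emph{From $W_2$ to $w_{1,2}$.} The crucial conversion is to turn this $W_2$ bound into a $w_{1,2}$ bound. Along the optimal $W_2$ coupling I would apply Cauchy--Schwarz to the integrand, splitting it as $[1\wedge|\theta-\theta'|]$ times $(1+V_2(\theta)+V_2(\theta'))$, so that
\[
w_{1,2}(\mu,\nu)\le W_2(\mu,\nu)\,\Big(\textstyle\int(1+V_2(\theta)+V_2(\theta'))^2\,\zeta\Big)^{1/2}.
\]
The last factor reduces to fourth moments $\E[V_4(\bar{\theta}_t^\lambda)]$ and $\E[V_4(\bar{\zeta}_t^{\lambda,n})]$, which are uniformly bounded by Lemmas \ref{pmoments} (with $p=2$), \ref{V4_bound_continuous} and \ref{aux_moments}; hence the first term of the recursion is $\le c_0\sqrt{\lambda}$ for an explicit, $n$-independent $c_0$.

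\emph{Conclusion and main obstacle.} With $a_0=0$ (since $\bar{\theta}_0^\lambda=Z_0^\lambda=\theta_0$), iterating $a_{n+1}\le c_0\sqrt{\lambda}+\hat{c}e^{-\dot{c}\lambda T}a_n$ and summing the geometric series yields $a_n\le c_0\sqrt{\lambda}/(1-\hat{c}e^{-\dot{c}\lambda T})$ uniformly in $n$; feeding this into the first display produces the claim with $z_1=\hat{c}c_0/(1-\hat{c}e^{-\dot{c}\lambda T})$. The main obstacle I anticipate is the geometric summability, namely ensuring $\hat{c}e^{-\dot{c}\lambda T}<1$: since $T=\lfloor1/\lambda\rfloor$ forces $\lambda T\in(1-\lambda,1]$, this amounts to verifying that the contraction rate inherited from \citet{Harris,eberle2019couplings} dominates the prefactor $\hat{c}$ over one unit of (real) time, which must be read off from the explicit constants. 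A secondary technical point is that the fourth-moment bounds feeding the $W_2\to w_{1,2}$ conversion must be uniform in $t$ across each block, so that $c_0$ is genuinely independent of $n$.
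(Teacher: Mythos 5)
Your proposal assembles the right local ingredients --- the $w_{1,2}$ contraction of Proposition \ref{eberle}, the comparison $W_1\le w_{1,2}$, the Cauchy--Schwarz conversion $w_{1,2}(\mu,\nu)\le W_2(\mu,\nu)\bigl(1+\{\E V_4\}^{1/2}+\{\E V_4\}^{1/2}\bigr)$ fed by the uniform fourth-moment bounds, and the one-block $W_2$ estimate of Lemma \ref{Lemma 4.7} --- but the global assembly via the recursion $a_{n+1}\le c_0\sqrt{\lambda}+\hat{c}e^{-\dot{c}\lambda T}a_n$ has a genuine gap that you correctly sense but underestimate. Summing that recursion requires $\hat{c}e^{-\dot{c}\lambda T}<1$, i.e. $\hat{c}<e^{\dot{c}}$ up to the harmless factor $\lambda T\in(1-\lambda,1]$. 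This is not available: as the paper's own discussion of constants makes explicit (Table \ref{tab:basic constants}), $\hat{c}$ is inherited from the contraction results of \citet{eberle2019couplings, Harris} and grows exponentially in the dimension, while $\dot{c}$ is small (it is a minimum involving $\bar{\phi}$, which is itself exponentially small). So $\hat{c}e^{-\dot{c}}\ge 1$ in essentially every case of interest, your geometric series diverges in $n$, and the resulting bound is not uniform in $n$ --- which defeats the purpose of the lemma. This is not a constant to be "read off and verified"; it is a structural obstruction to iterating the contraction with its prefactor.

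The paper's proof is built precisely to avoid this compounding. It telescopes over the family of auxiliary processes: $W_1\bigl(\mathcal{L}(\bar{\zeta}_t^{\lambda,n}),\mathcal{L}(Z_t^\lambda)\bigr)\le\sum_{k=1}^n W_1\bigl(\mathcal{L}(\bar{\zeta}_t^{\lambda,k}),\mathcal{L}(\bar{\zeta}_t^{\lambda,k-1})\bigr)$, where the $k$-th pair consists of two \emph{exact} diffusions on $[kT,t]$ started at $kT$ from $\bar{\theta}_{kT}^\lambda$ and $\bar{\zeta}_{kT}^{\lambda,k-1}$ respectively. Applying the contraction to each pair over the interval $[kT,t]$ produces $\hat{c}\,e^{-\dot{c}(n-k)}$ times the \emph{one-block} discrepancy $w_{1,2}\bigl(\mathcal{L}(\bar{\theta}_{kT}^\lambda),\mathcal{L}(\bar{\zeta}_{kT}^{\lambda,k-1})\bigr)$, which is $O(\sqrt{\lambda})$ by Lemma \ref{Lemma 4.7} together with the $W_2\to w_{1,2}$ conversion (the paper additionally splits the product via Young's inequality into $\lambda^{-1/2}W_2^2+3\sqrt{\lambda}(1+\E V_4+\E V_4)$). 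The prefactor $\hat{c}$ thus appears linearly in each summand rather than being raised to the $k$-th power, and the series $\sum_k e^{-\dot{c}(n-k)}\le(1-e^{-\dot{c}})^{-1}$ converges unconditionally, yielding $z_1$ with the factor $\hat{c}/(1-e^{-\dot{c}})$. To repair your argument you would need to replace the restart-and-recurse structure with this telescoping decomposition.
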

Thus,  in view of the above results, and the facts that $W_1(\mu,\nu)\leq w_{1,2}(\mu,\nu)$ and $\mathcal{L}(\bar{\theta}^{\lambda}_n)=\mathcal{L}(\theta_n^{\lambda})$, for each $n\in\mathbb{N}$, one obtains the results of Theorem~\ref{Thrm1}. The proof of Corollary \ref{Thrm2} follows the same lines by noticing $W_2 \leq \sqrt{2w_{1,2}}$. Full details of all the aforementioned derivations can be found in the Appendix.

Finally, the excess risk as described in \eqref{sum_T1_T2} is controlled thanks to the following two Lemmas.
\newtheorem{lemmaT1}[lemma2]{Lemma}
\begin{lemmaT1} \label{T_1}
Let  the assumptions of the main theorems hold.\\ Set $\mathcal{T}_{1}: =\E[u(\theta_n^\lambda)]-\E[u(\theta_\infty)] .$ Then,
\begin{align*}
 \mathcal{T}_{1}\leq & \left( \frac{a_1}{l+1}\sqrt{\E|\theta_0|^{2l}+C'_l} +\frac{a_1}{l+1}\sqrt{\sigma_{2l}}+r_2\right) W_2\left(\mathcal{L}\left(\theta_{n}^{\lambda}\right),\pi_\beta\right)
\end{align*}
where $a_1=2^l(\E K(X_0) +\eta)$ and $r_2=2\E K(X_0)$.
\end{lemmaT1}
\newtheorem{lemmaT2}[lemma2]{Lemma}
\begin{lemmaT2}  \label{T_2}
Let and Assumptions \ref{A2} and \ref{A3} hold. If $\beta\geq \frac{2}{A}$ and $R_0 = \inf \{y\geq  \sqrt{B/A}: \quad y^2  (1+4y)^l>\frac{d+1}{\beta L\E (1+|X_0|)^\rho} \}$, then
\[\begin{aligned}\mathcal{T}_{2}: = \E[u(\theta_\infty)]-u_*&\leq \frac{d}{2 \beta} \log \left(\frac{e K}{A}\left(\frac{B \beta}{d}+1\right)\right)-\frac{1}{\beta}\log\left(1-e^{-(R_0\sqrt{K\beta}-\sqrt{d})^2}\right).\end{aligned}\]
\end{lemmaT2}
Lemma \ref{T_1} and Lemma \ref{T_2} can be viewed as generalizations of the important work in \cite{raginsky} which decribes the connection between sampling and optimization with non-asymptotic estimates.\\
The proofs of the aforementioned Lemmas follow, in general, the proofs of the analogous results in \cite{raginsky} with certain modification to allow for the more general local Lipschitz continuity assumption (Assumption \ref{A2} and Proposition \ref{polip H}) compared to the global Lipschitz continuity assumption in \cite{raginsky}.
More specifically, in Lemma \ref{T_1} the same steps as the analogous result in \cite{raginsky} are followed while superlinear growth estimates are used (instead of linear) which are induced by the local Lipschitz continuity.
In Lemma \ref{T_2}, exploiting the fact $|\theta^*|$ can be explicitly bounded as a result of dissipativity, thus we are able to underestimate the integral $I$  with a smaller integral around $\theta^*$ where local Lispchitzness implies global Lipschitzness. This way one can bound the given integral by one related to a Gaussian distribution (the comparison with such an integral in \cite{raginsky} is straightforward because of the global Lispchitz assumption).
An application of a standard concentration inequality yields a slighlty worse upper bound of the same order with respect to inverse temperature parameter ( $\frac{\log\beta}{\beta}$).
\begin{proof}[\textbf{Proof of Theorem \ref{excess_risk}}]
Due to \eqref{sum_T1_T2}, Lemma \ref{T_1} and Lemma \ref{T_2}, the desired result is obtained.
\end{proof}

\section{Multilayer neural networks} \label{MNNs}
	
Some further notation is introduced in this section. The set $\mathbb{N}_+:=\mathbb{N}\setminus\{1\}$ and  $\id{k}$ denotes the identity operator of $\R^k$, $k\in\N$. For $k,l\in\N$, $\lin{\R^k,\R^l}$ stands for the vector space of $\R^k\to\R^l$ linear operators. In particular, $(\R^k)^\ast$ denotes $\lin{\R^k,\R}$, that is the dual space of $\R^k$. In our setting, linear functionals and vectors are identified through the inner product. Moreover,  for a fixed $v\in\R^k$, we define $M_v\in\lin{\R^k,\R^k}$ the element-wise multiplication by $v$, i.e.  $[M_v z]_l=v_l z_l$, $l=1,\ldots,k$. Furthermore, for an arbitrary $W\in\lin{\R^k,\R^l}$, $\Vert W\Vert$ stands for the corresponding operator norm, that is $\Vert W \Vert = \sup_{|z|=1}|Wz|$. Also, for an arbitrary $W\in\lin{\R^k,\R^l}$, $[W]_{ij}$ denotes the element at $ij$-th place in the matrix of $W$ with respect to the standard bases of $\R^k$ and $\R^l$.

Let $C_b(\R)$ be the space of continuous and bounded functions and $C^k_b(\R)$ denotes the subset of at least $k$-times continuously differentiable functions. The norm on $C_b(\R)$ is given by $\vinf{\sigma}:=\sup_{z\in\R}|\sigma (z)|$. Moreover, for a function $\eta:\R\to\R$, let us define the  Lipschitz constant of $\eta$ as
\[
\vlip{\eta}=\inf\{L>0\mid \forall x,y\in\R\, |\eta(x)-\eta(y)|\le L |x-y| \}.
\]
The set of those $\R\to\R$ functions for which $\vlip{.}$ is finite is denoted by $\lip{\R}$. In the sequel, we employ the convention that $\sum_{k}^{l}=0$ and $\prod_{k}^{l}=1$ whenever $k,l\in\Z$, $k>l$.

Let us fix a function $\sigma:\R\to\R$ to serve as the activation function of our neural network. We assume that $\sigma\in C^1_b(\R)$ and $\sigma'\in C_b(\R)\cap\lip{\R}$. Note that these assumptions imply the Lipschitz-continuity of $\sigma$, too. The Sobolev space $W^{1,\infty}(\R)$ is the space of Lipschitz functions moreover the norm on this space is
	$\Vert \cdot \Vert_{1,\infty} = \vinf{\cdot}+\vlip{\cdot}$, therefore
	$\sigma'\in W^{1,\infty}(\R)$ and it is natural to regard $\sigma$ as an
	element of $\sigma\in W^{2,\infty}(\R)$. The norm which we use frequently in the sequel is the
	$W^{2,\infty}(\R)$-norm of $\sigma$ that is
\[
	\vsob{\sigma} := \Vert\sigma\Vert_{2,\infty} =
	\vinf{\sigma}+\vinf{\sigma'}+\vlip{\sigma'}.
\]	
Next, we consider networks consisting of $n\in\N_+$ hidden layers, where the number of nodes in each layer is given by $(d_1,\ldots,d_n)\in\N_+^n$.
	The space of the learning parameters is
	\begin{equation*}
		\R^d\cong\Theta:=(\R^{d_n})^\ast\oplus\bigoplus_{i=1}^n\lin{\R^{d_{i-1}},\R^{d_i}},
	\end{equation*}
	where $d:=\dim (\Theta)=d_n+\sum_{i=1}^n d_i d_{i-1}$ and $d_0=m-1$ for some $m>1$ which corresponds to
	the dimension of the training data sequence.
	For the diameter of the network, we introduce the notation
	\begin{equation*} %\label{eq:diam}
	D:=\max_{0\le i\le n} d_j.	
	\end{equation*}
	A general element of $\Theta$
	is of the form $\theta=(\phi,\mathbf{w})$, where $\phi\in (\R^{d_n})^\ast$ is a linear functional aggregating the node's output and $\mathbf{w}:=(W_1,W_2,\ldots,W_n)$ is the sequence of weight matrices, where $W_i\in\lin{\R^{d_{i-1}},\R^{d_i}}$, $i=1,\ldots,n$.
	The Euclidean norm on $\Theta$ is
	\begin{equation*}
		|(\phi,\mathbf{w})| = \left(|\phi|^2+\sum_{i=1}^{n}|W_i|^2\right)^{1/2}.
	\end{equation*}
Let us further introduce the notations
	\begin{equation*}
		\sigma (\mathbf{w}_i^j,\cdot) = \begin{cases}
		\sigma_{W_j}\circ\sigma_{W_{j-1}}\circ\ldots\circ\sigma_{W_i}(\cdot) & \text{if } 1\le i\le j\le n \\
		\id{d_j} & \text{otherwise,}
		\end{cases}
	\end{equation*}
	where $\sigma_{W_i}:\R^{d_{i-1}}\to\R^{d_{i}}$ is a nonlinear map given by
	$[\sigma_{W_i}(z)]_l=\sigma\left({[W_i z]_l}\right)$, $z\in\R^{d_{i-1}}$, $l=1,\ldots,d_i$, $i=1,\ldots,n$.
	\newtheorem{NNremark}[Optimization problem]{Remark}
\begin{NNremark}{\rm In our setting, seemingly, the bias is always chosen to be $0$ inside the activation function.
However, it is easy to incude a nonzero bias, too. We show this only for the first layer, for simplicity. It is
not restrictive to assume
$\sigma(1)=1$ and we will add a $0$th coordinate $z_{0}=1$ to the the input vector $\mathbf{z}$.
We wish to obtain the output $\sigma(a_{i}^{T}\mathbf{z}+b_{i})$, $i=1,\ldots,d_{1}$ from the first layer
with $a_{i}\in \mathbb{R}^{d_{0}}$ and with biases $b_{i}\in\mathbb{R}$.
To this end, we should define a $(d_{1}+1)\times (d_{0}+1)$
matrix $W_{1}$ whose $i$th row is $(b_{i},a_{i})\in\mathbb{R}^{d_{0}+1}$, $i=1,\ldots,d_{1}$ and whose $0$th row is $(1,0,\ldots,0)$.
In this way $[\sigma_{W_{1}}((1,\mathbf{z}))]_{i}=\sigma(a_{i}^{T}z+b_{i})$ for $i=1,\ldots,d_{1}$
and $[\sigma_{W_{1}}((1,\mathbf{z}))]_{0}=1$.
It is clear that the construction can be continued for arbitrarily many layers.
Thus, it doesn't affect our calculations.}
\end{NNremark}
	Let $\mathbf{z}:=(z_{1},\ldots,z_{d_{0}})\in\mathbb{R}^{m-1}$ represent an input vector.
	With this, the function computed by a neural network with the above characteristics is given by
	$f:\Theta\times\R^{m-1}\to\R$
	\begin{equation}\label{eq:fdef}
	f((\phi,\mathbf{w}),\mathbf{z}) := \phi \left(\sigma(\mathbf{w}_1^n,\mathbf{z})\right)
	\end{equation}
	For all $r>0$ and $\eta>0$, we define the regularized empirical risk function
	$U:\Theta\times\R^m\to [0,\infty)$ such that
	\begin{equation}\label{eq:objective}
	U(\theta,x):=(y-f(\theta,\mathbf{z}))^{2}+\frac{\eta}{2(r+1)}|\theta|^{2(r+1)},
	\end{equation}
	where we used the simpler notation for the input $x:=(\mathbf{z},y)$. The second term in \eqref{eq:objective} serves to regularize the optimization problem.
	We seek to optimize the parameter $\theta$ in such a way that, for some $r>0$ and $\eta>0$, $\theta\mapsto u(\theta):=E[U(\theta,X)]$ is minimized where
	$X=(\mathbf{Z},Y)\in\R^{m}$ is a pair of random variables, $\mathbf{Z}$ representing the input
	and $Y$ the target. The target variable $Y$ is assumed one-dimensional for simplicity. For the derivative of $U$ with respect to the learning parameter, the following notation is used
	\begin{equation}\label{eq:H}
		H(\theta,x):=\partial_\theta U(\theta,x) = -2(y-f(\theta,\mathbf{z}))\partial_\theta f(\theta,\mathbf{z}) + \eta |\theta|^{2r}\theta,
	\end{equation}
	where we refer to the first term in the sequel as $G:\Theta\times\R^m\to\Theta^\ast\cong\R^d$. Thus,
	\begin{equation}\label{wq:G}
		G(\theta,x):= -2(y-f(\theta,\mathbf{z}))\partial_\theta f(\theta,\mathbf{z}).
	\end{equation}
Further, it is shown that within the framework of \eqref{eq:objective} and \eqref{eq:H}, Assumptions \ref{A2} and \ref{A3} hold.
\newtheorem{propositionnn}[Remark1]{Proposition}
\begin{propositionnn} \label{A2_for_ANN}  Assumption \ref{A2} is satisfied by $G$, which is given in \eqref{wq:G}. In particular,
\begin{equation*}
|G(\theta,x)-G(\theta',x)| \le  L_1(1+|x|)^{\rho}(1+|\theta|+|\theta'|)^{q-1}
	|\theta-\theta'|, \mbox{ for all } x \in \mathbb{R}^{m}
\end{equation*}
and $\theta$, $\theta^{\prime} \in \mathbb{R}^d$, where $L_1=16(n+1)D^{3/2}(1+\vsob{\sigma})^{2n+4}$, $\rho = 3$ and $q-1= 2n+1$.
\end{propositionnn}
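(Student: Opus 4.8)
The plan is to exploit the product structure $G(\theta,x)=-2(y-f(\theta,\mathbf{z}))\,\partial_\theta f(\theta,\mathbf{z})$ and reduce the claim to four estimates for the forward map $f$ and its parameter-gradient $\partial_\theta f$. Writing $x=(\mathbf{z},y)$ and adding and subtracting a mixed term, I would start from the decomposition
\[
G(\theta,x)-G(\theta',x)=-2\bigl(y-f(\theta,\mathbf{z})\bigr)\bigl(\partial_\theta f(\theta,\mathbf{z})-\partial_\theta f(\theta',\mathbf{z})\bigr)+2\bigl(f(\theta,\mathbf{z})-f(\theta',\mathbf{z})\bigr)\partial_\theta f(\theta',\mathbf{z}),
\]
so that it suffices to control (i) the residual $|y-f(\theta,\mathbf{z})|$, (ii) a pointwise bound on $|\partial_\theta f|$, (iii) a local Lipschitz bound for $f$ in $\theta$, and (iv) a local Lipschitz bound for $\partial_\theta f$ in $\theta$. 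Collecting the polynomial degrees and constants from these four ingredients will yield the stated exponents $\rho=3$, $q-1=2n+1$ and the constant $L_1$.

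First I would set up notation for the forward pass: denoting by $a^{(0)}=\mathbf{z}$ and $a^{(i)}=\sigma_{W_i}(a^{(i-1)})$ the layer activations, the boundedness of $\sigma$ gives $|a^{(i)}|\le\sqrt{D}\,\vinf{\sigma}$ for every $i\ge1$, so that $f=\phi(a^{(n)})$ is bounded in $\mathbf{z}$ and grows only linearly in $\phi$; this yields $|y-f(\theta,\mathbf{z})|\le C(1+|x|)(1+|\theta|)$, settling ingredient (i). For ingredient (ii) I would use the backpropagation identity $\partial_{W_i}f=\delta^{(i)}(a^{(i-1)})^\top$, where the adjoint signal $\delta^{(i)}$ is a product of the $n-i$ weight matrices $W_{i+1},\dots,W_n$, the functional $\phi$, and diagonal factors built from $\sigma'$. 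Since $\sigma'$ is bounded, each such factor contributes a constant $\vinf{\sigma'}$ and one power of $\|W_j\|$ (or of $|\phi|$), so $|\partial_{W_i}f|\lesssim(1+|x|)(1+|\theta|)^{\,n-i+1}$; the $i=1$ block, carrying the extra factor $a^{(0)}=\mathbf{z}$, is the largest and gives $|\partial_\theta f(\theta,\mathbf{z})|\le C(1+|x|)(1+|\theta|)^{n}$. Integrating this bound along a segment then gives ingredient (iii), namely $|f(\theta,\mathbf{z})-f(\theta',\mathbf{z})|\le C(1+|x|)(1+|\theta|+|\theta'|)^{n}|\theta-\theta'|$.

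The technical heart, and what I expect to be the main obstacle, is ingredient (iv): the local Lipschitz estimate for $\partial_\theta f$, which is what produces the superlinear degree. This reduces to bounding the Hessian $\partial^2_\theta f$, and here a subtle phenomenon appears. Differentiating the adjoint signal $\delta^{(i)}$ with respect to a deep weight $W_k$ can hit one of the diagonal $\sigma'(s^{(m)})$ factors, producing $\sigma''(s^{(m)})\,\partial_{W_k}s^{(m)}$; the inner derivative $\partial_{W_k}s^{(m)}$ contributes a \emph{further} chain of weight matrices $W_{k+1},\dots,W_m$, so the degree does not drop but grows. Tracking the worst case ($i=1$, $m=n$, $k=1$) shows the Hessian grows like $(1+|x|)^2(1+|\theta|)^{2n-1}$, whence $|\partial_\theta f(\theta,\mathbf{z})-\partial_\theta f(\theta',\mathbf{z})|\le C(1+|x|)^2(1+|\theta|+|\theta'|)^{2n-1}|\theta-\theta'|$. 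Here the assumption $\sigma\in W^{2,\infty}(\R)$ (equivalently $\sigma'\in\lip{\R}$) is essential, since it bounds the $\sigma''$ terms via $\vlip{\sigma'}$; the delicate part is the combinatorial accounting of which of the many second-derivative terms attains the maximal degree, together with the uniform control of all the remaining ones.

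Finally I would substitute the four estimates into the decomposition. The first summand is bounded by $|y-f(\theta,\mathbf{z})|$ (degree $1$ in $\theta$, degree $1$ in $x$) times the gradient-difference (degree $2n-1$ in $\theta$, degree $2$ in $x$), and the second by the $f$-difference (degree $n$, degree $1$) times $|\partial_\theta f(\theta',\mathbf{z})|$ (degree $n$, degree $1$); both contribute $(1+|x|)^3(1+|\theta|+|\theta'|)^{2n}|\theta-\theta'|$, which is dominated by the stated bound with $q-1=2n+1$ and $\rho=3$ since $1+|\theta|+|\theta'|\ge1$. Gathering the constant factors — the $n+1$ parameter blocks, the powers of $\vinf{\sigma}$, $\vinf{\sigma'}$ and $\vlip{\sigma'}$ which combine into $(1+\vsob{\sigma})^{2n+4}$, and the dimensional factors $D^{3/2}$ coming from the operator- and Euclidean-norm bounds over the $D$-dimensional layers — produces $L_1=16(n+1)D^{3/2}(1+\vsob{\sigma})^{2n+4}$, completing the argument.
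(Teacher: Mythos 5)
Your overall strategy coincides with the paper's: the proof of Proposition \ref{A2_for_ANN} starts from exactly your add-and-subtract decomposition and then feeds in the same four ingredients — the residual bound, the pointwise bound on $|\partial_\theta f|$ (Lemma \ref{lem:deriv}), the Lipschitz bound for $f$ in $\theta$ (Corollary \ref{cor:diff}), and the Lipschitz bound for $\partial_\theta f$ in $\theta$ (Lemma \ref{lem:dlip}) — so the skeleton of your argument is the paper's argument.

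The one place where you genuinely diverge is the technical heart, ingredient (iv). You propose to bound $|\partial_\theta f(\theta,\mathbf{z})-\partial_\theta f(\theta',\mathbf{z})|$ by estimating the second derivative $\partial^2_\theta f$ along the segment joining $\theta$ and $\theta'$, with the $\sigma''$ terms controlled by $\vlip{\sigma'}$. The paper instead never forms a Hessian: Lemma \ref{lem:plip} establishes a layer-by-layer recursion for the operator norm of $\partial_{W_i}\sigma(\mathbf{w}_1^{k+1},\mathbf{z})-\partial_{W_i}\sigma(\mathbf{w'}_1^{k+1},\mathbf{z})$ of the form $u_{k+1}\le A\,u_k+BA^{n+k-i+1}$ with $A=(1+\vsob{\sigma})(1+|\theta|+|\theta'|)$, solved by induction, with the $\sigma''$-type contributions replaced by direct difference estimates of the diagonal factors $M_{\sigma'_{W_{k+1}}(\cdot)}$ via $\vlip{\sigma'}$ (inequality \eqref{eq:Mdiff}). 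The recursive route buys two things: it sidesteps the fact that under the standing hypothesis $\sigma'\in\lip{\R}$ the second derivative exists only almost everywhere (your mean-value argument would need the extra step of integrating along the segment using absolute continuity of $\sigma'$, rather than a pointwise Hessian bound), and it makes the combinatorial bookkeeping you flag as "the delicate part" essentially automatic, since all cross-terms are absorbed into the single inhomogeneous term $BA^{n+k-i+1}$. Your degree count ($2n-1$ for the gradient difference versus the paper's $2n$ in Lemma \ref{lem:dlip}) is immaterial either way, since both are dominated by the stated exponent $q-1=2n+1$ after multiplying by the degree-one residual. With the Hessian step either repaired as above or replaced by the recursive estimate, your proof is complete and reaches the same constants.
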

\newtheorem{remarkkk}[Optimization problem]{Remark}
\begin{remarkkk}\label{remarkkk}
Assumption \ref{A3} is trivially satisfied in the context of neural networks when $X_0$ has either bounded support or a distribution with enough bounded moments. Similarly, the initialization of the algorithm is chosen appropriately either by using deterministic values or samples from distributions with enough bounded moments.
\end{remarkkk}

Thus, the main results of this paper, namely Theorem \ref{Thrm1}, Corollary \ref{Thrm2} and, most importantly, Theorem \ref{excess_risk} hold true in this setting.

\section{Examples}

The purpose of this section is twofold. On one hand,
we present here a simple one-dimensional
optimization problem for which our method outperforms the usual unadjusted Langevin dynamics and even
the ADAM optimizer.

On the other hand, we highlight the relevance of our results to neural networks
by showing a toy example, where dissipativity of the algorithm fails
for quadratic regularization which further supports our claim that higher-order
regularization is needed. We conclude the section with a real-world example on image classification.

\subsection{Experiment: A comparison between SGLD, ADAM and TUSLA}\label{sec:ADAM}

% Our example
In this point, we present an example where both ADAM and the usual SGLD algorithm fail to find
the optimum but TUSLA converges rapidly to it. Let $(X_n)_{n\ge 1}$ be an i.i.d. sequence such that $X_1\sim\mathcal{U}([0,11])$. We consider the following parametric family of objective functions
\begin{equation}\label{eq:obj}
u_s (\theta) = \begin{cases}
\frac{1}{22}(\theta-0.1)^2 + (\theta-0.1)^{2s} & \text{ if } |\theta-0.1|\le 1 \\
\frac{1}{11}\left(|\theta-0.1|-\frac{1}{2}\right)
+ (\theta-0.1)^{2s} & \text{ if } |\theta-0.1|>1
\end{cases},\,\, s\ge 0.
\end{equation}
It is easy to see that $\theta_{\ast}=0.1$ is the global minimum of $u_s$ for $s\ge 0$ (See Figure \ref{fig:obj})
\begin{figure}[!h]
	\centering
	\includegraphics[width=0.75\linewidth]{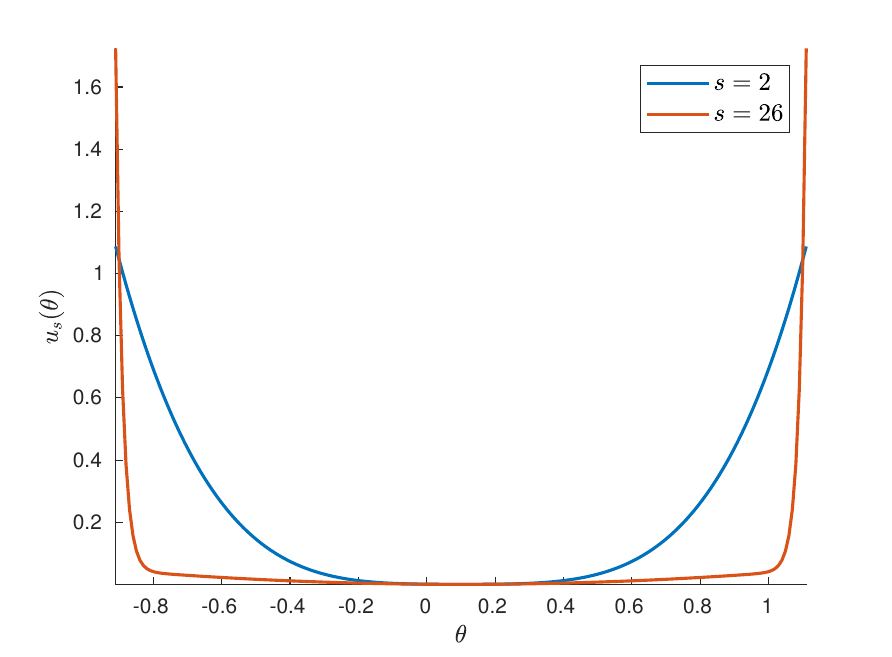}
	\caption{The objective function for $s=2$ and $s=26$.}
	\label{fig:obj}
\end{figure}

\noindent
Furthermore,
\begin{equation}\label{eq:obj_sg}
G_s (\theta,x) = (12\times \ind_{x\in [0,1]}-1)\times
\begin{cases}
(\theta-0.1)^2 + 2s (\theta-0.1)^{2s-1} & \text{ if } |\theta-0.1|\le 1 \\
|\theta-0.1|-\frac{1}{2}
+ 2s (\theta-0.1)^{2s-1} & \text{ if } |\theta-0.1|>1
\end{cases}
\end{equation}
is an unbiased estimate $u'_s$ that is $\E [G_s (\theta,X_1)] = u_s'(\theta)$, $\theta\in\R$.
Note that $G_s(\theta,x)$ is discontinuous in $x\in\R$ but satisfies polynomial Lipschitz continuity in $\theta \in R$ and thus Assumption \ref{A2} is in force. Since $X_1$ has
bounded support, Assumption \ref{A3} trivially holds.

% SGLD
\smallskip
We did a comparison between SGLD without adaptive step size, ADAM and TUSLA.
The parameter update in the unadjusted stochastic gradient Langevin dynamics is given by
\begin{equation}\label{eq:SGLD}
\theta_{n+1}:=\theta_n-\la G(\theta_n,x_{n+1})+\sqrt{\frac{2\la}{\beta}}\xi_{n+1},\ n\in\N,
\end{equation}
where $(\xi_n)_{n\in\N}$ is an i.i.d sequence of standard Gaussian random variables, $\lambda$
is the step size and $\beta$ is the so-called inverse temperature parameter.

% What is ADAM and why is it so popular.
\smallskip
ADAM (an abbreviation for Adaptive Moment Estimation) is a variant of stochastic gradient descent presented first in \cite{kingma2015adam}. Despite its raising popularity to solve deep learning problems, practitioners  started noticing that in some cases ADAM performs worse than the original SGD. Several research papers are devoted to the mathematical analysis of ADAM and other ADAM-type optimization algorithms. See, for example \cite{Barakat2019ConvergenceAD} and \cite{chen2019convergence}.
The main idea of ADAM is that the algorithm calculates an exponential moving average of the gradient and the squared gradient making it robust against discontinuous and noisy stochastic gradients. In ADAM (See Algorithm \ref{alg:ADAM}), the closer $\beta_{1}$ and $\beta_{2}$ to $1.0$, the smaller is the bias of moment estimates towards zero.
\begin{algorithm2e}[h]
	\SetKwInOut{Input}{input}
	\SetKwInOut{Output}{output}
	\SetKwInOut{Parameter}{parameter}
	
	\BlankLine
	\Input{$\theta_0$ (Initial value)}
	\Input{$(x_m)_{m\ge 1}$ (i.i.d. random numbers drawn from a $\mathcal{U}([0,11])$ distribution)}
	
	\Output{$\theta_n$ (Approximation of the global minimum of the optimum)}
	
	\BlankLine
	\Parameter{$\alpha$ (Step size)}
	\Parameter{$\beta_{1},\beta_{2}\in [0,1)$ (Exponential decay rates for the moment estimates)}
	\Parameter{$\eps$ (Small positive scalar to avoid division by zero)}
	
	\BlankLine
	$m_0$ $\leftarrow$ $0$ (Initialize $1^{\text{st}}$ moment)\\
	$v_0$ $\leftarrow$ $0$ (Initialize $2^{\text{nd}}$ moment)\\
	$n$ $\leftarrow$ $0$ (Initialize timestep)
	
	\BlankLine
	\While{$\theta_n$ not converged}{
		$m_{n+1} \leftarrow \beta_{1} m_n + (1-\beta_{1})\times G (\theta_n, x_{n+1})$ (Update biased first moment estimate)\\
		
		$v_{n+1} \leftarrow \beta_{2} v_n + (1-\beta_{2})\times G (\theta_n, x_{n+1})^2$ (Update biased second raw moment estimate)\\
		
		$\hat{m}_{n+1} \leftarrow \frac{m_{n+1}}{1-\beta_{1}^{n+1}}$ (Compute bias-corrected first moment estimate)\\
		
		$
		\hat{v}_{n+1} \leftarrow \frac{v_{n+1}}{1-\beta_{2}^{n+1}} $ (Compute bias-corrected second raw moment estimate)\\
		
		$\theta_{n+1} \leftarrow \theta_n - \frac{\alpha \hat{m}_{n+1}}{\sqrt{\hat{v}_{n+1}}+\eps}$ (Update parameters)
	}
	
	\BlankLine
	\BlankLine	
	\Return{$\theta_n$}
	
	\BlankLine	
	\BlankLine	
	\caption{ADAM algorithm for stochastic optimization}\label{alg:ADAM}
\end{algorithm2e}

\smallskip
% Numerically stable version of TUSLA
It is worth mentioning that the TUSLA iteration scheme in its original form (See equation \eqref{eq:TUSLA} and \eqref{def: H_lambda}) may cause overflow error on computers
because of the limitation of the floating-point arithmetic. To be more precise, let us consider the definition of $H_\lambda$. Since in the expression of $H_\lambda$, both the numerator and denominator contains $|\theta|^{2r}$, it is quite common that during the iteration, $|\theta_n|^{2r}$ exceeds the numeric limit of the floating point type used and thus resulting NaN. To overcome this issue, we compute $H_\la (\theta,x)$ as follows:
\begin{equation*}
H_\la (\theta,x) = \begin{cases}
\frac{G(\theta,x)+\eta \theta^{2r+1}}{1+\sqrt{\la}\theta^{2r}} & \text{ if } |\theta|<1 \\[0.75em]
\frac{\theta^{-2r} G(\theta,x)+\eta \theta}{\theta^{-2r}+\sqrt{\la}} & \text{ if } |\theta|\ge 1.
\end{cases}	
\end{equation*}

\smallskip
In numerical experiments, both in SGLD and in TUSLA, we set $\la = 0.05$, $\beta = 0.05$, $\eta = 0.01$ and $r=s+10$, where $s$ is as in the definition of $u_s$ (See \eqref{eq:obj}).
Furthermore, in ADAM, we set the step size to $\alpha=10$, and used parameter values proposed by authors in \cite{kingma2015adam} i.e. $0.9$ for $\beta_{1}$, $0.999$ for $\beta_{2}$, and $10^{-8}$ for $\eps$.

% Results
As initial value, we used $\theta_0 = 10^3$, simulated $10^4$ time steps, and studied the convergence of these three algorithms when $s=2$ and $s=26$ in \eqref{eq:obj}. We found that the SGLD algorithm rapidly diverges in all cases after 1-5 steps. Figure \ref{fig:ADAMvsTUSLA} shows
that under these parameter settings, for $s=2$, ADAM and TUSLA perform equally well (See Figure \ref{fig:res1}.).
However, interestingly, when we increase $s$ to $26$, ADAM become practically non-convergent
but surprisingly, TUSLA approaches $\theta_{\ast}=0.1$ as fast as before (See Figure \ref{fig:res2}.).
\begin{figure}[!h]
	\centering
	\begin{subfigure}{0.45\textwidth}
		\includegraphics[width=\textwidth]{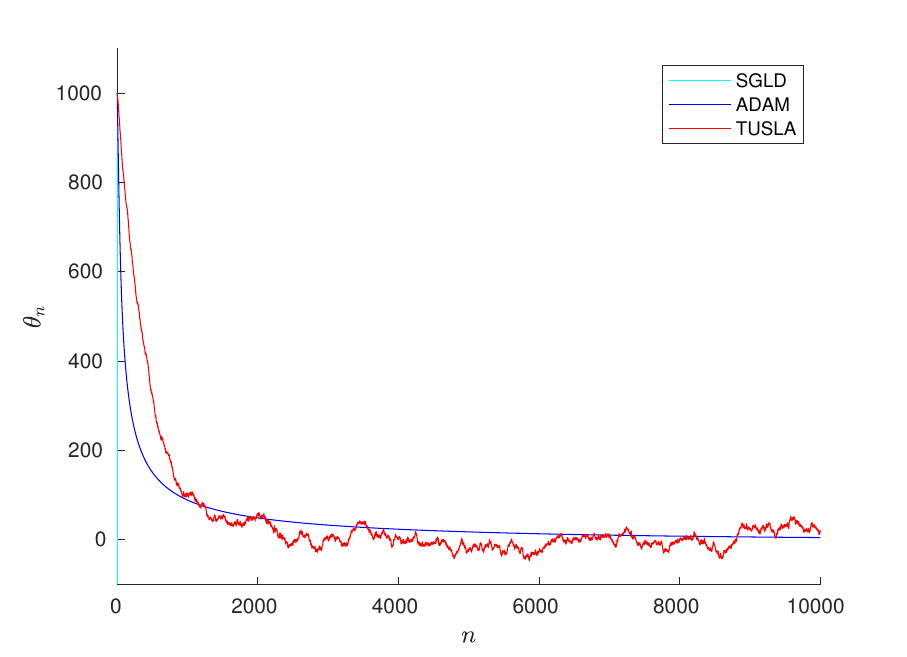}
		\caption{$s=2$}\label{fig:res1}
	\end{subfigure}
	~
	\begin{subfigure}{0.45\textwidth}
		\includegraphics[width=\textwidth]{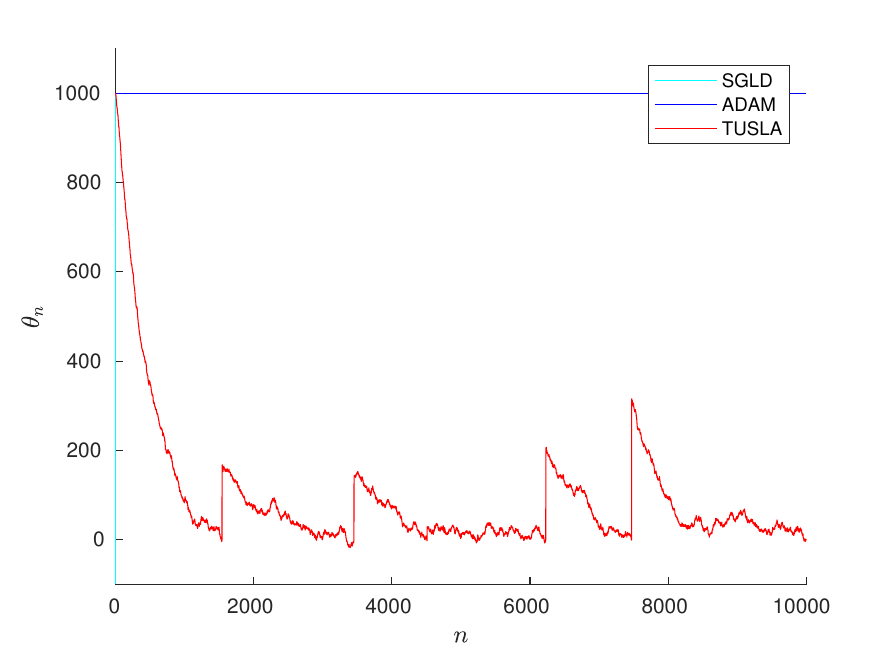}
		\caption{$s=26$}\label{fig:res2}
	\end{subfigure}
	\caption{Finding the global minimum of $u_s$ with ADAM (blue line) and with TUSLA (red line), where, in the first experiment $s=2$ (left), and in the second experiment $s=26$ (right).}\label{fig:ADAMvsTUSLA}
\end{figure}
We attempted to modify the parameters used in ADAM making the iteration convergent. Actually, we varied the learning parameter $\alpha$ between $10^{-2}$ and $10^{10}$, the forgetting factors $\beta_1,\beta_2$ in $(0,1)$, and tried several different combination but the result was the same as in Figure \ref{fig:res2}.

\subsection{One-layer neural network without dissipativity} \label{example_non_diss}

Let us define, for simplicity, $\sigma(x):=\mathrm{arctan}(x)$ but the example
would work equally well with a sigmoidal activation function. Define
$f(w_1,w_2,x,y):=(y-w_2\sigma(w_1 x+S))^2+\eta (w_{1}^{2}+w_{2}^{2})$ where $(w_{1},w_{2})$
are the parameters, $\eta>0$ is a given weight for the regularization term, $x,y$ are the data points
and $S$ is a constant to be specified later.
This is a one-layer neural ``network''
with one neuron where $(w_{1},w_{2})$ need to be tuned to find an optimal approximation of
$y$ as a function of $x$. Needless to say that everything would work with several neurons and layers, too.
Then
\begin{eqnarray*}
\partial_{w_1} f &=& 2(y-w_2\sigma(w_1x +S))(-w_2)\sigma'(w_1 x+S)x+2\eta w_{1},\\
\partial_{w_2} f &=& 2(y-w_2\sigma(w_1 x+S))(-\sigma(w_1 x+S))+2\eta w_{2}.
\end{eqnarray*}
Now let $y=0$, $x=1$. Then we get that at such a data point,
$$
w_1\partial_{w_1} f=2w_2^2w_1\sigma(w_1 +S)\sigma'(w_1 +S)+2\eta w_{1}^{2},
\ w_2\partial_{w_2} f=2w_2^2\sigma^2(w_1 +S)+2\eta w_{2}^{2}.
$$
Let us notice that $2$ is a bound for both $|\sigma|,\sigma'$.{}
Also, $\sigma(-\pi/4)=-1$ and $\sigma'(-\pi/4)=1/(1+\pi^{2}/16)$.
Choose $w_{1}:=(4+\eta+1)(1+\pi^{2}/16)$ and set $S:=-w_{1}-1$.

We can then check that
$$
\langle \nabla f(w_{1},w_{2},1,0), (w_{1},w_{2})\rangle\leq -2w_{2}^{2}+2\eta w_{1}^{2},
$$
hence dissipativity cannot hold since it would require
$$
\langle \nabla f(w_{1},w_{2},1,0), (w_{1},w_{2})\rangle\geq \alpha (w_{1}^{2}+w_{2}^{2})-\beta
$$
for all $w_{1},w_{2}$ with some $\alpha,\beta>0$.
The stochastic gradient Langevin algorithm on this non-dissipative
problem is expected to diverge due to the lack of dissipativity, as simple numerical simulations
readily confirm.

\subsection{Image classification}
 We conduct image classification on Fashion MNIST dataset  consisting of a training set of 60,000 images and a test set of 10,000 images. Each sample of the dataset $\left(z_i\right)_{i=1}^{60,000}$ is a $28 \times 28$ pixel image, i.e. $z_i \in \mathbb{R}^{784}$, and is assigned to one of 10 different labels $l_i \in\{0,1, \ldots, 9\}$ describing T-shirt (0), Trouser (1), Pullover (2), Dress (3), Coat (4), Sandal (5), Shirt (6), Sneaker (7), Bag (8), and Ankle boot (9). Then, the label variables are converted to vectors such that $y_i=\left[y_{i, 0}, y_{i, 1}, \ldots, y_{i, 9}\right]^{\top} \in \mathbb{R}^{10}$ with $y_{i, j}=\mathbf{1}_{\{j=l_i\}}, j=0,1, \cdots, 9, i=1, \ldots, 60,000 .$

For image classification, we consider the following SLFN with 50 neurons given by
\begin{equation}\label{eq-SLFN}
    \mathfrak{N}(\theta, z):=W_2 \sigma_1\left(W_1 z+b_1\right)+b_2, \quad \text{(SLFN)}
\end{equation}
where $\theta=(W_1,W_2,b_1,b_2)$, $W_1\in \mathbb{R}^{50x384}$, $W_2\in \mathbb{R}^{10x50}$, $b_1\in \mathbb{R}^{50}$, $b_2\in \mathbb{R}^{50}$, $\sigma_1$ the Sigmoid activation function.
We also consider a TLFN with 50 neurons on each hidden layer, which is defined by
\begin{equation}\label{eq-TLFN}
    \mathfrak{N}(\theta, z):=W_5 \sigma_1\left(W_4 \sigma_1\left(W_3 z+b_3\right)+b_4\right)+b_5, \quad \text { (TLFN) }
\end{equation}
where $\theta=\left(W_3, W_4, W_5, b_3, b_4, b_5\right), W_3 \in \mathbb{R}^{50 \times 784}, W_4 \in \mathbb{R}^{50 \times 50}, W_5 \in \mathbb{R}^{10 \times 50}, b_3 \in \mathbb{R}^{50}, b_4 \in \mathbb{R}^{50}$, $b_5 \in \mathbb{R}^{10}$ and $\sigma_1$ is the Sigmoid activation function. Therefore, we have $d=39,760$ for the SLFN and $d=42,310$ for the TLFN. Furthermore, the cross entropy loss is used, which is given by $\ell(u, v)=$ $-\sum_{i=1}^{10} u_i \log( \operatorname{softmax}( v)_i)$ for $u=\left[u_1, u_2, \cdots, u_{10}\right]^{\top} \in \mathbb{R}^{10}, v=\left[v_1, v_2, \cdots, v_{10}\right]^{\top} \in \mathbb{R}^{10}$ and $\operatorname{softmax}$ is given by
\[
p_i=\operatorname{softmax}(v)_i=\frac{e^{v_i}}{\sum_{j=1}^{10} e^{v_j}}.
\]
Essentially we are going to solve the following optimization problem:
\[\text { minimize } \quad \mathbb{R}^d \ni \theta \mapsto u(\theta):=\mathbb{E}[\ell(Y, \mathfrak{N}(\theta, Z))]+\frac{\eta}{2(r+1)}|\theta|^{2(r+1)}\]
where $R$ is given by \eqref{eq-SLFN} or \eqref{eq-TLFN} and
$\eta$ is fixed to $10^{-5}$ for all experiments. The models are trained for 200 epochs with 128 batch size.
For ADAM and AMSGrad,
we search the optimal learning rate between $\{0.01, 0.001\}$ and set $\epsilon = 10^{-8}$, $\beta_1 = 0.9$, $\beta_2 = 0.999.$ For
RMSprop, the learning rate is chosen from $\{0.01, 0.001\}$, where $\beta = 0.99$ and $\epsilon = 10^{-8}$ are fixed. For
TUSLA, we use $\lambda = 0.5$, $r = 0.5$, and $\beta = 10^{12}$ throughout the experiment.
Also, we decay the initial learning rate by 10 after 150 epochs.
\newline
\subsubsection{Performance of TUSLA by switching different hyperparameters}
\begin{table}[h!]
    \centering
    \begin{tabular}{c|cccc}
\hline$\beta$ & $10^4$ & $10^8$ & $10^{10}$ & $10^{12}$ \\
\hline test accuracy & $61.03$ & $87.77$ & $87.72$ & $87.78$ \\
\hline
\end{tabular}
    \caption{Table for different values of $\beta$ in TUSLA-SLFN}
    \label{tab:beta}
\end{table}
As expected from Theorem \ref{excess_risk} we witness that an increase in $\beta$ improves the performance in our optimizer.

\begin{table}[h!]
    \centering
    \begin{tabular}{c|cccc}
\hline$\eta$ & $10^{-5}$ & $10^{-4}$ & $10^{-3}$ & $10^{-2}$ \\
\hline test accuracy & $87.78$ & $85.62$ & $81.32$ & $72.6$ \\
\hline
\end{tabular}
    \caption{Different values of $\eta$- TUSLA SLFN}
    \label{tab:eta}
\end{table}
We investigate the impact of $\eta$, which controls the magnitude of the regularization term
on test accuracy. When the regularized term is incorporated in optimization problems,
overfitting can be reduced by forcing the neural network to have smaller values of its parameters which
leads to a simpler model. On the other hand, the deviation between the regularized and original objective could lead to a worse performance of the model. It is interesting to see the loss of test accuracy in the SLFN model when $\eta=10^{-2}$ compared to the cases $\eta=10^{-5}$ and $\eta=10^{-4}.$

\begin{table}[h!]
    \centering
    \begin{tabular}{c|cccc}
\hline$r$ & $0.5$ & $1$ & $2$  \\
\hline test accuracy & $87.78$ & $85.195 $& $72.32$ \\
\hline
\end{tabular}
    \caption{Different values of $r$- TUSLA SLFN}
    \label{tab:r}
\end{table}
The hyperparameter $r \geq 0.5$ controls the intensity of the taming function of TUSLA. We conduct experiments with $\lambda=0.5, \bar{\beta}=10^{12}$, and different $r \in\{0.5,1,2,3\}$ and summarize the results in Table \ref{tab:r} . It turns out that the choice of an appropriate $r$ is a crucial factor for the performance of TUSLA. It is encouraged to gradually increase $r$, as a large $r$ can excessively suppress the gradient part in the formula of TUSLA.\\
It is crucial to take into account that the $(\eta,r)$ regularization is mostly needed in the absence of dissipativity. The restriction $r\geq \frac{q}{2}+1$ has been imposed to produce a dissipativity property under worse-case bounds. In practice, it is very possible that a smaller $r$  is needed for optimal performance.\\
%\newpage
\begin{table}[h!]
    \centering
    \begin{tabular}{c|cccccc}
\hline$\lambda$ & $0.5$ & $0.1$ & $0.05$ & $0.01$ & $0.005$  \\
\hline test accuracy & $87.78$ & $87.60$ & $86.52$ & $84.57$ & $83.58$ \\
\hline best epoch & 200 & 292 & 275 &475 & 441  \\
\hline
\end{tabular}
    \caption{Different values of $\lambda$-TUSLA SLFN}
    \label{tab:l}
\end{table}
We see that there is a small difference in the test accuracy for $\lambda=0.1$ and $\lambda=0.5$. For $\lambda=0.5$ we obtain the highest accuracy the quickest.
\subsubsection{Comparison between algorithms}
\begin{table}[h!]
    \centering
    \begin{tabular}{c|cc}
\hline Dataset Model & Fashion MNIST SLFN & Fashion MNIST TLFN \\
\hline TUSLA & $\mathbf{8 7 . 7 8}$ & $\mathbf{8 8 . 1 8}$ \\
ADAM & $87.65$ & $87.26$ \\
AMSgrad & $87.45$ & $87.13$ \\
RMSprop & $87.93$ & $87.99$ \\
\hline
\end{tabular}
    \caption{Comparison- best test accuracy}
    \label{tab:my_label}
\end{table}

%\begin{wrapfigure}{r}{4cm}

\begin{figure}[htbp]
    %\centering
    \begin{subfigure}[b]{0.49\textwidth}
     \includegraphics[scale=0.45]{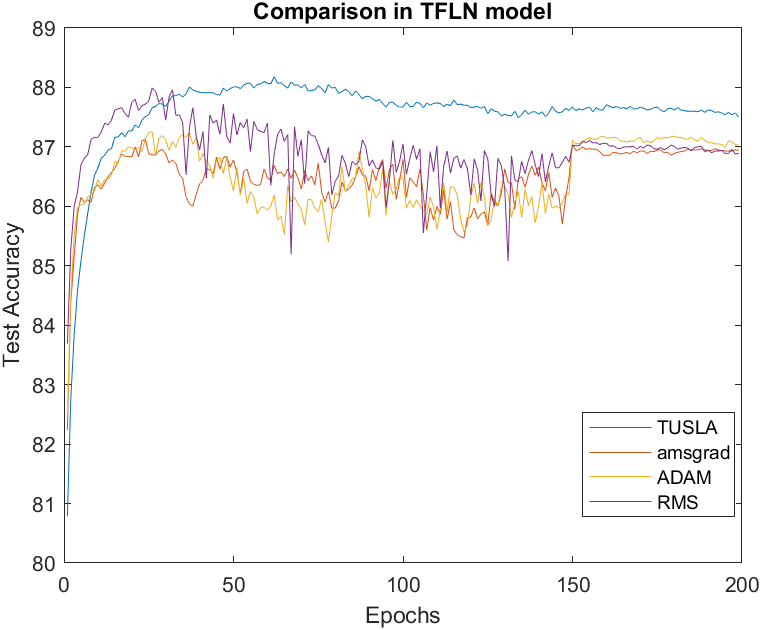}
    \caption{Performance curve in TLFN model }
    \label{fig:my_label1}
    \end{subfigure}
    \begin{subfigure}[b]{0.49\textwidth}
    \includegraphics[scale=0.45]{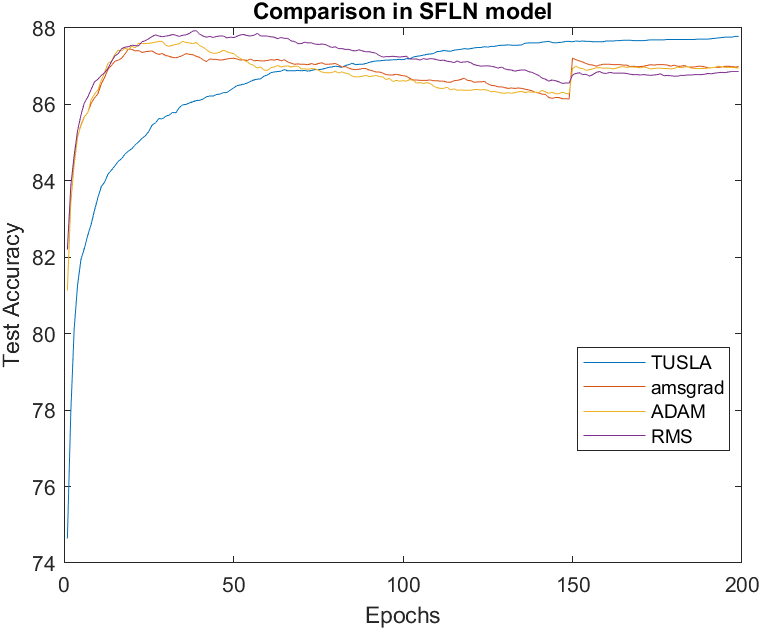}
    \caption{Performance curve in SLFN model}
    \label{fig:my_label2}
    \end{subfigure}
    \end{figure}

%\end{wrapfigure}

From the comparison table and the figures we witness that performance of TUSLA is comparable to the other algorithms and even marginally outscores them in both models. In addition, the performance curve of TUSLA is quite stable which is quite expected from an algorithm based on a tamed numerical scheme.
\newpage
\newtheorem{lemma1-experiment}{Lemma}
\begin{lemma1-experiment}
    In both models, the loss function $l(Y,\mathcal{R}(\theta,z))$ satisfies Assumption \ref{A2}.
\end{lemma1-experiment}
\begin{proof}
We shall do the proof for the SLFN model. The proof for the TFLN model follows in a similar way.
By standard calculations one obtains that
\begin{equation}\label{eq- proderiv growth}
    |\frac{\partial l}{\partial v_i}|=|\operatorname{softmax}(v)_i-u_i|\leq (1+|u_i|)
    \end{equation}
and  \begin{equation}\label{eq-proderiv Lip}
    \begin{aligned}
       &|\frac{\partial l}{\partial v_i v_j}|\leq p_ip_j\leq 1 \quad  &i\neq j \quad
       \\& |\frac{\partial l}{\partial v_i v_j}|\leq p_i(1-p_i)\leq 1 \quad & i=j
    \end{aligned}
\end{equation}
This means that the gradient of $l$ with respect to $v$ is Lispchitz and bounded.
In addition, since $\sigma_1(x)=\frac{e^x}{1+e^x}$ is easy to see that its derivative is also bounded and Lipschitz.
In addition, for the sigmoid activation function direct calculations yield
\[|\sigma_1|_{2,\infty}\leq 1\]
so $\sigma_1$ is Lipschitz and bounded.
We are now ready to analyse the behaviour of the partial derivatives of $R(\theta,z)$ with respect to $\theta.$
Since
\begin{equation}
    R^i(\theta,z)=\sum_{j=1}^N W_2^{(i,j)} \sigma_1((W_1z+b_1)_j)+(b_2)_i
\end{equation}
then,\[|\frac{\partial R^i}{\partial W_2^{i,k}}|=|\sigma_1((W_1 z+b_1)_k)|\leq 1\]
and
\[|\frac{\partial R^i}{\partial W_2^{k,j}}(\theta,z)-\frac{\partial R^i}{\partial W_2^{i,k}}(\theta',z)|\leq |(W_1z+b_1)_k-(W_1'z+b_1')_k|\leq (1+|z|)|\theta-\theta'|.\]
In addition,
\[|\frac{\partial R^i}{\partial W_1^{i,k}}|= |W_2^{(i,k)} \sigma_1^{\prime} ((W_1z+b_1)_k)z_k|\leq (1+|\theta|)(1+|z|)\]
and
\[|\frac{\partial R^i}{\partial W_1^{i,k}}(\theta,z)-\frac{\partial R^i}{\partial W_1^{i,k}}(\theta ',z)|\leq (1+|\theta|+|\theta'|)(1+|z|)^2 |\theta-\theta'|.\]
It is easy to see that the linear growth and Lipschitzness also holds for the derivatives with respect to $b_1$ and $b_2$ so
bringing all together, one obtains that
\begin{equation}\label{eq-Ri growth}
    |\frac{\partial R^i}{\partial \theta_i}(\theta,z)|\leq L_1 (1+\theta)(1+|z|)
\end{equation}
and
\begin{equation}\label{eq-Ri pol-Lip}
   |\frac{\partial R^i}{\partial \theta_i}(\theta,z)-\frac{\partial R^i}{\partial \theta_i}(\theta',z)|
   \leq L_2 (1+|\theta|+|\theta'|)(1+|z|)^2 |\theta-\theta'|.
\end{equation}
Using the chain rule one obtains for the function $G=l(Y,R(\theta,z))$,combining \eqref{eq-Ri growth}, \eqref{eq-Ri pol-Lip}, \eqref{eq-proderiv Lip} and \eqref{eq- proderiv growth} leads to
\[\begin{aligned}
   | \frac{\partial G^i}{\partial \theta_i}(\theta,y)-\frac{\partial G^i}{\partial \theta_i}(\theta',y)|&=|\frac{\partial l}{\partial R^i}(\theta,z)\frac{\partial R^i}{\partial \theta_i}(\theta,z)-\frac{\partial l}{\partial R^i}(\theta',z)\frac{\partial R^i}{\partial \theta_i}(\theta',z)|
   \\&\leq \left|\frac{\partial l}{\partial R^i}(\theta,z)\left(\frac{\partial R^i}{\partial \theta_i}(\theta,z)-\frac{\partial R^i}{\partial \theta_i}(\theta',z)\right)\right|
   \\&+\left|\frac{\partial R^i}{\partial \theta_i}(\theta',z) \left( \frac{\partial l}{\partial R^i}(\theta,z) -\frac{\partial l}{\partial R^i}(\theta',z)\right)\right|\\&\leq
   L_2'(1+|R^i|)(1+|y|)(1+|\theta|+|\theta'|)|\theta-\theta'|
   \\&+ L_1'(1+|y|) (1+|\theta'|)(1+|z|)|R^i(\theta,z)-R^i(\theta',z)|
   \\&\leq L(1+|y|) (1+|\theta|+|\theta'|)^2 (1+|z|) |\theta-\theta'|.
\end{aligned}\]
\end{proof}
\newtheorem{lemma2-experiment}{Lemma}
\begin{lemma2-experiment}
In both models, the loss function $l(Y,\mathcal{R}(\theta,z))$ satisfies Assumption \ref{A3}.
\end{lemma2-experiment}
\begin{proof}
    See Remark \ref{remarkkk}.
\end{proof}

\section{Conclusions}
We introduce a new sampling algorithm, namely TUSLA \eqref{eq:TUSLA},
which can be used within the context of empirical risk minimization for neural networks.
It does not have the stability shortcomings of other SGLD algorithms and our experiments
demonstrate this important discovery. We also provide nonasymptotic estimates for TUSLA which
explicitly bound the error between the target measure and its law in Wasserstein-$1$ and $2$ distances.
Convergence rates and explicit constants are provided too.

\newpage

\appendix

\footnotesize
\section{Proofs}

\subsection{Complementary details to Section \ref{Assumptions}}

\begin{remark}
By Assumption \ref{A2}, since the function \[\phi_{i,h}=\frac{|G(\theta,X_0)-G(\theta+he_i,X_0)|}{h}\]
can be dominated for all $i=1,\dots d$ ,$h<1$ by the random variable $Z=L_1(1+|X_0|)^\rho(2+2|\theta|)^{q-1}$ and $\E(Z)<\infty$, using a dominated convergence argument it can be concluded that partial derivation and expectation can be interchanged.
As a result, $ g\in C^1$ and consequently $h \in C^1.$
\end{remark}

\begin{proof}[\textbf{Proof of Remark \ref{remark2.3}}]
By setting $\theta'=0$ in Assumption \ref{A1} it is easy to see that
\[|G(\theta,x)-G(0,x)|\leq L_1(1+|x|)^{\rho}(1+|\theta|)^{q-1}|\theta|\leq L_1(1+|x|)^{\rho}(1+|\theta|)^{q}\leq 2^{q-1}L_1(1+|x|)^{\rho}(1+|\theta|^q)\]
which leads to
\[\begin{aligned}
   |G(\theta,x)|&\leq |G(\theta,x)-G(0,x)|+|G(0,x)|\\&\leq 2^{q-1}L_1(1+|x|)^{\rho}(1+|\theta|^q)+|G(0,x)|\\&\leq \left(2^{q-1}L_1(1+|x|)^{\rho}+|G(0,x)|\right)(1+|\theta|^q).
\end{aligned}\]
\end{proof}
\begin{proof}[\textbf{Proof of Remark \ref{remark2}}]
In view of Remark \ref{remark2.3} there holds
\[\E \langle \theta,G(\theta,X_0)\rangle\geq -\E |\theta||G(\theta,X_0)|\geq -\E |\theta|
|K(X_0)|(1+|\theta|^q).\]
As a result,
\[\langle \theta,h(\theta)\rangle \geq \eta |\theta|^{2r+2} -\E|K(X_0)||\theta|(1+|\theta|^q) \]
For the last claim, one observes that it suffices to show
\begin{equation}\label{ineq_A_and_B}
\eta |\theta|^{2r+2}-\E[K(X_0)]|\theta|(1+|\theta|^q) \ge  A|\theta|^2-B
\end{equation}
for some suitable $A$ and $B$ or, equivalently,
\[
\eta |\theta|^{2r+2} + B  \ge  A|\theta|^2 + \E[K(X_0)]|\theta|(1+|\theta|^q).
\]
Thus, setting $A= \E[K(X_0)]$ yields that \eqref{ineq_A_and_B} is satisfied with $B= \left(3\E[K(X_0)]\right)^{q+2}\eta^{-q-1}$.
\end{proof}
\begin{proof}[\textbf{Proof of Proposition \ref{Proposition 1}}]
Denote $H_{\bar{g}}$ the Hessian with respect to the antiderivative $\bar{g}$ of $g$ and  $H_{reg}$ the Hessian of the antiderivative of the regularization part. Then, the Hessian with respect to the antiderivative of $h$ is
\[H=H_{reg}+H_{\bar{g}}.\]
Let $x\in \mathbb{R}^d$. Then, since $H_{\bar{g}}$ is a symmetric matrix, it has real eigenvalues. Denote $\lambda_1(x)$ the smallest eigenvalue and $u_x$ its unit eigenvector.
One notes initially that due to the polynomial Lipchitzness of $g$,
\[|g(x+hu_x)- g(x)|\leq L_2 (1+|x|+|x+hu_x|)^{(q-1)}h\]
where $L_2=L_1\E(1+|X_0|)^{\rho}.$\\
This implies that, since $u_x$ is a unit vector,
        \[\langle g(x+hu_x)-g(x), u_x\rangle \geq -  L_2 (1+|x|+|x+hu_x|)^{(q-1)}h.\]
        Since $H_{\bar{g}}$ equals the Jacobian of the vector valued function $g$ by using a Taylor approximation for $g(x+hu_x)$ for small $h$, one obtains
        \[g(x+hu_x)-g(x)= H_{\bar{g}}(x) (hu_x) + o(h)\]
        which by the eigenvector property of $u_x$ is equivalent to \[g(x+hu_x)-g(x)= h\lambda_1(x)u_x + o(h).\]
        Multiplying by $u_x^T$ and using that $|u_x|=1$ one obtains
        \[ -  L_2 (1+|x|+|x+hu_x|)^{(q-1)}h\leq h\lambda_1(x)+o(h)\] which implies
        \[\lambda_1(x)\geq -  L_2 (1+|x|+|x+hu_x|)^{(q-1)} +\frac{o(h)}{h}. \]
Moreover, as $h\rightarrow 0$,
\[\lambda_1(x)\geq -  L_2 (1+2|x|)^{(q-1)}\]
which implies that for all eigenvalues of $H_{\bar{g}}$
\[\lambda(x) +L_2(1+2|x|)^{(q-1)}\geq 0\]
and thus the matrix $A(x)=H_{\bar{g}}(x)+ L_2(1+2|x|)^{(q-1)} I_d$ is semi-positive definite.
After some simple calculations, one deduces that \begin{equation}\label{eq-Hreg}
    H_{reg}(x)=\eta |x|^{2r}I_d +\eta 4r |x|^{2r-1}xx^T.
\end{equation}
where it is observed that the second term is semi-positive definite.
Let \begin{equation}\label{eq-R}
    R=\max\{(2^{3(q-1)+1}\frac{L_2}{\eta})^{\frac{1}{2r-q}},(2^{q}\frac{L_2}{\eta})^{\frac{1}{2r}}\}.
\end{equation}
\\For all $x$ such that $|x|>R$, one notes that
\[
\eta |x|^{2r}- L_2(1+2|x|)^{(q-1)} >0
\]
which yields that
\begin{equation*}
   \eta |x|^{2r}- L_2(1+2|x|)^{(q-1)}+L_2(1+2|R|)^{(q-1)} >0, \quad \forall \, x:\quad |x|>R.
\end{equation*}
On the other hand, if $|x|\leq R$
one obtains
\begin{equation*}
     \eta |x|^{2r}- L_2(1+2|x|)^{(q-1)}+L_2(1+2|R|)^{(q-1)}\geq 0.
\end{equation*}
Thus, one concludes that for all $x \in \mathbb{R}^d$, the matrix  \[B(x)=\eta |x|^{2r}I_d-L_2(1+2|x|)^{(q-1)}I_d+L_2(1+2|R|)^{(q-1)}I_d\] is positive definite.
As a result, the matrix $A(x)+B(x)+\eta 4r |x|^{2r-1}xx^T=H_{reg}+H_{\bar{g}}+L_2(1+2|R|)^{(q-1)}I_d$ is positive definite, which yields
\[
\langle \theta-\theta',h(\theta)-h(\theta')\rangle\geq -a |\theta-\theta'|^2,
\]
where $a=L_2(1+2|R|)^{q-1}$.
\end{proof}

\begin{proof}[\textbf{Proof of Proposition \ref{polip H}}]
Let the reguralisation part $\Theta(\theta):=\eta \theta |\theta|^{2r}$ for any $\theta \in \mathbb{R}^d$. By using the mean value theorem, one deduces
\[
|\Theta(\theta)-\Theta(\theta')|\leq ||H_{reg}(t\theta +(1-t)\theta') ||_2 |\theta-\theta'|, \quad \text{for some } t \in [0,1],
\]
where $||\cdot||_2$ denotes the spectral norm of a matrix. Due to \eqref{eq-Hreg}, one observes that
\[
||H_{reg}(x)||_2\leq \eta |x|^{2r} + \eta 4r |x|^{2r+1}\leq 4r \eta (1+2|x|^{2r+1})\leq 8r\eta (1+|x|)^{2r+1}.
\]
Thus,
\[|\Theta(\theta)-\Theta(\theta')| \leq 8r\eta(1+|t\theta+(1-t)\theta')|)^{2r+1}|\theta-\theta'|\leq 8r\eta (1+|\theta|+|\theta'|)^{2r+1}|\theta-\theta'|.\]
In view of Assumption \ref{A2}, the desired result follows.
\end{proof}

\subsection{Complementary details to Section \ref{Prelim}}

\begin{lemma} \label{moments}
Let Assumptions \ref{A2} and \ref{A3} hold. Then, for any $\lambda $ such that $0<{\lambda}\le\lambda_{max}$, one obtains for every $n \in \mathbb{N}$,
\begin{equation*}
   \E |\theta_{n+1}^\lambda|^2 \leq (1-\lambda  \kappa \eta)^n \E |\theta_0|^2+ \frac{(C_{M_0}+\frac{2d}{\beta}+\kappa\eta M_0^2)}{\kappa \eta} \quad \forall n \in \mathbb{N}
\end{equation*}
and, moreover,
\begin{equation*}
    \sup_n E|\theta_n^{\lambda}|^2< \E|\theta_0|^2 +  \frac{(C_{M_0}+\frac{2d}{\beta}+\kappa\eta M_0^2)}{\kappa \eta},
\end{equation*}
where $C_{M_0}$ is given in \eqref{eq-kappa} \eqref{eq-CM0} and $M_0$ in the proof.
\end{lemma}

\begin{proof}
\begin{equation}
    \begin{aligned} \label{eq-5}
            &2\lambda\E\left[ \langle \frac{\theta_n^{\lambda}}{|\theta_n^{\lambda}|^2},H_\lambda(\theta_n^{\lambda},X_{n+1})\rangle -\frac{\lambda}{2|\theta_n^{\lambda}|^2}|H_\lambda(\theta_n^{\lambda},X_{n+1})|^2|\theta_n^{\lambda}\right] \nonumber \\
            &\geq 2\lambda\E\left[ \langle \frac{\theta_n^{\lambda}}{|\theta_n^{\lambda}|^2},\frac{G(\theta_n^{\lambda},X_{n+1}) +\eta\theta_n^{\lambda}|\theta_n^{\lambda}|^{2r}}{1+\sqrt{\lambda} |\theta_n^{\lambda}|^{2r}}\rangle -\frac{\lambda}{2|\theta_n^{\lambda}|^2}|H_\lambda(\theta_n^{\lambda},X_{n+1})|^2|\theta_n^{\lambda}\right] \nonumber  \\
            &=2\lambda\frac{1}{|\theta_n^{\lambda}|^2(1+\sqrt{\lambda}|\theta_n^{\lambda}|^{2r})}\left( \langle \theta_n^{\lambda},\E G(\theta_n^{\lambda},X_0)\rangle+\eta|\theta_n^{\lambda}|^{2r+2}\right ) -2\lambda\frac{ 4\E[K^2(X_0)]}{|\theta_n^{\lambda}|^2}-2\lambda\eta^2 \nonumber  \\
            &\geq \lambda \left(\frac{ -2\E(K(X_0)(|\theta_n^\lambda|+|\theta_n^\lambda|^{q+1}) }{|\theta_n^{\lambda}|^2(1+\sqrt{\lambda}|\theta_n^{\lambda}|^{2r})} +\frac{\eta|\theta_n^{\lambda}|^{2r}}{2(1+\sqrt{\lambda}|\theta_n^{\lambda}|^{2r})}\right)\\&+\lambda\left(\frac{\eta|\theta_n^{\lambda}|^{2r}}{1+\sqrt{\lambda}|\theta_n^{\lambda}|^{2r}} -\frac{ 8\E[K^2(X_0)]}{|\theta_n^{\lambda}|^2}-2\eta^2\right)\\&+\lambda \frac{\eta|\theta_n^{\lambda}|^{2r}}{2(1+\sqrt{\lambda}|\theta_n^{\lambda}|^{2r})}\\&=J_1+J_2+\lambda \frac{\eta|\theta_n^{\lambda}|^{2r}}{2(1+\sqrt{\lambda}|\theta_n^{\lambda}|^{2r})}.
\end{aligned}
\end{equation}
Let $M_{0,1}>1$ such that  \[|\tn|>M_{0,1}\implies \eta |\tn|^{2r+2}\geq 4\E K(X_0)(|\tn|+|\tn|^{q+1}),\]
$M_{0,2}$ such that \[|\tn|>M_{0,2}\implies \eta \frac{|\tn|^{2r+2}}{2(1+|\tn|^{2r})}>8\E K^2(X_0)\]
and
noticing that
\[|\tn|>\eta \implies \frac{\eta |\tn|^{2r}}{2(1+\frac{1}{4\eta}|\tn|^{2r})}\geq 2\eta^2\]
then, using the fact that $\lambda \leq \min\{1,\frac{1}{16\eta^2}\}$
for $M_0=\max\{M_{0,1},M_{0,2},\eta\}$, one deduces that $J_1\geq 0$
and $J_2\geq 0$.\\
As a result, \eqref{eq-5} yields
that for $|\tn|>M_0$ there holds
\[\begin{aligned}2\lambda\E\left[ \langle \frac{\theta_n^{\lambda}}{|\theta_n^{\lambda}|^2},H_\lambda(\theta_n^{\lambda},X_{n+1})\rangle -\frac{\lambda}{2|\theta_n^{\lambda}|^2}|H_\lambda(\theta_n^{\lambda},X_{n+1})|^2|\theta_n^{\lambda}\right] &\geq \lambda \frac{\eta|\theta_n^{\lambda}|^{2r}}{2(1+\sqrt{\lambda}|\theta_n^{\lambda}|^{2r})}\\&\geq \lambda \eta \frac{|\tn|^{2r}}{2(1+\|tn|^{2r}}
\\&\geq  \lambda \eta \frac{M_0^{2r}}{2(1+M_0^{2r})}
\end{aligned}\]
where in the last step was derived from the monotonicity of the function $g(x)=x/2(1+x)$. Setting
\begin{equation}\label{eq-kappa}
    \kappa:=\frac{M_0^{2r}}{2(1+M_0^{2r})}
\end{equation} one concludes that
for $|\tn|\geq M_0$, there holds
\begin{equation}\label{eq-crucestim}
    \lambda \E \left[-2\langle \tn,H_\lambda(\tn,X_{n+1})\rangle+ |H_\lambda(\tn,X_{n+1})|^2 \big  |\tn\right]\leq -\lambda \kappa \eta |\tn|^2 .
\end{equation}
In addition, it is easy to see that for $|\tn|<M_0$
 \[\lambda \E \left[-2\langle \tn,H_\lambda(\tn,X_{n+1})\rangle+ |H_\lambda(\tn,X_{n+1})|^2\big | |\tn\right]\leq \lambda C_{M_0}\leq -\lambda \kappa \eta |\tn|^2 + \lambda (C_{M_0}+\kappa \eta M_0^2) \]
 where \begin{equation}\label{eq-CM0}
     C_{M_0}= \left(2 \E K(X_0) (M_0+M_0^{q+1}) +8\E K^2(X_0) +2\eta^2 M_0^2\right).
 \end{equation}\\
 As a result,
 \begin{equation}\label{eq-crucestim 2}
     \lambda \E \left[-2\langle \tn,H_\lambda(\tn,X_{n+1})\rangle+ |H_\lambda(\tn,X_{n+1})|^2|\right]\leq -\lambda \eta \kappa |\tn|^2 + \lambda C_{M_0} +\kappa \eta M_0^2 .
 \end{equation}
 We are now ready to derive an estimate for the second moments of our algorithm.\\
 Writing \[\begin{aligned}
         \E [|\theta^\lambda_{n+1}|^2\big |\tn]&=\E \left[ \left(\tn -\lambda H_\lambda(\tn,X_{n+1}) +\sqrt{\frac{2\lambda}{\beta}
}|\xi_{n+1}|\right)^2\big | \tn \right]\\&=|\tn|^2 \lambda \E \left[-2\langle \tn,H_\lambda(\tn,X_{n+1})\rangle+ |H_\lambda(\tn,X_{n+1})|^2|\right] + \frac{2\lambda}{\beta} \E |\xi_{n+1}|^2
\\&\leq |\tn|^2 -\lambda \eta \kappa |\tn|^2 + \lambda C_{M_0}+\lambda \kappa \eta M_0^2 +\frac{2\lambda}{\beta}d
\\&= (1-\lambda  \kappa \eta)|\tn|^2 + \lambda (C_{M_0}+\frac{2d}{\beta}+\kappa \eta M_0^2)
 \end{aligned}\]
 where the first step was derived by the independence of the normal random variable $\xi_{n+1}$ with respect to $\tn$ and the second as a result of \eqref{eq-crucestim}.
 Taking expectations one obtains
 \[\E |\theta_{n+1}^\lambda|^2 \leq (1-\lambda  \kappa \eta)\E |\tn|^2 + \lambda (C_{M_0}+\frac{2d}{\beta}) \quad \forall n\in \mathbb{N}. \]
 By iteration one concludes that
 \[\E |\theta_{n+1}^\lambda|^2 \leq (1-\lambda  \kappa \eta)^n \E |\theta_0|^2+ \frac{(C_{M_0}+\frac{2d}{\beta}+\kappa\eta M_0^2)}{\kappa \eta} \quad \forall n \in \mathbb{N}.\]
 Taking the supremum over $n$ completes the proof.
\end{proof}
\begin{proof}[\textbf{Proof of Lemma \ref{pmoments}}]
First one defines, for every $n \in \mathbb{N}$,
\begin{equation} \label{Delta}
\Delta_n: =\theta^{\lambda}_n-\lambda H_\lambda(\theta^{\lambda}_n,X_{n+1}).
\end{equation}
Then, one calculates that, for any integer $p> 1$ (since the case $p=1$ is covered by Lemma \ref{moments}),
\[
|\theta^\lambda_{n+1}|^{2p} = \left(|\Delta_n|^2+\frac{2\lambda}{\beta}|\xi_{n+1}|^2+2\langle \Delta_n,\sqrt{\frac{2\lambda}{\beta}}\xi_{n+1}\rangle\right)^p.
\]
Hence,
\begin{align} \label{eq-14}
   \hspace{-12pt} \mathbb{E}\left[|\theta^\lambda_{n+1}|^{2p}|\theta_n^{\lambda}\right]&=\E \left[\left(|\Delta_n|^2+\frac{2\lambda}{\beta}|\xi_{n+1}|^2+2\langle \Delta_n,\sqrt{\frac{2\lambda}{\beta}}\xi_{n+1}\rangle\right)^p|\theta_n^{\lambda} \right]
    \nonumber \\
    &=\sum_{k_1+k_2+k_3=p}\frac{p !}{k_{1} ! k_{2} ! k_{3} !} \E\left[|\Delta_n|^{2k_1}\left|\sqrt{\frac{2\lambda}{\beta}}\xi_{n+1}\right|^{2k_2}\left(2\langle \Delta_n,\sqrt{\frac{2\lambda}{\beta}}\xi_{n+1}\rangle\right)^{k_3}|\theta_n^{\lambda}\right]
    \nonumber \\
    &\leq \E[|\Delta_n|^{2p}|\theta_n^{\lambda}]+2p\E\left[|\Delta_n|^{2p-2}\langle \Delta_n,\sqrt{\frac{2\lambda}{\beta}}\xi_{n+1}\rangle|\theta_n^{\lambda}\right]
    \nonumber \\
    & \phantom{\leq \E[|\Delta_n|^{2p}|\theta_n^{\lambda}]} +\sum_{k=2}^{2p}\binom{2p}{k}\E\left[|\Delta_n|^{2p-k}\left|\sqrt{\frac{2\lambda}{\beta}} \xi_{n+1}\right|^k|\theta_n^{\lambda}\right]
    \nonumber \\
    & \leq \E[|\Delta_n|^{2p}|\theta_n^{\lambda}]+\E\bigg[\sum_{l=0}^{2(p-1)}\binom{2p}{l+2}\bigg(|\Delta_n|^{2(p-1)-l}  \left|\sqrt{\frac{2\lambda}{\beta}}\xi_{n+1}\right|^{(q-1)}\bigg)\\&\phantom{\E[|\Delta_n|^{2p}|\theta_n^{\lambda}]+\E\bigg[\sum_{l=0}^{2(p-1)}\binom{2p}{l+2}\bigg(|\Delta_n|^{2(p-1)-l}}\times \frac{2\lambda}{\beta}|\xi_{n+1}|^2     |\theta_n^{\lambda}\bigg]
    \nonumber \\
    &=\E[|\Delta_n|^{2p}|\theta_n^{\lambda}]+\E\bigg[\binom{2p}{2}\sum_{l=0}^{2(p-1)}\binom{2(p-1)}{l}\bigg(|\Delta_n|^{2(p-1)-l}  \bigg|\sqrt{\frac{2\lambda}{\beta}}\xi_{n+1}\bigg|^l\bigg)\\&\phantom{\E[|\Delta_n|^{2p}|\theta_n^{\lambda}]+\E\bigg[\binom{2p}{2}\sum_{l=0}^{2(p-1)}\binom{2(p-1)}{l}\bigg(|\Delta_n|^{2(p-1)-l} }\times\frac{2\lambda}{\beta}|\xi_{n+1}|^2     |\theta_n^{\lambda}\bigg]
    \nonumber \\
    &\leq \E[|\Delta_n|^{2p}|\theta_n^{\lambda}] + 2^{2p-3}p(2p-1)\E[|\Delta_n|^{2p-2}|\theta_n^{\lambda}] \frac{2\lambda}{\beta}d\\&\phantom{\E[|\Delta_n|^{2p}}+2^{2p-3}p(2p-1)\left(\frac{2\lambda}{\beta}\right)^p\E|\xi_{n+1}|^{2p}.
\end{align}
Let us also define, for every $n \in \mathbb{N}$,
\begin{equation} \label{r_n}
r_n :=-2\lambda \langle \theta_n^{\lambda},H_\lambda(\theta_n^{\lambda},X_{n+1})\rangle+\lambda^2 |H_\lambda(\theta_n^{\lambda},X_{n+1})|^2
\end{equation}
and observe that, due to \eqref{Delta},
 \[
 |\Delta_n|^2=|\theta_n^{\lambda}|^2+r_n.
 \]
Consequently,
\begin{align} \label{eq:Delta_n-intermediate}
    \E[|\Delta_n|^{2p}|\theta_n^{\lambda}]&=\sum_{k=0}^p\binom{p}{k}|\theta_n^{\lambda}|^{2(p-k)}\E\left[r_n^k|\theta_n^{\lambda}\right]
    \nonumber \\&=|\theta_n^{\lambda}|^{2p} +p |\theta_n^{\lambda}|^{2p-2}\E [r_n |\theta_n^{\lambda}]+ \sum_{k=2}^p\binom{p}{k}|\theta_n^{\lambda}|^{2(p-k)}\E\left[r_n^k|\theta_n^{\lambda}\right]
\end{align}
Let us also define the constant $M$ by the following expression
\begin{equation} \label{M}
\begin{aligned}
    M&:=\max\{M_0,1,\max_{2\leq k\leq p}{{\left(\frac{1}{\kappa}{\binom{p}{k}\binom{k}{\lceil\frac{k}{2}\rceil}}2^{4k}(1+ \E[K^{2k}(X_0))] \frac{4(p+1)}{\eta}\right)^{\frac{1}{k}}}},\\&\max_{2\leq k\leq p-1}{{\left(\frac{1}{\kappa}{\binom{p-1}{k}\binom{k}{\lceil\frac{k}{2}\rceil}}2^{4k}(1+ \E[K^{2k}(X_0))] \frac{4p}{\eta}\right)^{\frac{1}{k}}}}, \sqrt{2^{2p-3}(2p-1)p\frac{d}{\beta \eta}}\}.
    \end{aligned}
\end{equation}
When $|\theta_n^{\lambda}|>M$ and due to the fact that $\lambda \le 1$, see \eqref{lambda_max}, one obtains
\begin{align} \label{1}
    |\lambda\langle \theta^\lambda_n,H_\lambda(\theta^\lambda_n,X_{n+1})\rangle|&\leq \frac{\lambda K(X_{n+1})(1+|\theta^\lambda_n|^q)|\theta^\lambda_n|+\lambda\eta |\theta^\lambda_n|^{2r+2}}{1+\sqrt{\lambda}|\theta^\lambda_n|^{2r}} \nonumber\\
    &\leq \frac{\lambda K(X_{n+1})(|\theta^\lambda_n|+|\theta_n|^{q+1})}{1+\sqrt{\lambda}|\theta^\lambda_n|^{2r}}+\sqrt{\lambda}\eta |\theta^\lambda_n|^2 \nonumber\\
    &\leq \frac{\lambda K(X_{n+1})(2+2|\theta^\lambda_n|^{2r})}{1+\sqrt{\lambda}|\theta^\lambda_n|^{2r}}+\sqrt{\lambda}\eta |\theta^\lambda_n|^2 \nonumber\\
    &\leq \frac{2\sqrt{\lambda} K(X_{n+1})(\sqrt{\lambda}+\sqrt{\lambda}|\theta^\lambda_n|^{2r})}{1+\sqrt{\lambda}|\theta^\lambda_n|^{2r}}+\sqrt{\lambda}\eta |\theta^\lambda_n|^2 \nonumber \\
    &\leq 2\sqrt{\lambda}K(X_{n+1})+\sqrt{\lambda}\eta |\theta^\lambda_n|^2 \nonumber \\
    &\leq(\sqrt{a_n}+\sqrt {b_n})|\theta^\lambda_n|
\end{align}
where $a_n=2\sqrt{\lambda}K(X_{n+1})$ and $b_n=\sqrt{\lambda} \eta |\theta^\lambda_n|$ since $|\theta_n^{\lambda}| > M \ge 1$.
In addition,
\begin{align} \label{2}
    |\lambda^2H_\lambda^2(\theta^\lambda_n,X_{n+1})|&\leq \frac{2\lambda^2K^2(X_{n+1})(1+|\theta^\lambda_n|^q)^2+2\lambda^2\eta^2 |\theta^\lambda_n|^{4r+2}}{1+\lambda|\theta^\lambda_n|^{4r}} \nonumber \\
    &\leq
    \frac{4\lambda^2K^2(X_{n+1})(1+|\theta^\lambda_n|^{2q})}{1+\lambda |\theta^\lambda_n|^{4r}}
    +2\lambda \eta^2|\theta^\lambda_n|^2 \nonumber \\
    &\leq
    \frac{4\lambda^2K^2(X_{n+1})(2+|\theta^\lambda_n|^{4r})}{1+\lambda |\theta^\lambda_n|^{4r}}
    +2\lambda \eta^2|\theta^\lambda_n|^2 \nonumber \\
    &\leq
    \frac{4\lambda K^2(X_{n+1})(2\lambda+\lambda|\theta^\lambda_n|^{4r})}{1+\lambda |\theta^\lambda_n|^{4r}}
    +2\lambda \eta^2|\theta^\lambda_n|^2 \nonumber \\
    &\leq 8\lambda K^2(X_{n+1})  +2\lambda \eta^2|\theta^\lambda_n|^2 \nonumber \\
    &=2 a_n+2 b_n.
\end{align}

Observing that, due to \eqref{r_n}, \eqref{1} and \eqref{2},
\begin{align*}
    r_n^k&=\sum_{j=0}^k\binom{k}{j}2^{k-j}(\sqrt{a_n}+\sqrt{b_n})^{k-j}2^j(a_n+b_n)^j|\theta^\lambda_n|^{k-j}\\
    &=\sum_{j=0}^k\binom{k}{j}2^k((\sqrt{a_n}+\sqrt{b_n})^2)^{\frac{k-j}{2}}(a_n+b_n)^j|\theta^\lambda_n|^{k-j}\\&=\sum_{j=0}^k\binom{k}{j}2^k2^{\frac{k-j}{2}}(a_n+b_n)^{\frac{k-j}{2}}(a_n+b_n)^j|\theta^\lambda_n|^{k-j}\\&=\sum_{j=0}^k\binom{k}{j}2^{\frac{k+j}{2}}(a_n+b_n)^{\frac{k+j}{2}}|\theta^\lambda_n|^{k-j}\\&\leq \sum_{j=0}^k\binom{k}{j} 2^{k+j}(a_n^{\frac{k+j}{2}}+b_n^{\frac{k+j}{2}})|\theta^\lambda_n|^{k-j}
\end{align*}
yields that
    \[
    \hspace{-10pt}\E[r_n^k|\theta^\lambda_n]\leq \sum_{j=0}^k\binom{k}{j} 2^{2(k+j)}\lambda^{\frac{k+j}{2}}\E[ K^{k+j}(X_0)]|\theta^\lambda_n|^{k-j}+\sum_{j=0}^k\binom{k}{j}2^{k+j}\lambda^\frac{k+j}{2}\eta^{k+j}|\theta^\lambda_n|^{k+j}|\theta^\lambda_n|^{k-j}.
    \]
Consequently, and in view of \eqref{eq:Delta_n-intermediate},
\begin{align*}
     \hspace{-18pt}   \E[|\Delta_n|^{2p}|\theta^\lambda_n] \leq & |\theta_n^{\lambda}|^{2p} +p |\theta_n^{\lambda}|^{2p-2}\E [r_n |\theta_n^{\lambda}] + \sum_{k=2}^p\binom{p}{k}|\theta^\lambda_n|^{2p-2k} \\ & \times\left[\sum_{j=0}^k\binom{k}{j} 2^{2(k+j)}\lambda^{\frac{k+j}{2}}\E[ K^{k+j}(X_0)] |\theta^\lambda_n|^{k-j}+ |\theta^\lambda_n|^{2k} \sum_{j=0}^k\binom{k}{j}2^{k+j}\lambda^\frac{k+j}{2}\eta^{k+j}.\right]
\end{align*}
Moreover, due to \eqref{eq-crucestim},
 \[
 p |\theta_n^{\lambda}|^{2p-2}\E [r_n |\theta^\lambda_n]\leq -p\frac{1}{2}\lambda \kappa \eta |\theta_n^{\lambda}|^{2p},
 \]
 and thus one obtains
\begin{align*}
     \hspace{-15pt}    \E[|\Delta_n|^{2p}|\theta^\lambda_n] & \leq |\theta_n^{\lambda}|^{2p}-p\frac{1}{2}\lambda \kappa \eta |\theta_n^{\lambda}|^{2p}+ \sum_{k=2}^p\binom{p}{k}|\theta^\lambda_n|^{2p-2k}\bigg[\sum_{j=0}^k\binom{k}{j} 2^{2(k+j)}\lambda^{\frac{k+j}{2}}\E[K^{k+j}(X_0)]\\
           & \phantom{\leq |\theta_n^{\lambda}|^{2p}} \times\frac{|\theta^\lambda_n|^{2k}}{|\theta^\lambda_n|^{k+j}}\bigg]+\sum_{k=2}^p\binom{p}{k}|\theta^\lambda_n|^{2p}
           \left[\sum_{j=0}^k\binom{k}{j}2^{k+j}\lambda^\frac{k+j}{2}\eta^{k+j}\right]
          \\
          &\leq |\theta^\lambda_n|^{2p}-\frac{1}{2}p\lambda \kappa \eta|\theta^\lambda_n|^{2p} +\sum_{k=2}^p\binom{p}{k}|\theta^\lambda_n|^{2p}\bigg[\sum_{j=0}^k\binom{k}{j} 2^{2(k+j)}\lambda^{\frac{k+j}{2}}\\
          & \phantom{\leq |\theta_n^{\lambda}|^{2p}}\times \E[K^{k+j}(X_0)] (\frac{1}{M})^{k+j}\bigg]   +\sum_{k=2}^p\binom{p}{k}|\theta^\lambda_n|^{2p}\bigg[\sum_{j=0}^k\binom{k}{j}2^{k+j}\lambda^\frac{k+j}{2}\eta^{k+j}\bigg]
          \\&\leq |\theta^\lambda_n|^{2p}-\frac{1}{2}p\lambda \kappa \eta|\theta^\lambda_n|^{2p}\\
          & +|\theta^\lambda_n|^{2p}\sum_{k=2}^p\binom{p}{k}\bigg[\sum_{j=0}^k\binom{k}{j} 2^{2(k+j)}\lambda^{\frac{k+j}{2}}\E K^{k+j}(X_0)\left(\frac{1}{M}\right)^{k}\\&\phantom{+|\theta^\lambda_n|^{2p}\sum_{k=2}^p\binom{p}{k}\bigg[\sum_{j=0}^k\binom{k}{j} 2^{2(k+j)}\lambda^{\frac{k+j}{2}}}+\sum_{j=0}^k\binom{k}{j}2^{k+j}\lambda^\frac{k+j}{2}\eta^{k+j}\bigg]
\end{align*}
      Applying the previous relation for $p-1$ and bringing it all together using \eqref{eq-14},
\begin{align} \label{eq- chief}
      \hspace{-27pt}    \E[|\theta^\lambda_{n+1}|^{2p}|\theta^\lambda_n]&\leq |\theta^\lambda_n|^{2p}-\frac{1}{2}p\lambda \kappa \eta|\theta^\lambda_n|^{2p} \nonumber \\
          &+|\theta^\lambda_n|^{2p}\sum_{k=2}^p\binom{p}{k}\sum_{j=0}^k\binom{k}{j} 2^{2(k+j)}\lambda^{\frac{k+j}{2}}\E[K^{k+j}(X_0) \left(\frac{1}{M}\right)^{k}\nonumber \\
          &+|\theta^\lambda_n|^{2p}\sum_{k=2}^p\binom{p}{k}\sum_{j=0}^k\binom{k}{j}2^{k+j}\lambda^\frac{k+j}{2}\eta^{k+j} \\
          &+
          2^{2p-3}(2p-1)2\frac{\lambda}{\beta}d( |\theta^\lambda_n|^{2p-2}-\frac{1}{2}(p-2)\lambda \kappa \eta|\theta^\lambda_n|^{2p-2} \nonumber \\
          &+|\theta^\lambda_n|^{2p-2}\sum_{k=2}^{p-1}\binom{p-1}{k}\sum_{j=0}^k\binom{k}{j} 2^{2(k+j)}\lambda^{\frac{k+j}{2}}
          \E[K^{k+j}(X_0)] \left(\frac{1}{M}\right)^{k}
          \nonumber  \\
          &+|\theta^\lambda_n|^{2p-2}\sum_{k=2}^{p-1}\binom{p-1}{k}\sum_{j=0}^k\binom{k}{j}2^{k+j}\lambda^\frac{k+j}{2}\eta^{k+j}+2^{2p-3}(2p-1)\left(\frac{2\lambda}{\beta}\right)^p \E|\xi_{n+1}|^{2p}.
\end{align}
We now show that the restriction $\lambda \leq \min\{1,\frac{\kappa^2}{4\eta\left(8(p+1)\binom{p}{\lceil \frac{p}{2}\rceil}^2\right)^2 }\} $ yields the desired result. We start by showing that
\[
\frac{(p-2)}{4}\lambda \kappa \eta >\sum_{k=2}^p\binom{p}{k}[\sum_{j=0}^k\binom{k}{j} 2^{2(k+j)}\lambda^{\frac{k+j}{2}} \E[K^{k+j}(X_0)]\left(\frac{1}{M}\right)^{k}.
\]
Since for all $0 \leq j\leq k$, $2\leq k \leq p$
\begin{align*}
          \lambda^{\frac{k+j-2}{2}}&\leq 1\\&\leq \frac{\kappa \eta M^{k}}{4(p+1)\binom{p}{k}\binom{k}{\lceil\frac{k}{2}\rceil}2^{4k}(1+ \E[K^{2k}(X_0)])}\\&\leq
          \frac{\kappa \eta M^{k}}{4(p+1)\binom{p}{k}\binom{k}{j}2^{2(k+j)}\E[K^{k+j}(X_0)]},
\end{align*}
one deduces that, for $0 \leq j\leq k$, $2\leq k\leq p $
\[
\frac{\lambda \kappa \eta}{4(p+1)}\geq \binom{p}{k}\binom{k}{j} 2^{2(k+j)}\lambda^{\frac{k+j}{2}} \E[K^{k+j}(X_0)]\left(\frac{1}{M}\right)^{k}
\]
which yields that
\[
(p+1)\frac{\lambda \kappa \eta}{4(p+1)}\geq (k+1)\frac{\lambda \kappa \eta}{4(p+1)}\geq \binom{p}{k}\sum_{j=0}^k\binom{k}{j} 2^{2(k+j)}\lambda^{\frac{k+j}{2}} \E[K^{k+j}(X_0)]\left(\frac{1}{M}\right)^{k}
\]
Consequently,
\begin{equation}\label{eq-28}
 \hspace{-16pt}       \frac{1}{4}p\lambda \kappa \eta \geq \frac{1}{2} \lambda \kappa \eta + \frac{(p-2)}{4}\lambda \kappa \eta \geq \frac{1}{2} \lambda \kappa \eta+ \sum_{k=2}^p\binom{p}{k}[\sum_{j=0}^k\binom{k}{j} 2^{2(k+j)}\lambda^{\frac{k+j}{2}}\E K^{k+j}(X_0)\left(\frac{1}{M}\right)^{k}.
\end{equation}
Moreover, for $0\leq j \leq k$ and $2\leq k\leq p$,
\[
\lambda\leq \frac{\kappa}{4\eta\left(8(p+1)\binom{p}{\lceil \frac{p}{2}\rceil}^2\right)^2 }\leq \frac{\kappa}{4\frac{\kappa}{2}\eta\left(8(p+1)\binom{p}{\lceil \frac{p}{2}\rceil}^2\right)^{\frac{2}{k+j-1}}, }
\]
and thus,
    \[\begin{aligned}
    \lambda^{\frac{k+j-2}{2}}&\leq \frac{\kappa^{k+j-2} }{4^{\frac{k+j-1}{2}}8(p+1)\binom{p}{\lceil \frac{p}{2}\rceil}^2\eta^{k+j-1}}\\&\leq \frac{\kappa }{2^{k+j}4(p+1)\binom{p}{\lceil \frac{p}{2}\rceil}^2\eta^{k+j-1}}\\&\leq \frac{\kappa \eta }{4(p+1)\binom{k}{j}\binom{p}{k}2^{k+j} \eta^{k+j}}
    \end{aligned}
    \]
     which leads to
     \begin{equation}\label{eq-31}
          \frac{p-2}{4}\lambda \kappa \eta>\sum_{k=2}^{p}\binom{p}{k}\sum_{j=0}^k\binom{k}{j}2^{k+j}\lambda^\frac{k+j}{2}\eta^{k+j}.
     \end{equation}
    The combination of the inequalities \eqref{eq-28}, \eqref{eq-31} yields
 \begin{align} \label{eq-32}
      \E [|\Delta_n|^{2p}|\theta^\lambda_n]  \leq & |\theta^\lambda_n|^{2p} -\frac{1}{2}p\lambda \kappa \eta|\theta^\lambda_n|^{2p} \nonumber \\
      & + |\theta^\lambda_n|^{2p}\sum_{k=2}^p\binom{p}{k}\bigg[\sum_{j=0}^k\binom{k}{j} 2^{2(k+j)}\lambda^{\frac{k+j}{2}} \E[K^{k+j}(X_0)] \left(\frac{1}{M}\right)^{k}\\&\phantom{|\theta^\lambda_n|^{2p}\sum_{k=2}^p\binom{p}{k}\bigg[\sum_{j=0}^k\binom{k}{j} 2^{2(k+j)}\lambda^{\frac{k+j}{2}}} +\sum_{j=0}^k\binom{k}{j}2^{k+j}\lambda^\frac{k+j}{2}\eta^{k+j}\bigg] \nonumber \\
       \leq & (1-\lambda \kappa \eta)|\theta^\lambda_n|^{2p}.
 \end{align}
 Using similar arguments for $p-1$ leads to
    \begin{equation}\label{eq- p-1}
          \E [|\Delta_n|^{2p-2}|\theta^\lambda_n]\leq(1-\lambda \kappa \eta)|\theta^\lambda_n|^{2p-2}\leq \frac{1}{M^2}|\theta^\lambda_n|^{2p}
    \end{equation}
Thus, when $|\theta^\lambda_n|\geq M$, and in view of \eqref{eq-14}, \eqref{eq-32},\eqref{eq- p-1} and \eqref{M}, one obtains
\begin{align} \label{eq-34}
          \E[|\theta^\lambda_{n+1}|^{2p}\il|\theta^\lambda_n]\leq & (1-{\lambda}\kappa\eta)|\theta^\lambda_n|^{2p}+\frac{2^{2p-3}p(2p-1)\lambda d}{\beta M^2}|\theta^\lambda_n|^{2p}\il \nonumber \\
          & +2^{2p-3}(2p-1)p\left(\frac{2\lambda}{\beta}\right)^p \E|\xi_{n+1}|^{2p}\il
          \nonumber \\
          \leq  & (1-\frac{\kappa}{2}\lambda \eta)|\theta^\lambda_n|^{2p}\il+2^{2p-3}(2p-1)p\left(\frac{2\lambda}{\beta}\right)^p \E|\xi_{n+1}|^{2p}\il.
\end{align}
When $|\theta^\lambda_n|<M$, one observes that
\begin{align*}
        |\Delta_n|^{2p} &\leq |\theta^\lambda_n|^{2p}+\sum_{k=0}^{p-1} \binom{p}{k}|r_n|^{p-k}|\theta^\lambda_n|^{2k}\\
        &\leq
       (1-\lambda\frac{\kappa}{2}\eta)|\theta^\lambda_n|^{2p}+\lambda\frac{\kappa}{2}\eta M^{2p} +\sum_{k=0}^{p-1} \binom{p}{k}2^{p-k}M^{2k}
        (\lambda^{2(p-k)} |H_\lambda(\theta^\lambda_n,X_{n+1})|^{2(p-k)} \\
        & \phantom{\leq (1-\lambda\frac{\kappa}{2}\eta)|\theta^\lambda_n|^{2p}} +
        \lambda^{p-k} M ^{2(p-k)} |H_\lambda(\theta^\lambda_n,X_{n+1})|^{p-k}).
\end{align*}
Analysing the terms,
 \begin{align*}
 \hspace{-4pt}     \lambda ^{p-k} |H_\lambda(\theta^\lambda_n,X_{n+1})|^{p-k} &\leq \lambda^{p-k} (K(X_{n+1})(1+|\theta^\lambda_n|^q)+\eta |\theta^\lambda_n|^{2r+1})^{p-k}\\
      &\leq 2^{p-k}\lambda^{p-k}(\lambda^{p-k}( K(X_{n+1})^{p-k}(1+M^q)^{p-k} + \eta^{p-k} M^{(2r+1)(p-k)})
 \end{align*}
one obtains
\begin{align} \label{complement-2p}
    \E[|\Delta_n|^{2p}\ilp|\theta^\lambda_n] \leq &(1-\lambda\frac{\kappa}{2}\eta)|\theta^\lambda_n|^{2p}\ilp+(\lambda\frac{\kappa}{2}\eta M^{2p} \nonumber \\
    & +\lambda{\sum_{k=0}^{p-1} \binom{p}{k}2^{p-k}M^{2k}(R_{\lambda,M,p,\eta}^2+ M^{2(p-k)} R_{M,p,\eta})}\ilp \nonumber \\
    = & (1-\lambda\frac{\kappa}{2}\eta)|\theta^\lambda_n|^{2p}\ilp+\lambda\frac{\kappa}{2}\eta M^{2p}+ \lambda C(\eta,p,M)\ilp.
\end{align}
where
\[
R_{M,p,\eta}=2^{p-k}\left( \E[K(X_0)^{p-k}](1+M^q)^{p-k} + \eta^{p-k} M^{(2r+1)(p-k)}\right)
\]
and
\[
C(\eta,p,M)={\sum_{k=0}^{p-1} \binom{p}{k}2^{p-k}M^{2k}(R_{\lambda,M,p,\eta}^2+ M^{2(p-k)} R_{M,p,\eta})}.
\]
Moreover, in a similar way to \eqref{complement-2p}, one concludes that
\[
\E[|\Delta_n|^{2p-2}\ilp|\theta^\lambda_n]\leq M^{2p-2}\ilp + \lambda C(\eta,p,M)\ilp,
\]
and hence
\begin{align} \label{eq-30}
    \E\left[|\theta^\lambda_{n+1}|^{2p}\ilp|\theta^\lambda_n\right]&\leq(1-\lambda \frac{\kappa}{2}\eta)|\theta^\lambda_n|^{2p}\ilp \nonumber \\
    &+ \lambda\bigg( C(\eta,p,M)+ \frac{\kappa}{2}\eta M^{2p}+2^{2p-3}p(2p-1)( C(\eta,p-1,M) \\&\phantom{+ \lambda\bigg( C(\eta,p,M)+ \frac{\kappa}{2}\eta M^{2p}+2^{2p-3}p(2p-1)}+M^{2p-2})\frac{2}{\beta }d\bigg)\ilp \nonumber
    \\&+\lambda\left(2^{2p-3}p(2p-1)\left(\frac{2}{\beta}\right)^p\E|\xi_{n+1}|^{2p}\right)\ilp. \nonumber \\
    &\leq (1-\lambda \frac{\kappa}{2}\eta)|\theta^\lambda_n|^{2p}\ilp +\lambda A_p \ilp
\end{align}
where
\begin{equation}\label{eq-Ap}
\begin{aligned}
A_p&= C(\eta,p,M)+ \frac{\kappa}{2}\eta M^{2p}+2^{2p-3}p(2p-1)( C(\eta,p-1,M)+M^{2p-2})\frac{2}{\beta }d\\&+2^{2p-3}p(2p-1)\left(\frac{2}{\beta}\right)^p\E|\xi_{n+1}|^{2p}.
\end{aligned}
\end{equation}
Adding (\ref{eq-34}) and (\ref{eq-30}), one obtains
\begin{align*}
    \E|\theta^\lambda_{n+1}|^{2p}\leq & (1-\lambda \frac{\kappa}{2}\eta)\E|\theta^\lambda_n|^{2p} +\lambda A_{p}\leq  (1-\lambda \frac{\kappa}{2}\eta)^n \E |\theta_0|^{2p} +\frac{2}{\kappa \eta}A_{p} \nonumber \\
    \leq & (1-\lambda\frac{\kappa}{2}\eta)^n \E |\theta_0|^{2p}+C'_{p}
\end{align*}
where, in view of \eqref{eq-Ap},
\begin{equation}\label{eq-C'p}
    C'_{p}=  \frac{2}{\kappa \eta}A_p
\end{equation}
which yields the desired result.
\end{proof}

\begin{proof}[\textbf{Proof of Lemma \ref{V4_bound_continuous}}]
This is an immediate consequence of Remark \ref{same_moments} and the definition of the Lyapunov function as given in \eqref{def:Lyapunov} with $m=4$.\\
More specifically, \[\E (V_4(\bar{\theta}^\lambda_{nT})= \E(1+|\bar{\theta}^\lambda_{nT}|^2)^2\leq 2 +2 \E ||\bar{\theta}^\lambda_{nT}|^4\leq 2+ 2 \E |\theta_0|^{2p} +C'_p.\]
\end{proof}

\begin{proof}[\textbf{Proof of Lemma \ref{Lemma citation}}] See \cite[ Lemma 3.5]{nonconvex}.

\end{proof}

\begin{proof}[\textbf{Proof of Lemma \ref{aux_moments}}]
For $p\geq 1,$ application of Ito's lemma and taking expectation yields
\begin{equation*}
    \mathbb{E}\left[V_{p}\left(\bar{\zeta}_{t}^{\lambda, n}\right)\right]=\mathbb{E}\left[V_{p}\left(\bar{\theta}_{n T}^{\lambda}\right)\right]+\int_{n T}^{t} \mathbb{E}\left[\lambda \frac{\Delta V_{p}\left(\bar{\zeta}_{s}^{\lambda, n}\right)}{\beta}-\lambda\left\langle h\left(\bar{\zeta}_{s}^{\lambda, n}\right), \nabla V_{p}\left(\bar{\zeta}_{s}^{\lambda, n}\right)\right\rangle\right] \mathrm{d} s.
\end{equation*}
Differentiating both sides and using Lemma \ref{Lemma citation}, we obtain
\[\begin{aligned}
\frac{\mathrm{d}}{\mathrm{d} t} \mathbb{E}\left[V_{p}\left(\bar{\zeta}_{t}^{\lambda, n}\right)\right]&=\mathbb{E}\left[\lambda \frac{\Delta V_{p}\left(\bar{\zeta}_{t}^{\lambda, n}\right)}{\beta}-\lambda\left\langle h\left(\bar{\zeta}_{t}^{\lambda, n}\right), \nabla V_{p}\left(\bar{\zeta}_{t}^{\lambda, n}\right)\right\rangle\right] \\&\leq-\lambda \bar{c}(p) \mathbb{E}\left[V_{p}\left(\bar{\zeta}_{t}^{\lambda, n}\right)\right]+\lambda \tilde{c}(p)
\end{aligned}\]
which yields
\begin{align*}
\mathbb{E}\left[V_{p}\left(\bar{\zeta}_{t}^{\lambda, n}\right)\right] & \leq e^{-\lambda(t-n T) \bar{c}(p)} \mathbb{E}\left[V_{p}\left(\bar{\theta}_{n T}^{\lambda}\right)\right]+\frac{\tilde{c}(p)}{\bar{c}(p)}\left(1-e^{-\lambda \bar{c}(p)(t-n T)}\right) \\
& \leq e^{-\lambda(t-n T) \bar{c}(p)} \mathbb{E}\left[V_{p}\left(\bar{\theta}_{n T}^{\lambda}\right)\right]+\frac{\tilde{c}(p)}{\bar{c}(p)}.
\end{align*}
For p=2:
\begin{align*}
\mathbb{E}\left[V_{2}\left(\bar{\zeta}_{t}^{\lambda, n}\right)\right] \leq & e^{-\lambda(t-n T) \bar{c}(2)} \mathbb{E}\left[V_{2}\left(\bar{\theta}_{n T}^{\lambda}\right)\right]+\frac{\tilde{c}(2)}{\bar{c}(2)}
\\ \leq & (1-\sqrt{\lambda}\frac{\eta}{2} )^{n T} e^{-\lambda(t-n T) \bar{c}(2)} \mathbb{E}\left[V_{2}\left(\theta_{0}\right)\right]+\frac{\tilde{c}(2)}{\bar{c}(2)} \\
&+2\left(C_{X}\eta^{-1} + 2M_0^2(2 + \eta)+2d(\eta\beta)^{-1}\sqrt{\lambda_{max}}\right)+1
\\ \leq & \mathbb{E}\left[V_{2}\left(\theta_{0}\right)\right] +\frac{\tilde{c}(2)}{\bar{c}(2)}+2\left(C_{X}\eta^{-1} + 2M_0^2(2 + \eta)+2d(\eta\beta)^{-1}\sqrt{\lambda_{max}}\right)+1.
\end{align*}
For p=4:
\begin{align*}
\mathbb{E}\left[V_{4}\left(\bar{\zeta}_{t}^{\lambda, n}\right)\right] &\leq e^{-\lambda(t-n T) \bar{c}(4)} \mathbb{E}\left[V_{4}\left(\bar{\theta}_{n T}^{\lambda}\right)\right]+\frac{\tilde{c}(4)}{\bar{c}(4)}
\\&\leq 2 \mathbb{E}|\theta_0|^4+2+2C'_2 +\frac{\tilde{c}(4)}{\bar{c}(4)}.
\end{align*}
\end{proof}
\begin{proof}[\textbf{Proof of Lemma \ref{w12conn}.}]
Let $\mu,\nu \in \mathcal{P}_{{V_p}}.$ For any $\zeta \in C(\mu,\nu),$ one deduces
\[\begin{aligned}
W_1(\mu,\nu)&\leq\int_{\mathbb{R}^{d}} \int_{\mathbb{R}^{d}}\left|\theta-\theta^{\prime}\right| \zeta\left(\mathrm{d} \theta \mathrm{d} \theta^{\prime}\right)\\
&=\int_{\mathbb{R}^{d}} \int_{\mathbb{R}^{d}}\left|\theta-\theta^{\prime}\right| \mathbf{1}_{\left\{\left|\theta-\theta^{\prime}\right| \geq 1\right\}} \zeta\left(\mathrm{d} \theta \mathrm{d} \theta^{\prime}\right)+\int_{\mathbb{R}^{d}} \int_{\mathbb{R}^{d}}\left|\theta-\theta^{\prime}\right| \mathbf{1}_{\left\{\left|\theta-\theta^{\prime}\right|<1\right\}} \zeta\left(\mathrm{d} \theta \mathrm{d} \theta^{\prime}\right)\\
&\leq \int_{\mathbb{R}^{d}} \int_{\mathbb{R}^{d}}\left(|\theta|+\left|\theta^{\prime}\right|\right) \mathbf{1}_{\left\{\left|\theta-\theta^{\prime}\right| \geq 1\right\}} \zeta\left(\mathrm{d} \theta \mathrm{d} \theta^{\prime}\right)+\int_{\mathbb{R}^{d}} \int_{\mathbb{R}^{d}}\left|\theta-\theta^{\prime}\right|\left(1+V_{2}(\theta)+V_{2}\left(\theta^{\prime}\right)\right) \mathbf{1}_{\left\{\left|\theta-\theta^{\prime}\right|<1\right\}} \zeta\left(\mathrm{d} \theta \mathrm{d} \theta^{\prime}\right)\\
&\leq \int_{\mathbb{R}^{d}} \int_{\mathbb{R}^{d}}\left(1+V_{2}(\theta)+V_{2}\left(\theta^{\prime}\right)\right) \mathbf{1}_{\left\{\left|\theta-\theta^{\prime}\right| \geq 1\right\}} \zeta\left(\mathrm{d} \theta \mathrm{d} \theta^{\prime}\right)\\
&+\int_{\mathbb{R}^{d}} \int_{\mathbb{R}^{d}}\left|\theta-\theta^{\prime}\right|\left(1+V_{2}(\theta)+V_{2}\left(\theta^{\prime}\right)\right) \mathbf{1}_{\left\{\left|\theta-\theta^{\prime}\right|<1\right\}} \zeta\left(\mathrm{d} \theta \mathrm{d} \theta^{\prime}\right)\\
&=\int_{\mathbb{R}^{d}} \int_{\mathbb{R}^{d}}\left[1 \wedge\left|\theta-\theta^{\prime}\right|\right]\left(1+V_{2}(\theta)+V_{2}\left(\theta^{\prime}\right)\right) \mathbf{1}_{\left\{\left|\theta-\theta^{\prime}\right| \geq 1\right\}} \zeta\left(\mathrm{d} \theta \mathrm{d} \theta^{\prime}\right)\\
&+\int_{\mathbb{R}^{d}} \int_{\mathbb{R}^{d}}\left[1 \wedge\left|\theta-\theta^{\prime}\right|\right]\left(1+V_{2}(\theta)+V_{2}\left(\theta^{\prime}\right)\right) \mathbf{1}_{\left\{\left|\theta-\theta^{\prime}\right|<1\right\}} \zeta\left(\mathrm{d} \theta \mathrm{d} \theta^{\prime}\right)\\
&=\int_{\mathbb{R}^{d}} \int_{\mathbb{R}^{d}}\left[1 \wedge\left|\theta-\theta^{\prime}\right|\right]\left(1+V_{2}(\theta)+V_{2}\left(\theta^{\prime}\right)\right) \zeta\left(\mathrm{d} \theta \mathrm{d} \theta^{\prime}\right) \text {. }
\end{aligned}\]
Taking infimum over $\zeta$ completes the proof of the first inequality.
In order to prove the second inequality, one writes
\[\begin{aligned}
W_2^2(\mu,\nu)&\leq \int_{\mathbb{R}^{d}} \int_{\mathbb{R}^{d}}\left|\theta-\theta^{\prime}\right|^{2} \zeta\left(\mathrm{d} \theta \mathrm{d} \theta^{\prime}\right)\\
&=\int_{\mathbb{R}^{d}} \int_{\mathbb{R}^{d}}\left|\theta-\theta^{\prime}\right|^{2} \mathbf{1}_{\left\{\left|\theta-\theta^{\prime}\right| \geq 1\right\}} \zeta\left(\mathrm{d} \theta \mathrm{d} \theta^{\prime}\right)+\int_{\mathbb{R}^{d}} \int_{\mathbb{R}^{d}}\left|\theta-\theta^{\prime}\right|^{2} \mathbf{1}_{\left\{\left|\theta-\theta^{\prime}\right|<1\right\}} \zeta\left(\mathrm{d} \theta \mathrm{d} \theta^{\prime}\right)\\
&\leq \int_{\mathbb{R}^{d}} \int_{\mathbb{R}^{d}} 2\left(|\theta|^{2}+\left|\theta^{\prime}\right|^{2}\right) \mathbf{1}_{\left\{\left|\theta-\theta^{\prime}\right| \geq 1\right\}} \zeta\left(\mathrm{d} \theta \mathrm{d} \theta^{\prime}\right)+\int_{\mathbb{R}^{d}} \int_{\mathbb{R}^{d}}\left|\theta-\theta^{\prime}\right|\left(|\theta|+\left|\theta^{\prime}\right|\right) \mathbf{1}_{\left\{\left|\theta-\theta^{\prime}\right|<1\right\}} \zeta\left(\mathrm{d} \theta \mathrm{d} \theta^{\prime}\right)\\
&\leq \int_{\mathbb{R}^{d}} \int_{\mathbb{R}^{d}} 2\left(1+V_{2}(\theta)+V_{2}\left(\theta^{\prime}\right)\right) \mathbf{1}_{\left\{\left|\theta-\theta^{\prime}\right| \geq 1\right\}} \zeta\left(\mathrm{d} \theta \mathrm{d} \theta^{\prime}\right)\\
&+\int_{\mathbb{R}^{d}} \int_{\mathbb{R}^{d}} 2\left|\theta-\theta^{\prime}\right|\left(1+V_{2}(\theta)+V_{2}\left(\theta^{\prime}\right)\right) \mathbf{1}_{\left\{\left|\theta-\theta^{\prime}\right|<1\right\}} \zeta\left(\mathrm{d} \theta \mathrm{d} \theta^{\prime}\right)\\
&=2 \int_{\mathbb{R}^{d}} \int_{\mathbb{R}^{d}}\left[1 \wedge\left|\theta-\theta^{\prime}\right|\right]\left(1+V_{2}(\theta)+V_{2}\left(\theta^{\prime}\right)\right) \mathbf{1}_{\left\{\left|\theta-\theta^{\prime}\right| \geq 1\right\}} \zeta\left(\mathrm{d} \theta \mathrm{d} \theta^{\prime}\right)\\
&+2 \int_{\mathbb{R}^{d}} \int_{\mathbb{R}^{d}}\left[1 \wedge\left|\theta-\theta^{\prime}\right|\right]\left(1+V_{2}(\theta)+V_{2}\left(\theta^{\prime}\right)\right) \mathbf{1}_{\left\{\left|\theta-\theta^{\prime}\right|<1\right\}} \zeta\left(\mathrm{d} \theta \mathrm{d} \theta^{\prime}\right)\\
&=2 \int_{\mathbb{R}^{d}} \int_{\mathbb{R}^{d}}\left[1 \wedge\left|\theta-\theta^{\prime}\right|\right]\left(1+V_{2}(\theta)+V_{2}\left(\theta^{\prime}\right)\right) \zeta\left(\mathrm{d} \theta \mathrm{d} \theta^{\prime}\right) .
\end{aligned}\]
Taking infimum over $\zeta$ leads to
\[W_2(\mu,\nu)\leq 2 w_{1,2}\]
which completes the proof.
\end{proof}
\begin{proof}[\textbf{Proof of Proposition \ref{eberle}}] See \cite[ Proposition 3.14.]{nonconvex}.

\end{proof}

\begin{lemma}
The contraction constant in Proposition \ref{eberle} is given by
\[
\dot{c}=\min \{\bar{\phi}, \bar{c}(p), 4 \tilde{c}(p) \epsilon \bar{c}(p)\} / 2
\]
where the explicit expressions for $\bar{c}(p)$ and $\tilde{c}(p)$ can be found in Lemma \ref{Lemma citation} and $\bar{\phi}$ is given by
\[
\bar{\phi}=\left(\sqrt{4 \pi / K_{1} b} \exp \left((\bar{b} \sqrt{K_{1}} / 2+2 / \sqrt{K_{1}})^{2}\right)\right)^{-1}
\]
Furthermore, any $\epsilon$ can be chosen which satisfies the following inequality
\[
\epsilon \leq 1 \wedge\left(8 \tilde{c}(p) \sqrt{\pi / K_{1}} \int_{0}^{\tilde{b}} \exp \left((s \sqrt{K_{1}} / 2+2 / \sqrt{K_{1}})^{2}\right) \mathrm{d} s\right)^{-1}
\]
where $K_{1}=a$, $\tilde{b}=\sqrt{2 \tilde{c}(p) / \bar{c}(p)-1}$ and $\bar{b}=\sqrt{4 \tilde{c}(p)(1+\bar{c}(p)) / \bar{c}(p)-1} .$ The constant $\hat{c}$ is
given as the ratio $\mathrm{C}_{11} / \mathrm{C}_{10}$,  where $\mathrm{C}_{11}$, $\mathrm{C}_{10}$  are given explicitly in \cite[Lemma  3.24]{nonconvex}.
\end{lemma}

\begin{proof}[\textbf{Proof of Lemma \ref{Lemma 4.7}}] One initially observes that
\begin{align*}
    |\bar{\theta}^\lambda_t-\bar{\zeta}_{t}^{\lambda, n}|^2&=-2\lambda\int_{nT}^{t}\langle \bar{\zeta}_{s}^{\lambda, n}-\bar{\theta}^\lambda_s,h(\bar{\zeta}_{s}^{\lambda, n})-H_\lambda (\bar{\theta}^\lambda_{\lfloor s \rfloor},X_{\lceil s \rceil}) \rangle \\&=-2\lambda\int_{nT}^{t}\langle \bar{\zeta}_{s}^{\lambda, n}-\bar{\theta}^\lambda_s,h(\bar{\zeta}_{s}^{\lambda, n})-H(\bar{\theta}^\lambda_{\lfloor s \rfloor},X_{\left \lceil s  \right \rceil }) \rangle ds
    \\& -2\lambda \int_{nT}^{t}\langle \bar{\zeta}_{s}^{\lambda, n}-\bar{\theta}^\lambda_s,  H(\bar{\theta}^\lambda_{\left \lfloor s \right \rfloor},X_{\left \lceil s  \right \rceil }) -H_\lambda (\bar{\theta}^\lambda_{\lfloor s \rceil}, X_{\lceil s \rceil})\rangle  ds
    \\& = -2 \lambda \int_{nT}^t\langle \zs -\ths ,h(\zs)-h(\ths)\rangle ds
    \\& -2\lambda \int_{nT}^t \langle \zs -\ths ,h(\ths)-h(\fs)\rangle dsth
    \\& -2\lambda  \int_{nT}^t \langle \zs -\ths,h(\fs)-H(\fs,\Xs)\rangle ds
     \\& -2\lambda \int_{nT}^{t}\langle \bar{\zeta}_{s}^{\lambda, n}-\bar{\theta}^\lambda_s,  H(\bar{\theta}^\lambda_{\left \lfloor s \right \rfloor},X_{\left \lceil s  \right \rceil }) -H_\lambda (\bar{\theta}^\lambda_{\lfloor s \rfloor}, X_{\lceil s \rceil})\rangle  ds.
\end{align*}
Taking expectations on both sides yields that
\[
\begin{aligned}\label{L2_estimate}
    \E|\bar{\theta}^\lambda_t-\bar{\zeta}_{t}^{\lambda, n}|^2 &= -2 \lambda \int_{nT}^t\E\langle \zs -\ths ,h(\zs)-h(\ths)\rangle ds
    \\& -2\lambda \int_{nT}^t \E\langle \zs -\ths ,h(\ths)-h(\fs)\rangle ds
    \\& -2\lambda  \int_{nT}^t \E\langle \zs -\ths,h(\fs)-H(\fs,\Xs)\rangle ds
     \\& -2\lambda \int_{nT}^{t}\E\langle \bar{\zeta}_{s}^{\lambda, n}-\bar{\theta}^\lambda_s,  H(\bar{\theta}^\lambda_{\left \lfloor s \right \rfloor},X_{\left \lceil s  \right \rceil }) -H_\lambda (\bar{\theta}^\lambda_{\lfloor s \rceil}, X_{\lceil s \rceil})\rangle  ds
    \\&\leq-2 \lambda \int_{nT}^t\E\langle \zs -\ths ,h(\zs)-h(\ths)\rangle ds\\&
    +
     \int_{nT}^{T}\frac{\lambda a}{2}\E|\bar{\zeta}_{s}^{\lambda, n}-\bar{\theta}^\lambda_s|^2 +\E\frac{2\lambda}{a}|h(\ths)-h(\fs)|^2 ds \\&
     -2\lambda  \int_{nT}^t \E\langle \zs -\ths,h(\fs)-H(\fs,\Xs)\rangle ds
     \\& +\int_{nT}^{t}\frac{\lambda a}{2}\E|\bar{\theta}^\lambda_s-\bar{\zeta}_{s}^{\lambda, n}|^2+\frac{ 2\lambda }{a}\E|H(\bar{\theta}^\lambda_{\lfloor s \rfloor },X_{\lceil s \rceil})-H_\lambda (\bar{\theta}^\lambda_{\lfloor s \rfloor},X_{\lceil s \rceil} )|^2 ds
     \\&\leq  \int_{nT}^{t} A_s^{\lambda,n}+B_s^{\lambda,n}+E_s^{\lambda,n} +D_s^{\lambda,n} ds,
\end{aligned}\]
where \[A_s^{\lambda,n}=-2 \lambda \E\langle \zs -\ths ,h(\zs)-h(\ths)\rangle+\lambda a  \E|\bar{\zeta}_{s}^{\lambda, n}-\bar{\theta}^\lambda_s|^2,\]
\[B_s^{\lambda,n}=\frac{2\lambda}{a}\E|h(\ths)-h(\fs)|^2,\]
\[E_s^{\lambda,n}= -2\lambda  \E\langle \zs -\ths,h(\fs)-H(\fs,\Xs)\rangle\] and
\[D_s^{\lambda,n}=\frac{ 2\lambda }{a}\E|H(\bar{\theta}^\lambda_{\lfloor s \rfloor },X_{\lceil s \rceil})-H_\lambda (\bar{\theta}^\lambda_{\lfloor s \rfloor},X_{\lceil s \rceil} )|^2. \]\\
Using the property in Proposition \ref{Proposition 1},
one obtains
\begin{equation} \label{eq:A}
  \begin{aligned}
  A_t^{\lambda,n}&=-2 \lambda  \E \langle \bar{\zeta}_{t}^{\lambda, n}-\bar{\theta}^\lambda_t,h(\bar{\zeta}_{t}^{\lambda, n})-h(\bar{\theta}^\lambda_t) \rangle +\lambda a  \E |\bar{\zeta}_{t}^{\lambda, n}-\bar{\theta}^\lambda_t|^2\\&
   \leq2\lambda a  \E |\bar{\zeta}_{t}^{\lambda, n}-\bar{\theta}^\lambda_t|^2 +\lambda a  \E |\bar{\zeta}_{t}^{\lambda, n}-\bar{\theta}^\lambda_t|^2\\&\leq  3\lambda a  \E |\bar{\zeta}_{t}^{\lambda, n}-\bar{\theta}^\lambda_t|^2.
  \end{aligned}
    \end{equation}
In addition, taking advantage of the polynomial Lipschitzness of $H$( and consequently for $h$), one observes that
\begin{align*}
   B_t^{\lambda,n}&=\frac{2\lambda}{a}\E|h(\tht)-h(\ft)|^2
    \\&\leq \frac{2L\lambda}{a}\E\left[ (1 +|X_{0}|)^{2\rho}(1+|\bar{\theta}^\lambda_t|+|\bar{\theta}^\lambda_{\lfloor t \rfloor}|)^{2l}|\bar{\theta}^\lambda_t-\bar{\theta}^\lambda_{\lfloor t \rfloor}|^2\right ]
    \\&\leq \frac{2L\lambda}{a}\sqrt{\E\left[ (1 +|X_{0}|)^{4\rho}(1+|\bar{\theta}^\lambda_t|+|\bar{\theta}^\lambda_{\lfloor t \rfloor}|)^{4l}\right]}\sqrt{\E\left[|\bar{\theta}^\lambda_t-\bar{\theta}^\lambda_{\lfloor t \rfloor}|^4\right ]}
    \\&\leq \frac{2L\lambda}{a}\sqrt{\E\left[ (1+|X_{0}|)^{4\rho}(1+2|\bar{\theta}^\lambda_{\lfloor t \rfloor}| +|\bar{\theta}^\lambda_t-\bar{\theta}^\lambda_{\lfloor t \rfloor}|)^{4l}\right]}\sqrt{\E\left[|\bar{\theta}^\lambda_t-\bar{\theta}^\lambda_{\lfloor t \rfloor}|^4\right ]}.
\end{align*}
Furthermore, one applies again the Cauchy-Schwarz inequality to obtain
\begin{equation}\label{eq:B_intermediate}
\begin{aligned}
B_t^{\lambda,n}
&\leq \frac{2L}{a}(\E(1+|X_0|)^{8\rho})^{\frac{1}{4}}9^l \left(1+2^{8l}\E|\bar{\theta}^\lambda_{\lfloor t \rfloor}| ^{8l} +\E|\bar{\theta}^\lambda_t-\bar{\theta}^\lambda_{\lfloor t \rfloor}| ^{8l}\right)^{\frac{1}{4}}\\&\phantom{\frac{2L}{a}(\E(1+|X_0|)^{8\rho})^{\frac{1}{4}}}\times\lambda \sqrt{\E|\bar{\theta}^\lambda_t-\bar{\theta}^\lambda_{\lfloor t \rfloor}|^4}.
\end{aligned}
\end{equation}
By taking into consideration that
\begin{align*}
|\bar{\theta}^\lambda_t -\bar{\theta}^\lambda_{\lfloor t \rfloor}| \leq & \lambda|\int_{\lfloor t \rfloor}^t H_\lambda(\bar{\theta}^\lambda_{\lfloor u\rfloor},X_{\lceil u \rceil}) du | + \sqrt{\frac{2\lambda}{\beta}}|\tilde{B}^{\lambda}_{t} -\tilde{B}^{\lambda}_{\left \lfloor t \right\rfloor}| \\ \leq & \sqrt{\lambda}\left(\int_{\lfloor t \rfloor}^t K(X_{\lceil u \rceil})+\eta |\bar{\theta}^\lambda_{\lfloor u \rfloor}| du+\sqrt{\frac{2}{\beta}}|\tilde{B}^{\lambda}_{t} - \tilde{B}^{\lambda}_{\left \lfloor t \right\rfloor}|\right),
\end{align*}
and that both the requires moments of $X_{\lceil t \rceil}$ and of $\bar{\theta}^\lambda_{\lfloor t \rfloor}$ are finite due to Assumption \ref{A3} and \eqref{law-connection} respectively, one deduces that $\sqrt{\E|\bar{\theta}^\lambda_t-\bar{\theta}^\lambda_{\lfloor t \rfloor}|^4}\leq \tilde{C}_1\lambda$,
where $$\tilde{C}_1 = 9\sqrt{\E[K^{4}(X_0)]+\eta^4 (\E|\theta_0|^4+C'_2) + \frac{48}{\beta^4}d^2}.$$ Similarly, $$\E|\bar{\theta}^\lambda_t-\bar{\theta}^\lambda_{\lfloor t \rfloor}| ^{8l} \le  \tilde{C}_2 \lambda^{4l},$$ where
$$\tilde{C}_2 = 3^{4l}\sqrt{\E K^{8l}(X_0)+\eta^{8l}(\E|\theta_0|^{8l}+C'_{4l}+(\frac{2}{\beta})^{8l}d^{4l}(8l-1)!!}.$$ Here, the fact was used that the increment of a $d$-dimensional Brownian motion has a $d$-dimensional Gaussian distribution with mean 0 and covariance matrix $(t-\lfloor t \rfloor) \mathbb{I}_d$. Its $2m$-th moment is given by
\[
\E \left[|\tilde{B}^{\lambda}_{t} - \tilde{B}^{\lambda}_{\left \lfloor t \right\rfloor}|^{2m}\right]=\E\left[\left(\sum_{i=1}^d Y_i^2\right)^m\right] \leq d^m \E Z^{2m},
\]
where $Y_i$, $i \in\{1,\ldots,d\}$, are the increments of the one dimensional Brownian motions which follow the same distribution as $Z \sim\mathcal{N}(0,t-\lfloor t \rfloor)$. Hence,
\[
\E \left[|\tilde{B}^{\lambda}_{t} - \tilde{B}^{\lambda}_{\left \lfloor t \right\rfloor}|^{2m}\right] \leq d^m (2m-1)!!(t-\lfloor t \rfloor)^m\leq d^m (2m-1)!!.
\]
Thus, \eqref{eq:B_intermediate} implies that
\begin{equation} \label{eq:B}
B_t^{\lambda,n}\leq  C_1\lambda^2
\end{equation}
where
\begin{equation} \label{C_1}
    C_1=2\frac{L}{a}2^{4l}\tilde{C}_1(\E(1+X_0)^{8\rho})^{\frac{1}{4}}\left(1+2^{8l}(\E|\theta_0|^{8l}+C'_{4l})+\tilde{C}_2\right)^\frac{1}{4}.
\end{equation}
Furthermore, the term $E_t^{\lambda,n}$ can be analysed as follows
\[\begin{aligned}
E_t^{\lambda,n}&= -2\lambda\E \langle \zt -\ft ,h(\ft)-H(\ft,\Xt)\rangle-2\lambda  \E \langle \ft -\tht ,h(\ft)-H(\ft,\Xt)\rangle
\\&= \E \left[\E \langle \zt -\ft ,h(\ft)-H(\ft,\Xt)\rangle\big| \zt,\ft \right]\\&+ \E \langle \ft -\tht ,h(\ft)-H(\ft,\Xt)\rangle.
\end{aligned}
\]
Using the unbiased estimator property, the first term is zero so \begin{equation}\label{eq Et}
\begin{aligned}
E_t^{\lambda,n}&=-2\lambda \E \langle \ft -\tht ,h(\ft)-H(\ft,\Xt)\rangle\\&= -2\lambda \E \left\langle \int_{\lfloor t \rfloor}^{t} \lambda H_\lambda(\fs, X_{\lceil s \rceil})ds+\frac{\sqrt{2\lambda}}{\beta}  (\tilde{B}^\lambda_t-\tilde{B}^\lambda_{\lfloor t \rfloor}),h(\fs)-H(\fs,\Xs)\right\rangle
\\&=-2\lambda \E \left\langle  \lambda H_\lambda(\ft, X_{\lceil t \rceil})({t}-{\lfloor t \rfloor}),h(\ft)-H(\ft,\Xt)\right\rangle
\\&\leq 2\lambda^2\sqrt{\E |H_\lambda(\ft,\Xt)|^2}\sqrt{\E |h(\ft)-H(\ft,\Xt)|^2}.
\end{aligned}
\end{equation}
In addition, one observes that the first term of the above product yields that
\[2\lambda^2\sqrt{\E |H_\lambda(\ft,\Xt)|^2}\leq 2\lambda^2 \sqrt{\E 2K^2(X_0)\E (1+|\ft|)^{2q} +2\eta^2 \E |\ft |^{4r+2}}\]
which implies that
\[2\lambda^2\sqrt{\E |H_\lambda(\ft,\Xt)|^2}\leq2\lambda^2C_{E1}.
\]
where $C_{E1}=\sqrt{\E 2^{2q+3} K^2(X_0)(1+\E |\theta_0|^{2q}+C'_q)+2\eta^2(\E |\theta_0|^{4r+2}+C'_{2r+1})}$.
\\
\\
For the second term, using the unbiased estimator property of $h$, an application of Jensen's inequality leads to
\[\begin{aligned}
\sqrt{\E |h(\ft)-H(\ft,\Xt)|^2}&\leq \sqrt{2\E |H(\ft,\Xt)|^2}\\&\leq  2L\sqrt{\E |H(0,X_0)|^2 + \E (1+|X_0|)^{2\rho}\E (1+|\ft|)^{2l+2} }
\\&\leq C_{E2}
\end{aligned}\]
where $C_{E2}=2L \sqrt{ \E |H(0,X_0)|^2+2^{2l+2} \E (1+|X_0|)^{2\rho}(1+C'_{l+1}+\E|\theta_0|^{2l+2})}$.\\\\
Combining the above estimates and inserting in \eqref{eq Et}, one deduces that
\begin{equation}\label{estimate Es}
\begin{aligned}
    E_t^{\lambda,n}\leq \lambda^2 C_3
    \end{aligned}
\end{equation}
where \begin{equation}
    \begin{aligned}C_3&=2\sqrt{2^{2q+3} \E K^2(X_0)(1+\E |\theta_0|^{2q}+C'_q)+2\eta^2(\E |\theta_0|^{4r+2}+C'_{2r+1})}\\&\times 2L \sqrt{ \E |H(0,X_0)|^2+2^{2l+2} \E (1+|X_0|)^{2\rho}(1+C'_{l+1}+\E|\theta_0|^{2l+2})}.
    \end{aligned}
\end{equation}
Moreover,
\begin{align} \label{eq:C}
    \hspace{-43pt}D_t^{\lambda,n} = &\frac{ 2\lambda }{a}\E|H(\bar{\theta}^\lambda_{\lfloor t \rfloor },X_{\lceil t \rceil})-H_\lambda (\bar{\theta}^\lambda_{\lfloor t \rfloor},X_{\lceil t \rceil} )|^2 \nonumber
    \\ \leq & \frac{ 2\lambda^2 }{a}\E\left[|H(\bar{\theta}^\lambda_{\lfloor t \rfloor },X_{\lceil t \rceil})||\bar{\theta}^\lambda_{\lfloor t \rfloor }||^{2r}\right]^2 \nonumber
    \\ \leq & \frac{4 \lambda^2 }{a}\left(\E|H(\bar{\theta}^\lambda_{\lfloor t \rfloor },X_{\lceil t \rceil})-H(\bar{\theta}^\lambda_{ 0  },X_{\lceil t \rceil})|^2|\bar{\theta}^\lambda_{\lfloor t \rfloor }|^{4r} +\E|H(\bar{\theta}^\lambda_{ 0  },X_{\lceil t \rceil})|^2|\bar{\theta}^\lambda_{\lfloor t \rfloor }|^{4r}\right) \nonumber
    \\ \leq & \frac{\lambda a }{2}\E|\bar{\theta}^\lambda_t-\bar{\zeta}_{t}^{\lambda, n}|^2+ \frac{L^2\lambda^2}{a}\E\left[ (1 +|X_{\left \lceil t \right \rceil}|)^{2\rho}(1+|\theta_0|+|\bar{\theta}^\lambda_{\lfloor t \rfloor}|)^{2l}|\theta_0-\bar{\theta}^\lambda_{\lfloor t \rfloor}|^2|\bar{\theta}^\lambda_{\lfloor t \rfloor }|^{4r}\right ] \nonumber \\ & +\frac{4\lambda^2}{a}\E|H(\theta_0,X_0)|^2|\bar{\theta}^\lambda_{\lfloor t \rfloor }|^{4r} \nonumber
    \\ \leq &\frac{L^2\lambda^2}{a}\sqrt{\E\left[ (1 +|X_{\left \lceil t \right \rceil}|)^{4\rho}(1+|\theta_0|+|\bar{\theta}^\lambda_{\lfloor t \rfloor}|)^{4l}|\theta_0-\bar{\theta}^\lambda_{\lfloor t \rfloor}|^4\right]}\sqrt{\E|\bar{\theta}^\lambda_{\lfloor t \rfloor }|^{8r}} \nonumber \\ & + \frac{4\lambda^2}{a}\sqrt{\E|H(\theta_0,X_0)|^4}\sqrt{\E|\bar{\theta}^\lambda_{\lfloor t \rfloor }|^{8r}}
   \nonumber \\ \leq & \sqrt{C'_{4r}+\E|\theta_0|^{8r}} \nonumber \\ & \times\left(\frac{L^2\lambda^2}{a}(\E(1+|X_0|)^{8\rho})^{\frac{1}{4}}\left({\E(1+|\theta_0| +\bar{\theta}^\lambda_{\lfloor t \rfloor })^{8l}|\theta_0-\bar{\theta}^\lambda_{\lfloor t \rfloor }|^{8}}\right)^\frac{1}{4}\right) \nonumber \\ &+\sqrt{C'_{4r}+\E|\theta_0|^{8r}}\frac{4\lambda^2}{a}\sqrt{\E|H(\theta_0,X_0)|^4}
   \nonumber \\ \leq &
   \sqrt{C'_{4r}+\E|\theta_0|^{8r}}  \nonumber \\ & \times\frac{L^2\lambda^2}{a}(\E(1+|X_0|)^{8\rho})^{\frac{1}{4}}2^{2l+2}\left(\E(1 +|\theta_0|)^{16l}+\E|\bar{\theta}^\lambda_{\lfloor t \rfloor }|^{16l}\right)^\frac{1}{8}\left(\E|\theta_0|^{16}+\E |\bar{\theta}^\lambda_{\lfloor t \rfloor }|^{16}\right)^\frac{1}{8} \nonumber \\ & +\sqrt{C'_{4r}+\E|\theta_0|^{8r}}\frac{4\lambda^2}{a}\sqrt{\E|H(\theta_0,X_0)|^4}
   \nonumber \\ \leq &C_2\lambda ^2
\end{align}
where
\begin{align} \label{C_2}
  C_2 = & \sqrt{C'_{4r}+\E|\theta_0|^{8r}}\bigg(\frac{L^2}{a}(\E(1+|X_0|)^{8\rho})^{\frac{1}{4}}2^{2l+2}\left(\E(1+|\theta_0|)^{16l} + \E|\theta_0|^{16l}+C'_{8l}\right)^\frac{1}{8}    \nonumber \\& +\left(\E|\theta_0|^{16} +\E |\theta_0|^{16}+C'_8\right)^\frac{1}{8}\bigg)
  +\frac{4}{a}\sqrt{\E|H(\theta_0,X_0)|^4}.
\end{align}
In view of the estimates \eqref{eq:A}, \eqref{eq:B} and \eqref{eq:C}, \eqref{estimate Es} one concludes that equation \eqref{L2_estimate} can be rewritten as
\begin{equation*}
    \E|\bar{\theta}^\lambda_t-\bar{\zeta}_{t}^{\lambda, n}|^2\leq 3\lambda a\int_{nT}^t\E|\bar{\theta}^\lambda_s-\bar{\zeta}_{s}^{\lambda, n}|^2ds+(C_1+C_2+C_3)\lambda <\infty.
\end{equation*}
The application of Gronwall's Lemma implies that
\begin{equation*}%\label{C}
    \E|\bar{\theta}^\lambda_t-\bar{\zeta}_{t}^{\lambda, n}|^2\leq c \lambda, \qquad \mbox{where } c=e^{3a}(C_1+C_2+C_3)
\end{equation*}
which yields the desired rate while the constant $c$ is independent of $t$ and $\lambda$.
\end{proof}

\begin{proof}[\textbf{Proof of Lemma \ref{Lemma 4.8}}] In view of the result in Lemma \ref{Lemma 4.7},
\[
    \begin{aligned}
&W_{1}\left(\mathcal{L}\left(\bar{\zeta}_{t}^{\lambda, n}\right), \mathcal{L}\left(Z_{t}^{\lambda}\right)\right)\\
&\leq \sum_{k=1}^{n} W_{1}\left(\mathcal{L}\left(\bar{\zeta}_{t}^{\lambda, k}\right), \mathcal{L}\left(\bar{\zeta}_{t}^{\lambda, k-1}\right)\right)\\
&\leq \sum_{k=1}^{n} w_{1,2}\left(\mathcal{L}\left(\zeta_{t}^{k T, \bar{\theta}_{k T}^{\lambda}, \lambda}\right), \mathcal{L}\left(\zeta_{t}^{k T, \bar{z}_{k T}^{\lambda, k-1}, \lambda}\right)\right)\\
&\leq \hat{c} \sum_{k=1}^{n} \exp (-\dot{c}(n-k)) w_{1,2}\left(\mathcal{L}\left(\bar{\theta}_{k T}^{\lambda}\right), \mathcal{L}\left(\bar{\zeta}_{k T}^{\lambda, k-1}\right)\right)\\&
\leq \hat{c} \sum_{k=1}^{n} \exp (-\dot{c}(n-k)) W_{2}\left(\mathcal{L}\left(\bar{\theta}_{k T}^{\lambda}\right), \mathcal{L}\left(\bar{\zeta}_{k T}^{\lambda, k-1}\right)\right)
\\&\phantom{\hat{c} \sum_{k=1}^{n} \exp (-\dot{c}(n-k))}\times \left[1+\left\{\mathbb{E}\left[V_{4}\left(\bar{\theta}_{k T}^{\lambda}\right)\right]\right\}^{1 / 2}+\left\{\mathbb{E}\left[V_{4}\left(\bar{\zeta}_{k T}^{\lambda, k-1}\right)\right]\right\}^{1 / 2}\right]
\\&\leq (\sqrt{\lambda})^{-1} \hat{c} \sum_{k=1}^{n} \exp (-\dot{c}(n-k)) W_{2}^{2}\left(\mathcal{L}\left(\bar{\theta}_{k T}^{\lambda}\right), \mathcal{L}\left(\bar{\zeta}_{k T}^{\lambda, k-1}\right)\right) \\
&+3 \sqrt{\lambda} \hat{c} \sum^{n} \exp (-\dot{c}(n-k))\left[1+\mathbb{E}\left[V_{4}\left(\bar{\theta}_{k T}^{\lambda}\right)\right]+\mathbb{E}\left[V_{4}\left(\bar{\zeta}_{k T}^{\lambda, k-1}\right)\right]\right]
\\& \leq \sqrt{e^{3a}(C_1+C_2+C_3)}\sqrt{\lambda} \frac{\hat{c}}{1-\exp (-\dot{c})}
\\+&3\sqrt \lambda \frac{\hat{c}}{1-\exp (-\dot{c})}\left(1+2\mathbb{E}|\theta_0|^4+2+2C'_2 +\frac{\tilde{c}(4)}{\bar{c}(4)}+2\mathbb{E}|\theta_0|^4+2+2C'_2\right)
\\&=\sqrt{\lambda} z_1
\end{aligned}\]
where \begin{equation}\label{eq-z_1}
        z_1 =\frac{\hat{c}}{1-exp(-\dot{c})}\left[\sqrt{e^{3a}(C_1+C_2+C_3)} + 3\left(5+4C'_2 \frac{\tilde{c}(4)}{\bar{c}(4)}+4\mathbb{E}|\theta_0|^4 \right)\right]
\end{equation}
and $C_1$, $C_2$ are given by (\ref{C_1}) , (\ref{C_2}) respectively.
\end{proof}

\subsection{Proof of main results} \label{main_proofs}

\begin{lemma} \label{W1 convergence}
Let Assumptions \ref{A2} and \ref{A3} hold. Then for $0<\lambda\leq \lambda_{\max},$ $t\in [nT,(n+1)T]$,
\[W_{1}\left(\mathcal{L}\left(\bar{\theta}_{t}^{\lambda}\right), \mathcal{L}\left(Z_{t}^{\lambda}\right)\right)\leq \sqrt{\lambda} \sqrt{e^{3a}(C_1+C_2+C_3)}+\sqrt{\lambda} z_1=
\sqrt{\lambda} (z_1+\sqrt{e^{3a}(C_1+C_2+C_3)})
\]
where $C_1$, $C_2$ and $z_1$ are given by (\ref{C_1}) , (\ref{C_2}) and (\ref{eq-z_1}) respectively.
\end{lemma}
\begin{proof}
Combining the results stated in Lemmas \ref{Lemma 4.7} and \ref{Lemma 4.8}
\[\begin{aligned}
    W_{1}\left(\mathcal{L}\left(\bar{\theta}_{t}^{\lambda}\right), \mathcal{L}\left(Z_{t}^{\lambda}\right)\right)&
\leq W_{1}\left(\mathcal{L}\left(\bar{\theta}_{t}^{\lambda}\right), \mathcal{L}\left(\bar{\zeta}_{t}^{\lambda, n}\right)\right)+W_{1}\left(\mathcal{L}\left(\bar{\zeta}_{t}^{\lambda, n}\right), \mathcal{L}\left(Z_{t}^{\lambda}\right)\right)
\\
&
\leq W_{2}\left(\mathcal{L}\left(\bar{\theta}_{t}^{\lambda}\right), \mathcal{L}\left(\bar{\zeta}_{t}^{\lambda, n}\right)\right)+W_{1}\left(\mathcal{L}\left(\bar{\zeta}_{t}^{\lambda, n}\right), \mathcal{L}\left(Z_{t}^{\lambda}\right)\right)
\\& \leq \sqrt{\lambda} \sqrt{e^{3a}(C_1+C_2+C_3)}+\sqrt{\lambda} z_1\\&=
\sqrt{\lambda} (z_1+\sqrt{e^{3a}(C_1+C_2+C_3)}),
\end{aligned}
\]
which yields the desired result.
\end{proof}

\begin{proof}[\textbf{Proof of Theorem \ref{Thrm1}}] By taking into consideration the result in the Lemma \ref{W1 convergence} and the property of $w_{1,2}$ in Proposition \ref{eberle}, one calculates
\[\begin{aligned}
W_{1}\left(\mathcal{L}\left(\theta_{t}^{\lambda}\right), \pi_{\beta}\right) &\leq W_{1}\left(\mathcal{L}\left(\bar{\theta}_{t}^{\lambda}\right), \mathcal{L}\left(Z_{t}^{\lambda}\right)\right)+W_{1}\left(\mathcal{L}\left(Z_{t}^{\lambda}\right), \pi_{\beta}\right)
\\&\leq \sqrt{\lambda} (z_1+\sqrt{e^{3a}(C_1+C_2+C_3)})  +\hat{c} e^{-\dot{c} \lambda t} w_{1,2}\left(\theta_{0}, \pi_{\beta}\right)
\\& \leq \sqrt{\lambda} (z_1+\sqrt{e^{3a}(C_1+C_2+C_3)})\\&\phantom{\sqrt{\lambda} (z_1+}+\hat{c} e^{-\dot{c} \lambda t}\left[1+\mathbb{E}\left[V_{2}\left(\theta_{0}\right)\right]+\int_{\mathbb{R}^{d}} V_{2}(\theta) \pi_{\beta}(d \theta)\right]
\\&\leq   \sqrt{\lambda} (z_1+\sqrt{e^{3a}(C_1+C_2+C_3)})\\&\phantom{\sqrt{\lambda} (z_1+}+\hat{c} e^{-\dot{c} n }\left[1+\mathbb{E}\left[V_{2}\left(\theta_{0}\right)\right]+\int_{\mathbb{R}^{d}} V_{2}(\theta) \pi_{\beta}(d \theta)\right]
\end{aligned}\]
where $C_1$, $C_2$ and $z_1$ are given by (\ref{C_1}) , (\ref{C_2}) and (\ref{eq-z_1}) respectively.
\end{proof}

\begin{lemma} \label{Lemma 4.10}
Let Assumptions \ref{A2} and \ref{A3} hold. Then, for $0<\lambda \leq \lambda_{max}$ and $t\in [nT,(n+1)T]$,
\[
W_{2}\left(\mathcal{L}\left(\bar{\zeta}_{t}^{\lambda, n}\right), \mathcal{L}\left(Z_{t}^{\lambda}\right)\right)\leq \lambda^{\frac{1}{4}}z_2
\]
where $z_2$ is given by \eqref{eq-z2}.
\end{lemma}
\begin{proof}
Using that $W_2\leq \sqrt{2w_{1,2}},$ one obtains
\begin{align*}
 W_{2}\left(\mathcal{L}\left(\bar{\zeta}_{t}^{\lambda, n}\right), \mathcal{L}\left(Z_{t}^{\lambda}\right)\right)
\leq & \sum_{k=1}^{n} W_{2}\left(\mathcal{L}\left(\bar{\zeta}_{t}^{\lambda, k}\right), \mathcal{L}\left(\bar{\zeta}_{t}^{\lambda, k-1}\right)\right)\\
\leq & \sum_{k=1}^{n} \sqrt{2} w_{1,2}^{1 / 2}\left(\mathcal{L}\left(\zeta_{t}^{k T, \bar{\theta}_{k T}^{\lambda}, \lambda}\right), \mathcal{L}\left(\zeta_{t}^{k T, \bar{\zeta}_{k T}^{\lambda, k-1}, \lambda}\right)\right)\\
\leq & \sqrt{2 \hat{c}} \sum_{k=1}^{n} \exp (-\dot{c}(n-k) / 2) W_{2}^{1 / 2}\left(\mathcal{L}\left(\bar{\theta}_{k T}^{\lambda}\right), \mathcal{L}\left(\bar{\zeta}_{k T}^{\lambda, k-1}\right)\right) \\
& \phantom{\sqrt{2 \hat{c}} }\times\left[1+\left\{\mathbb{E}\left[V_{4}\left(\bar{\theta}_{k T}^{\lambda}\right)\right]\right\}^{1 / 2}+\left\{\mathbb{E}\left[V_{4}\left(\bar{\zeta}_{k T}^{\lambda, k-1}\right)\right]\right\}^{1 / 2}\right]^{1 / 2}\\
\leq &  \lambda^{-1 / 4} \sqrt{2 \hat{c}} \sum_{k=1} \exp (-\dot{c}(n-k) / 2) W_{2}\left(\mathcal{L}\left(\bar{\theta}_{k T}^{\lambda}\right), \mathcal{L}\left(\bar{\zeta}_{k T}^{\lambda, k-1}\right)\right) \\
&+\lambda^{1 / 4} \sqrt{2 \hat{c}} \sum_{k=1}^{n} \exp (-\dot{c}(n-k) / 2)\\ & \times \left[1+\left\{\mathbb{E}\left[V_{4}\left(\bar{\theta}_{k T}^{\lambda}\right)\right]\right\}^{1 / 2}+\left\{\mathbb{E}\left[V_{4}\left(\bar{\zeta}_{k T}^{\lambda, k-1}\right)\right]\right\}^{1 / 2}\right]\\
= & \lambda^{\frac{1}{4}}\sqrt{2\hat{c}}\frac{1}{1-exp(-\dot{c}/2)}e^{3a}(C_1+C_2+C_3)\\
&+
\lambda^{\frac{1}{4}}\sqrt{2\hat{c}}\frac{1}{1-exp(-\dot{c}/2)} \\ & \times\left[1+\sqrt{2\mathbb{E}|\theta_0|^4+2+2C'_2 +\frac{\tilde{c}(4)}{\bar{c}(4)}}+\sqrt{2\mathbb{E}|\theta_0|^4+2+2C'_2}\right] \\
= &\lambda^{\frac{1}{4}}z_2
\end{align*}
where
\begin{equation}\label{eq-z2}
\begin{aligned}
z_2&=\sqrt{2\hat{c}}\frac{1}{1-exp(-\dot{c}/2)}\\&\times\left[e^{3a}(C_1+C_2+C_3)+1+\sqrt{2\mathbb{E}|\theta_0|^4+2+2C'_2 +\frac{\tilde{c}(4)}{\bar{c}(4)}}+\sqrt{2\mathbb{E}|\theta_0|^4+2+2C'_2}\right].
\end{aligned}
\end{equation}
\end{proof}

\begin{proof}[\textbf{Proof of Corollary \ref{Thrm2}}]
Combining Lemma \ref{Lemma 4.7} and Lemma \ref{Lemma 4.10}, one obtains
\begin{equation*}
    \begin{aligned}
W_{2}\left(\mathcal{L}\left(\theta_{t}^{\lambda}\right), \pi_{\beta}\right)  \leq & W_{2}\left(\mathcal{L}\left(\theta_{t}^{\lambda}\right), \mathcal{L}\left(Z_{t}^{\lambda}\right)\right)+W_{2}\left(\mathcal{L}\left(Z_{t}^{\lambda}\right), \pi_{\beta}\right) \\
 \leq & W_{2}\left(\mathcal{L}\left(\bar{\theta}_{t}^{\lambda}\right), \mathcal{L}\left(\bar{\zeta}_{t}^{\lambda, n}\right)\right)\\&+W_{2}\left(\mathcal{L}\left(\bar{\zeta}_{t}^{\lambda, n}\right), \mathcal{L}\left(Z_{t}^{\lambda}\right)\right)+W_{2}\left(\mathcal{L}\left(Z_{t}^{\lambda}\right), \pi_{\beta}\right)
\\ \leq & \sqrt{e^{3a}(C_1+C_2+C_3)}\sqrt{\lambda}+z_2\lambda^\frac{1}{4}+\sqrt{2 w_{1,2}\left(\mathcal{L}\left(Z_{t}^{\lambda}\right), \pi_{\beta}\right)}
\\ \leq & \sqrt{e^{3a}(C_1+C_2+C_3)}\sqrt{\lambda}+z_2\lambda^\frac{1}{4}+\hat{c}^{1 / 2} e^{-\dot{c} \lambda t / 2} \sqrt{2 w_{1,2}\left(\theta_{0}, \pi_{\beta}\right)}
\\ \leq & \sqrt{e^{3a}(C_1+C_2+C_3)}\sqrt{\lambda}+z_2\lambda^\frac{1}{4} +\sqrt{2} \hat{c}^{1 / 2} e^{-\dot{c} \lambda t / 2} \\ & \times \left(1+\mathbb{E}\left[V_{2}\left(\theta_{0}\right)\right]+\int_{\mathbb{R}^{d}} V_{2}(\theta) \pi_{\beta}(d \theta)\right)^{1 / 2}\\
\leq & \sqrt{e^{3a}(C_1+C_2+C_3)}\sqrt{\lambda}+z_2\lambda^\frac{1}{4}
\\&+\sqrt{2} \hat{c}^{1 / 2} e^{-\dot{c} n/ 2}\left(1+\mathbb{E}\left[V_{2}\left(\theta_{0}\right)\right]+\int_{\mathbb{R}^{d}} V_{2}(\theta) \pi_{\beta}(d \theta)\right)^{1 / 2}.
\end{aligned}
\end{equation*}
\end{proof}

\begin{proof}[\textbf{Proof of Lemma \ref{T_1}}]
Taking into account that \[h(\theta)=\E[G(\theta,X_0)] +\eta |\theta|^{2r}\theta\]
and the polynomial growth of $G$ in \eqref{G_growth}, there exist $r_1=\E[K(X_0)] +\eta $, $r_2=2 \E[K(X_0)]$, such that \[|h(\theta)|\leq r_1 |\theta|^l+r_2 \quad \forall \theta \in \mathbb{R}^d,
\]
where $l=2r+1$. As a result,
\[\begin{aligned}
u(w)-u(v) &=\int_{0}^{1}\langle w-v, \nabla u((1-t) v+t w)\rangle \mathrm{d} t \\
& \leq \int_{0}^{1}|\nabla u((1-t) v+t w)||w-v| \mathrm{d} t \\
& \leq \int_{0}^{1}\left(a_1(1-t)^l|v|^l+a_1 t^l|w|^l+r_{2}\right)|w-v| \mathrm{d} t \\
&=\left(\frac{a_1}{l+1}|v|^l+\frac{a_1}{l+1}|w|^l+r_{2}\right)|w-v|
\end{aligned}\]
where $a_1=2^l r_1$ .
Let $P$ the coupling of $\mu$ and $\nu$ that achieves $W_2(\mu,\nu)$, that is $P=\left(\mathcal{L}(W),\mathcal{L}(V)\right) $ with $\mu=\mathcal{L}(W)$ and $\nu=\mathcal{L}(V)$.
Taking a closer look one notices that
\[
\begin{aligned}
\int_{\mathbb{R}^{d}} u \mathrm{d} \mu-\int_{\mathbb{R}^{d}} u \mathrm{d} v &=\mathbb{E}_{P}[u(W)-u(V)] \\
& \leq \sqrt{\mathbb{E}_{P}\left(\frac{a_1}{l+1}|W|^l+\frac{a_1}{l+1}|V|^l+r_2\right)^{2}} \cdot \sqrt{\mathbb{E}_{\mathrm{P}}\left[|W-V|^{2}\right]} \\
& \leq\left(\frac{a_1}{l+1} \sqrt{\E|W|^{2l}}+\frac{a_1}{l+1} \sqrt{\E|V|^{2l}}+r_2\right) \cdot \mathcal{W}_{2}(\mu, v)
\end{aligned}
\]
Applying this to the particular case where $W=\theta_n^\lambda$ and $V=\theta_\infty$ yields
\[
\E u(\theta_n^\lambda)-\E u(\theta_\infty)\leq\left( \frac{a_1}{l+1}\sqrt{\E|\theta_0|^{2l}+C'_l}+\frac{a_1}{l+1}\sqrt{\sigma_{2l}}+r_2\right)
W_2\left(\mathcal{L}\left(\theta_{n}^{\lambda}\right),\pi_\beta\right)
\]

where $\sigma_{2l}$ is the $2l$-moment of $\pi_\beta$.
\end{proof}

\begin{proof}[\textbf{Proof of Lemma \ref{T_2}}] A similar approach as in \cite[Section~3.5]{raginsky} is employed here, however due to the difference in the smoothness condition for $H$ (and consequently for $h$), see our Proposition \ref{polip H} in contrast to global Lipschitzness which is required in \cite{raginsky}, we provide the details for obtaining a bound for $\log\Lambda$. Recall that $\Lambda$ represents the normalizing constant, i.e.
\[
\Lambda:= \int_{\R^d} e^{-\beta u(\theta)}d\theta.
\]
Initially, one observes that due to the monotonicity condition \eqref{eq-AB},
\[
\langle \theta^*, h(\theta^*)\rangle
\geq A|\theta^*|^2-B \implies |\theta^*|\leq \sqrt{\frac{B}{A}}\leq R_0.
\]
Consequently, one calculates that
\[
\begin{aligned}
u_*-u(w)&=\int_{0}^1 \langle h(w+t(\theta^*-w),\theta^*-w\rangle dt
\\&=\int_{0}^1 \langle h(w+t(\theta^*-w)-h(\theta^*),\theta^*-w\rangle dt
\\&=\int_{0}^1 \frac{1}{t-1}\langle h(w+t(\theta^*-w)-h(\theta^*),w-\theta^*+t(\theta^*-w)\rangle dt,
\end{aligned}
\]
which due to the polynomial lipschitzness of $h$ yields that
\begin{align} \label{eq-smoothing}
-\beta(u_*-u(w))& = \beta|u_*-u(w)| \nonumber \\
&\leq \beta\int_{0}^1 \frac{1}{1-t}\left|\langle h\left(w+t(\theta^*-w)\right)-h(\theta^*),w-\theta^*+t(\theta^*-w)\rangle\right| dt \nonumber \\
&\leq \int_{0}^1 b'(1+|w|+|\theta^*-w|+|\theta^*|)^l(1-t) |w-\theta^*|^2 dt \nonumber \\
&\leq
b'(1+2|\theta^*|+2 |\theta^*-w|)^l\frac{|w-\theta^*|^2}{2},
\end{align}
where $b'=L\mathbb{E}(1+|X_0|)^\rho \beta$. As a result,
\begin{equation*}
    \begin{aligned}
   I =\int_{\mathbb{R}^d} e^{\beta (u_*-u(w))}dw  & \geq
      \int_{\mathbb{R}^d} e^{-b'(1+2|w-\theta^*|+2|\theta^*|)^l (\frac{|w-\theta^*|^2}{2})}dw
      \\& \geq  \int_{ \bar {B}(\theta^*,R_0 )} e^{-b'(1+4R_0)^l (\frac{|w-\theta^*|^2}{2})}dw
      \\&= \left(\frac{2\pi}{b''}\right)^\frac{d}{2}\int_{ \bar {B}(\theta^*,R_0) }f_X(w) dw
    \end{aligned}
\end{equation*}
where $b''=b'(1+4R_0)^l$, $f$ is a density function of a multivariate normal variable $X$ with mean $\theta^*$ and covariance matrix $V=1/b'' I_d$, where $I_d$ is the $d$-dimensional identity matrix.
This means that $\sqrt{b''}(X-\theta^*)$ follows a standard d-dimensional Gaussian distribution. Applying the standard concentration inequality for d-dimensional Gaussian yields
\[\begin{aligned}
   P\left(||X-\theta^*||>R_0\right)&=P\left(||\sqrt{b''}(X-\theta^*)||>R_0\sqrt{b''}\right)\\&\leq
   P\left(||\sqrt{b''}(X-\theta^*)||-\sqrt{d}>R_0\sqrt{b''}-\sqrt{d}\right)\\&\leq e^{-(R_0\sqrt{b''}-\sqrt{d})^2}
\end{aligned}\]
which leads to
\[I\geq \left(\frac{2\pi}{b''}\right)^\frac{d}{2}\left( 1-e^{-(R_0\sqrt{b''}-\sqrt{d})^2} \right).\]
Consequently, following \cite[Section~3.5]{raginsky}, one obtains
\[
\log \Lambda\geq -\beta u_*+\frac{d}{2} \log \left(\frac{2 \pi}{b'' }\right)+ \log \left(1-e^{-(R_0\sqrt{b''}-\sqrt{d})^2} \right).\]
Thus, by setting $K:=b''/\beta=L\E (1+|X_0|)^\rho (1+4R_0)^l$ and in view of \eqref{eq-AB} and \cite[Lemma~3]{raginsky}, one obtains
\[
\E u(\theta_\infty)-u_*\leq \frac{d}{2 \beta} \log \left(\frac{e K}{A}\left(\frac{B \beta}{d}+1\right)\right)-\frac{1}{\beta}\log\left(1-e^{-(R_0\sqrt{K\beta}-\sqrt{d})^2}\right).
\]
\end{proof}

\subsection{Complementary details to Section \ref{MNNs}}
	
We start with an easy observation about the equivalence of the operator norm and Euclidean norm of a linear operator. For any $k,l\in\N_+$, $W\in\lin{\R^k,\R^l}$ and $z\in\R^k$,
	\begin{align*}
		|W z|^2 &= \sum_{i=1}^l [Wz]_i^2
		=\sum_{i=1}^l \left(\sum_{j=1}^k W_{ij}z_j\right)^2 \\
		&\le\left(\sum_{j=1}^k z_j^2\right)\sum_{i=1}^l\sum_{j=1}^k W_{ij}^2
		=|z|^2|W|^2.
	\end{align*}
	On the other hand, if $l\le k$, then
	$|W|^2=\sum_{i=1}^l\sum_{j=1}^k W_{ij}^2 =\sum_{i=1}^l [WW^\ast]_{ii}\le l\Vert W\Vert^2$
	and similarly, for $k\le l$, $|W|^2\leq k\Vert W^\ast\Vert^2=k\Vert W\Vert^2$.
	As a result, we obtain
	\begin{equation}\label{eq:equiv}
		\Vert W\Vert\le |W|\le\min (\sqrt{k},\sqrt{l})\Vert W\Vert.
	\end{equation}
	In particular, if $k=1$ or $l=1$ then the Euclidean and operator norms coincide. As easily seen, for any $\eta\in C_b(\R)$, $W\in\lin{\R^k,\R^l}$ and $z\in\R^k$,
	\begin{align}
		|\eta_{W}(z)| &\le \sqrt{l}\vinf{\eta},\label{eq:bd1} \\
		\Vert M_{\eta_{W}(z)} \Vert &\le \vinf{\eta}\label{eq:bd2}.
	\end{align}
The next lemma establishes upper bound on the norm of $\partial_\theta f(\theta,\mathbf{z})$
	involving an order $n$ polynomial of $|\theta|$.
	%which stipulates Assumption \ref{A1} and \ref{A2}.
	
	\begin{lemma}\label{lem:deriv}
		Let $\theta = (\phi,\mathbf{w})\in\Theta$ and
		$x=(\mathbf{z},y)\in\R^{m-1}\times\R$ arbitrary.
		Then, for the Euclidean norm of the partial derivatives of the regression function with respect to the learning parameter, we have
		\begin{equation}\label{eq:dfnorm}
		|\partial_\theta f(\theta,\mathbf{z})|
		\le
		D^{1/2}\sqrt{n+1}(1+|x|)(1+\vsob{\sigma})^{n+1}(1+|\theta|^n).
		\end{equation}
		
		Furthermore, for the operator norm of the partial derivatives of nonlinear maps appearing in the definition of $f$,
%which is given as $f((\phi,\mathbf{w}),\mathbf{z}) := \phi \left(\sigma(\mathbf{w}_1^n,\mathbf{z})\right)$,
see \eqref{eq:fdef},
one obtains that
		\begin{equation}\label{eq:dsnorm}
	    \lvrv{\partial_{W_i}\sigma\left(\mathbf{w}_1^n,\mathbf{z}\right)}
		\le \sqrt{D}(1+|x|^{})(1+\vsob{\sigma})^{n-i+2}
		|\theta|^{n-i} \quad i=1,\ldots,n
		\end{equation}
		holds.
		
	\end{lemma}
	\begin{proof}
	In what follows, we calculate $\partial_\theta f(\theta,\mathbf{z})\in \Theta^\ast$
	at a fixed $\theta\in\Theta$ and $\mathbf{z}\in\R^{m-1}$. For any $\tilde{\theta}=(\tilde{\phi},\tilde{\mathbf{w}})$, where $\tilde{\mathbf{w}}=(\tilde{W_1},\ldots,\tilde{W_n})$,
	\begin{equation*}
		\partial_\theta f(\theta,\mathbf{z})(\tilde{\theta}) =
		\partial_{\phi} f(\theta,\mathbf{z})(\tilde{\phi})+\sum_{i=1}^{n}
		\partial_{W_i} f(\theta,\mathbf{z})(\tilde{W_i}).
	\end{equation*}
	The map $\phi\mapsto f((\phi,\mathbf{w}),\mathbf{z})$ is linear hence, by \eqref{eq:bd1},
	\begin{equation*}
		|\partial_{\phi} f(\theta,\mathbf{z})|=|\sigma\left(\mathbf{w}_1^n,\mathbf{z}\right)|
		\le\sqrt{d_n}\vinf{\sigma}.
	\end{equation*}
Moreover,
	\begin{equation*}
		\partial_{W_i} f(\theta,\mathbf{z})(\tilde{W_i}) = \phi
		\circ\partial_{W_i}\sigma\left(\mathbf{w}_1^n,\mathbf{z}\right)(\tilde{W_i}).
	\end{equation*}
Thus, by the chain rule, for $i=1,\ldots,n$, one deduces that
	\begin{align}\label{eq:dsig}
	\begin{split}
		\partial_{W_i}\sigma\left(\mathbf{w}_1^n,\mathbf{z}\right)(\tilde{W_i})
		&=\left[
		\prod_{j=1}^{n-i} \partial_\mathbf{z}\sigma_{W_{n-j+1}}\left(
		\sigma
		\left(\mathbf{w}_1^{n-j},\mathbf{z}
		\right)
		\right)
		\right]
		\partial_{W_i}\sigma_{W_i}\left(\sigma\left(\mathbf{w}_1^{i-1},\mathbf{z}\right)\right)(\tilde{W_i})
		\\
		&=\left[
		\prod_{j=1}^{n-i}
		M_{\sigma'_{W_{n-j+1}}\left(\sigma
			\left(\mathbf{w}_1^{n-j},\mathbf{z}
			\right)\right)}
		 W_{n-j+1}
		\right]
		M_{\sigma'_{W_i}\left(\sigma\left(\mathbf{w}_1^{i-1},\mathbf{z}\right)\right)}\tilde{W_i}\sigma\left(\mathbf{w}_1^{i-1},\mathbf{z}\right).
		\end{split}
	\end{align}
	Furthermore, by \eqref{eq:bd1} and \eqref{eq:bd2}, and the sub-multiplicativity of the operator norm, one obtains the first inequality	
	\begin{align*}
		\lvrv{\partial_{W_i}\sigma\left(\mathbf{w}_1^n,\mathbf{z}\right)}
		&\le
		\sqrt{d_{i-1}}(\vinf{\sigma}+|\mathbf{z}|)\vinf{\sigma'}^{n-i+1}\prod_{j=1}^{n-i}
		\Vert W_{n-j+1}\Vert \\
		&\le
		\sqrt{D}(1+|x|)(1+\vsob{\sigma})^{n-i+2}
		|\theta|^{n-i},
	\end{align*}
since, by definition, $\sigma (\mathbf{w}_1^0,\mathbf{z})=\mathbf{z}$. In addition, due to the properties of the Euclidean norm,
		\begin{align*}
	|\partial_\theta f(\theta,\mathbf{z})|^2 = &
	|\partial_{\phi} f(\theta,\mathbf{z})|^2+\sum_{i=1}^{n}
	|\partial_{W_i} f(\theta,\mathbf{z})|^2 \\ \le &
	d_n\vinf{\sigma}^2 + D |\phi|^2\sum_{i=1}^{n}
(1+|x|^{})^2(1+\vsob{\sigma})^{2(n-i+2)}
|\theta|^{2(n-i)} \\ \le &
D(1+|x|)^2(1+\vsob{\sigma})^{2(n+1)}
\sum_{i=0}^{n}|\theta|^{2(n-i+1)} \\ \le &
D(n+1)(1+|x|)^2(1+\vsob{\sigma})^{2(n+1)}(1+|\theta|^{2n}).
	\end{align*}
Finally, the subadditivity of the square root function yields that
	\begin{equation*}
		|\partial_\theta f(\theta,\mathbf{z})|
		\le
		D^{1/2}\sqrt{n+1}(1+|x|)(1+\vsob{\sigma})^{n+1}(1+|\theta|^n)
	\end{equation*}
which completes the proof.
\end{proof}

\begin{corollary}\label{cor:diff}
	Let $\theta,\theta'\in\Theta$ and $x\in\R^m$ be such that $\theta = (\phi,\mathbf{w}_1^{n})$,
	$\theta = (\phi',\mathbf{w'}_1^{n})$ and $x=(\mathbf{z},y)$, where $\mathbf{w}_1^{n},\mathbf{w'}_1^{n}\in\bigoplus_{i=1}^n\lin{\R^{d_{i-1}},\R^{d_i}}$,  $\phi,\phi'\in\left(\R^{d_n}\right)^\ast$ and $x\in\R^m$ are arbitrary. Then, by Lemma \ref{lem:deriv}, for $t\in [0,1]$ and $i=1,\ldots,n$, follows that
	\begin{align*}
		\lvrv{
			\partial_{\mathbf{w}_1^i}\sigma ((1-t)\mathbf{w}_1^i+t\mathbf{w'}_1^i,\mathbf{z})
		}^2
		\le &
		\sum_{j=1}^i
		\lvrv{
			\partial_{W_j} \sigma ((1-t)\mathbf{w}_1^i+t\mathbf{w'}_1^i,\mathbf{z})
		}^2
		\\ \le&
		D(1+|x|)^2\sum_{j=1}^i   (1+\vsob{\sigma})^{2(n-j+2)}
		|(1-t)\theta+t\theta'|^{2(n-j)}
		\\ \le&
		n D (1+|x|)^2 (1+\vsob{\sigma})^{2(n+1)} (1+|\theta|+|\theta'|)^{2(n-1)}
	\end{align*}
	which leads to the uniform estimate
	\begin{align*}
			\left|
		\sigma\left(\mathbf{w}_1^{i},\mathbf{z}\right)
		-
		\sigma\left(\mathbf{w'}_1^{i},\mathbf{z}\right)
		\right|
		\le & \sup_{t\in [0,1]}\Vert
		\partial_{\mathbf{w}_1^i}\sigma ((1-t)\mathbf{w}_1^i+t\mathbf{w'}_1^i,\mathbf{z})
		\Vert |\mathbf{w}_1^{i}-\mathbf{w'}_1^{i}|
		\\ \le&
		D^{1/2}\sqrt{n}(1+|x|)(1+\vsob{\sigma})^{n+1}(1+|\theta|+|\theta'|)^{n-1} |\mathbf{w}_1^{i}-\mathbf{w'}_1^{i}| \\ \le&
		D^{1/2}\sqrt{n}(1+|x|)(1+\vsob{\sigma})^{n+1}(1+|\theta|+|\theta'|)^{n-1} |\theta-\theta'|
	\end{align*}
	$i=1,\ldots,n$.
	
\end{corollary}	
	
\begin{lemma}\label{lem:plip}
	Let $\theta,\theta'\in\Theta$ and $x\in\R^m$ be such that $\theta = (\phi,\mathbf{w}_1^{n})$,
$\theta = (\phi',\mathbf{w'}_1^{n})$ and $x=(\mathbf{z},y)$, where $\mathbf{w}_1^{n},\mathbf{w'}_1^{n}\in\bigoplus_{i=1}^n\lin{\R^{d_{i-1}},\R^{d_i}}$,  $\phi,\phi'\in\left(\R^{d_n}\right)^\ast$ and $x\in\R^m$ are arbitrary. Then, for $i=1,\ldots,n$, we have
\begin{align*}
\lvrv{\partial_{W_i}\sigma (\mathbf{w}_1^n,\mathbf{z})
	-
	\partial_{W_i}\sigma (\mathbf{w'}_1^n,\mathbf{z})
}
=
2\sqrt{n}D(1+|x|)^2(1+\vsob{\sigma})^{2n-i+4}
(1+|\theta|+|\theta'|)^{2n-i}|\theta-\theta'|.
\end{align*}
\end{lemma}	
\begin{proof}
	Let $i\in\{1,\ldots,n\}$ be arbitrary and fixed.
	By the definition of $\sigma (\mathbf{w}_1^n,\mathbf{z})$, for $k<n$,
	$\sigma\left(\mathbf{w}_1^{k+1},\mathbf{z}\right) = \sigma_{W_{k+1}}\circ\sigma\left(\mathbf{w}_1^{k},\mathbf{z}\right)$.  Hence, for $i\le k<n$,
\begin{equation*}
	\partial_{W_i}\sigma\left(\mathbf{w}_1^{k+1},\mathbf{z}\right)
	=
	M_{\sigma'_{W_{k+1}}\left(\sigma
		\left(\mathbf{w}_1^{k},\mathbf{z}
		\right)\right)}
	W_{k+1}
	\partial_{W_i}\sigma\left(\mathbf{w}_1^{k},\mathbf{z}\right)
	\end{equation*}
	which implies that
	\begin{align*}
	\lvrv{
	\partial_{W_i}\sigma\left(\mathbf{w}_1^{k+1},\mathbf{z}\right)
	-
	\partial_{W_i}\sigma\left(\mathbf{w'}_1^{k+1},\mathbf{z}\right)
	}    \le &
	\lvrv{
	M_{\sigma'_{W_{k+1}}\left(\sigma
		\left(\mathbf{w}_1^{k},\mathbf{z}
		\right)\right)}
	-
	M_{\sigma'_{W'_{k+1}}\left(\sigma
		\left(\mathbf{w'}_1^{k},\mathbf{z}
		\right)\right)}
	}
	\\ & \times\lvrv{W_{k+1}}\lvrv{\partial_{W_i}\sigma\left(\mathbf{w}_1^{k},\mathbf{z}\right)}
	\\ & +
	\lvrv{ M_{\sigma'_{W'_{k+1}}\left(\sigma
		\left(\mathbf{w'}_1^{k},\mathbf{z}
		\right)\right)}
	}
	\lvrv{
		W_{k+1}
		-
		W'_{k+1}
	}
	\\ & \times\lvrv{
		\partial_{W_i}\sigma\left(\mathbf{w}_1^{k},\mathbf{z}\right)
	} \\ &+
	\lvrv{ M_{\sigma'_{W'_{k+1}}\left(\sigma
		\left(\mathbf{w'}_1^{k},\mathbf{z}
		\right)\right)}
	}
	\\ & \times\Vert
	W'_{k+1}
	\Vert
	\lvrv{
	\partial_{W_i}\sigma\left(\mathbf{w}_1^{k},\mathbf{z}\right)
	-
	\partial_{W_i}\sigma\left(\mathbf{w'}_1^{k},\mathbf{z}\right)
	} \\ \le &
	\vinf{\sigma'}|\mathbf{w'}_1^n|
	\lvrv{
		\partial_{W_i}\sigma\left(\mathbf{w}_1^{k},\mathbf{z}\right)
		-
		\partial_{W_i}\sigma\left(\mathbf{w'}_1^{k},\mathbf{z}\right)
	} \\
    &+
    \lvrv{
    	\partial_{W_i}\sigma\left(\mathbf{w}_1^{k},\mathbf{z}\right)
    }\\ & \times\bigg[
    \lvrv{
    	M_{\sigma'_{W_{k+1}}\left(\sigma
    		\left(\mathbf{w}_1^{k},\mathbf{z}
    		\right)\right)}
    	-
    	M_{\sigma'_{W'_{k+1}}\left(\sigma
    		\left(\mathbf{w'}_1^{k},\mathbf{z}
    		\right)\right)}
    } |\mathbf{w}_1^n|  \\ & \phantom{+\lvrv{
    	\partial_{W_i}\sigma\left(\mathbf{w}_1^{k},\mathbf{z}\right)
    }\bigg[}+
    \vinf{\sigma'} |\mathbf{w}_1^n-\mathbf{w'}_1^n|
    \bigg]
	\end{align*}
	holds for the corresponding operator norms. Further, for $i=1,\ldots,n$ and by taking into consideration Corollary \ref{cor:diff}, one obtains that
\begin{align}\label{eq:Mdiff}
\lvrv{
M_{\sigma'_{W_i}\left(\sigma\left(\mathbf{w}_1^{i-1},\mathbf{z}\right)\right)}
-
M_{\sigma'_{W'_i}\left(\sigma\left(\mathbf{w'}_1^{i-1},\mathbf{z}\right)\right)}
} = &
\lvrv{
M_{\sigma'_{W_i}\left(\sigma\left(\mathbf{w}_1^{i-1},\mathbf{z}\right)\right)
	-
	\sigma'_{W'_i}\left(\sigma\left(\mathbf{w'}_1^{i-1},\mathbf{z}\right)\right)}
} \nonumber \\ \le &
\vinf{\sigma'_{W_i}\left(\sigma\left(\mathbf{w}_1^{i-1},\mathbf{z}\right)\right)
	-
	\sigma'_{W'_i}\left(\sigma\left(\mathbf{w'}_1^{i-1},\mathbf{z}\right)\right)}
\nonumber \\ \le &
\vlip{\sigma'}\bigg(\Vert W_{i}\Vert
\left|
\sigma\left(\mathbf{w}_1^{i-1},\mathbf{z}\right)
-
\sigma\left(\mathbf{w'}_1^{i-1},\mathbf{z}\right)
\right|
\nonumber \\ & \phantom{\vlip{\sigma'}\bigg(}+
\Vert W_{i}-W'_{i}\Vert
\left|\sigma\left(\mathbf{w'}_1^{i-1},\mathbf{z}\right)
\right|
\bigg)
\nonumber \\ \le &
D^{1/2}\sqrt{n}
(1+\vsob{\sigma})^{n+2}(1+|x|)
(1+|\theta|+|\theta'|)^n |\theta-\theta'|
\end{align}
which is uniform in $i$. Combining these with inequality \eqref{eq:dsnorm} in Lemma \ref{lem:deriv}, for $i\le k<n$, one obtains the following recursive estimate 	
\begin{align}\label{eq:rec}
\begin{split}
	\lvrv{
		\partial_{W_i}\sigma\left(\mathbf{w}_1^{k+1},\mathbf{z}\right)
		-
		\partial_{W_i}\sigma\left(\mathbf{w'}_1^{k+1},\mathbf{z}\right)
	}
&\le
A
\lvrv{
\partial_{W_i}\sigma\left(\mathbf{w}_1^{k},\mathbf{z}\right)
-
\partial_{W_i}\sigma\left(\mathbf{w'}_1^{k},\mathbf{z}\right)
}+B A^{n+k-i+1},
\end{split}
\end{align}	
where
\begin{align*}
A & = (1+\vsob{\sigma})(1+|\theta|+|\theta'|)\\
B & = 2\sqrt{n}D(1+|x|)^2(1+\vsob{\sigma})^4|\theta-\theta'|.
\end{align*}	
By induction, for $i=1,\ldots,n$, one deduces that
\begin{align}\label{eq:c}
\begin{split}
\lvrv{
	\partial_{W_i}\sigma\left(\mathbf{w}_1^{n},\mathbf{z}\right)
	-
	\partial_{W_i}\sigma\left(\mathbf{w'}_1^{n},\mathbf{z}\right)
} 	
&\le
A^{n-i}
\lvrv{
	\partial_{W_i}\sigma\left(\mathbf{w}_1^{i},\mathbf{z}\right)
	-
	\partial_{W_i}\sigma\left(\mathbf{w'}_1^{i},\mathbf{z}\right)
}+(n-i)B A^{2n-i}.
\end{split}
\end{align} 	
Using basic properties of the operator norm and inequality \eqref{eq:equiv}, for $i=n$, yields that	
\begin{align*}
\left|
	\partial_{W_i}\sigma\left(\mathbf{w}_1^i,\mathbf{z}\right)(\tilde{W_i})
-
	\partial_{W_i}\sigma\left(\mathbf{w'}_1^i,\mathbf{z}\right)(\tilde{W_i})
\right|
=&
	\bigg|
		M_{\sigma'_{W_i}\left(\sigma\left(\mathbf{w}_1^{i-1},\mathbf{z}\right)\right)}\tilde{W_i}\sigma\left(\mathbf{w}_1^{i-1},\mathbf{z}\right)
	\\ & 	-
		M_{\sigma'_{W'_i}\left(\sigma\left(\mathbf{w'}_1^{i-1},\mathbf{z}\right)\right)}\tilde{W_i}\sigma\left(\mathbf{w'}_1^{i-1},\mathbf{z}\right)
	\bigg| \\ \le &
	\lvrv{
		M_{\sigma'_{W_i}\left(\sigma\left(\mathbf{w}_1^{i-1},\mathbf{z}\right)\right)}
	}
		\left|
	\sigma\left(\mathbf{w}_1^{i-1},\mathbf{z}\right)
	-
	\sigma\left(\mathbf{w'}_1^{i-1},\mathbf{z}\right)
	\right|
	 |\tilde{W_i}|
	\\ & +
	\lvrv{
	M_{\sigma'_{W_i}\left(\sigma\left(\mathbf{w}_1^{i-1},\mathbf{z}\right)\right)}
	-
	M_{\sigma'_{W'_i}\left(\sigma\left(\mathbf{w'}_1^{i-1},\mathbf{z}\right)\right)}
	}
	\\ & \times\left|
	\sigma\left(\mathbf{w'}_1^{i-1},\mathbf{z}\right)
	\right|
	 |\tilde{W_i}|
\end{align*}	
which, due to Corollary \ref{cor:diff} and \eqref{eq:Mdiff}, implies that
\begin{align*}
	\lvrv{\partial_{W_i}\sigma\left(\mathbf{w}_1^i,\mathbf{z}\right)
		-
		\partial_{W_i}\sigma\left(\mathbf{w'}_1^i,\mathbf{z}\right)} \le &
		\lvrv{
		M_{\sigma'_{W_i}\left(\sigma\left(\mathbf{w}_1^{i-1},\mathbf{z}\right)\right)}
	}
	\left|
	\sigma\left(\mathbf{w}_1^{i-1},\mathbf{z}\right)
	-
	\sigma\left(\mathbf{w'}_1^{i-1},\mathbf{z}\right)
	\right| \\
	&+
		\lvrv{
		M_{\sigma'_{W_i}\left(\sigma\left(\mathbf{w}_1^{i-1},\mathbf{z}\right)\right)}
		-
		M_{\sigma'_{W'_i}\left(\sigma\left(\mathbf{w'}_1^{i-1},\mathbf{z}\right)\right)}
	}
	\left|
	\sigma\left(\mathbf{w'}_1^{i-1},\mathbf{z}\right)
	\right|
	\\
	\le &
	B A^n.
\end{align*}
Finally, combine this estimate with \eqref{eq:c} yields that
\begin{align*}
\lvrv{\partial_{W_i}\sigma (\mathbf{w}_1^n,\mathbf{z})
	-
	\partial_{W_i}\sigma (\mathbf{w'}_1^n,\mathbf{z})
}
&\le
(n-i+1)BA^{2n-i}
\\
&=
2\sqrt{n}D(1+|x|)^2(1+\vsob{\sigma})^{2n-i+4}
(1+|\theta|+|\theta'|)^{2n-i}|\theta-\theta'|
\end{align*}
which completes the proof.

\end{proof}	
	
\begin{lemma}\label{lem:dlip}
	Let $x=(\mathbf{z},y)$, where $\mathbf{z}\in\R^{m-1}$ and $y\in\R$ are arbitrary.
	Then, for any $\theta,\theta'\in\Theta$,
	\begin{align*}
	|\partial_\theta f(\theta,\mathbf{z})-\partial_\theta f(\theta',\mathbf{z})|
	&\le
	4(n+1) D (1+|x|)^2 (1+\vsob{\sigma})^{2n+3}
	(1+|\theta|+|\theta'|)^{2n}|\theta-\theta'|.
	\end{align*}
\end{lemma}
\begin{proof}
For the Euclidean norm of the partial derivative of the regression function with respect to the learning parameter, we have
\begin{equation*}
|\partial_\theta f(\theta,\mathbf{z})|^2 =
|\sigma (\mathbf{w}_1^n,\mathbf{z})|^2
+
\sum_{i=1}^n
|\phi\circ\partial_{W_i}\sigma (\mathbf{w}_1^n,\mathbf{z})|^2
\end{equation*}
and thus we have
\begin{align*}
|\partial_\theta f(\theta,\mathbf{z})-\partial_\theta f(\theta',\mathbf{z})|^2
&=
|\sigma (\mathbf{w}_1^n,\mathbf{z})-\sigma (\mathbf{w'}_1^n,\mathbf{z})|^2 +
\sum_{i=1}^n
|\phi\circ\partial_{W_i}\sigma (\mathbf{w}_1^n,\mathbf{z})-\phi'\circ\partial_{W_i}\sigma (\mathbf{w'}_1^n,\mathbf{z})|^2.
\end{align*}
Using Lemma \ref{lem:deriv} and \ref{lem:plip}, one deduces that
\begin{align*}
	\left|
	\phi\circ\partial_{W_i}\sigma (\mathbf{w}_1^n,\mathbf{z})
	-
	\phi'\circ\partial_{W_i}\sigma (\mathbf{w'}_1^n,\mathbf{z})
	\right|^2
	\le &
	2\bigg(
	|\phi|^2
	\lvrv{
		\partial_{W_i}\sigma (\mathbf{w}_1^n,\mathbf{z})
		-
		\partial_{W_i}\sigma (\mathbf{w'}_1^n,\mathbf{z})	
	}^2
	\\ & \phantom{2\bigg(} +
	|\phi-\phi'|^2
	\lvrv{\partial_{W_i}\sigma (\mathbf{w'}_1^n,\mathbf{z})}^2
	\bigg)\\
	\le &
	8nD^2(1+\vsob{\sigma})^{2(2n-i+4)}(1+|\theta|+|\theta'|)^{2+4n-2i}|\theta-\theta'|^2
\\
	&+
	2D(1+|x|)^2(1+\vsob{\sigma})^{2(n-i+2)}(1+|\theta|+|\theta'|)^{2n-2i}| \\ & \times\theta-\theta'|^2 \\
\le	&
	16nD^2(1+|x|)^4(1+\vsob{\sigma})^{2(2n-i+4)}(1+|\theta|+|\theta'|)^{2+4n-2i} \\ & \times|\theta-\theta'|^2.
\end{align*}
moreover by, Corollary \ref{cor:diff}, for the first term, we have
\begin{equation*}
|\sigma (\mathbf{w}_1^n,\mathbf{z})-\sigma (\mathbf{w'}_1^n,\mathbf{z})|^2 \le
	D n (1+|x|)^2(1+\vsob{\sigma})^{2(n+1)}(1+|\theta|+|\theta'|)^{2(n-1)} |\theta-\theta'|^2.
\end{equation*}
Hence
\begin{align*}
|\partial_\theta f(\theta,\mathbf{z})-\partial_\theta f(\theta',\mathbf{z})|^2
&\le
16(n+1)^2 D^2 (1+|x|)^4 (1+\vsob{\sigma})^{2(2n+3)}
(1+|\theta|+|\theta'|)^{4n}|\theta-\theta'|^2.
\end{align*}
\end{proof}
The next Proposition asserts that the growth condition \ref{G_growth} holds with
	\begin{equation*}
	K(x) = 4D\sqrt{n+1}(1+|x|)^2(1+\vsob{\sigma})^{n+2}
	\end{equation*}
	whenever $r\ge \frac{n+3}{2}$.
	
	\begin{proposition}\label{kellmajd}
	For any $\theta\in\Theta$ and $x\in\R^m$,	
	\begin{equation*}
	|G(\theta,x)|\le 4D\sqrt{n+1}(1+|x|)^2(1+\vsob{\sigma})^{n+2}(1+|\theta|^{n+1}).
	\end{equation*}		
	\end{proposition}
	\begin{proof}
		By Lemma \ref{lem:deriv}, for arbitrary $x\in\R^m$ and $\theta\in\R^d$ , one calculates
		\begin{align*}
			|G(\theta,x)| &= \Vert G(\theta,x) \Vert =
			2|y-f(\theta,\mathbf{z})| |\partial_\theta f(\theta,\mathbf{z})|
			\\
			&\le
			2(|y|+|f(\theta,\mathbf{z})|) |\partial_\theta f(\theta,\mathbf{z})|
			\\
			&\le
			2(1+|x|)D^{1/2}(1+\vsob{\sigma})(1+|\theta|)
			|\partial_\theta f(\theta,\mathbf{z})|
			\\
			&\le
			4D\sqrt{n+1}(1+|x|)^2(1+\vsob{\sigma})^{n+2}(1+|\theta|^{n+1})
		\end{align*}
		since $|\theta|+|\theta|^n\le 1+|\theta|^{n+1}$, for any $n\ge 1$.
	\end{proof}
The next Proposition states that Assumption \ref{A2} is satisfied with $\rho=3$,
$q-1=\max (2n+1,2r)$ and
\begin{equation*}
	L_1 = 16(1+\eta)(2r+1)(n+1)D^{3/2}(1+\vsob{\sigma})^{2n+4}.
\end{equation*}

\begin{proposition}[\textit{Link to Assumption \ref{A2} and  Proposition} \ref{polip H}]
	For any $\theta\in\Theta$ and $x\in\R^m$,
	\begin{align*}
	|H(\theta,x)-H(\theta',x)| &\le
	16(1+\eta)(2r+1)(n+1)D^{3/2}(1+|x|)^3 (1+\vsob{\sigma})^{2n+4}  (1+|\theta|+|\theta'|)^{q-1}|\theta-\theta'|
	\end{align*}
	where $q-1= \max (2n+1,2r)$.
\end{proposition}
\begin{proof}[\textbf{Proof of Proposition \ref{A2_for_ANN}}]
In view of Lemmas \ref{lem:deriv} and \ref{lem:dlip} and Corollary \ref{cor:diff}, one obtains for \textit{the first term that satisfies Assumption \ref{A2}} since
\begin{align*}
	\frac{1}{2}|G(\theta,x)-G(\theta',x)|
	\le
	&
	|y-f(\theta,\mathbf{z})||\partial_\theta f(\theta,\mathbf{z})-\partial_\theta f(\theta',\mathbf{z})|+
	|f(\theta,\mathbf{z})-f(\theta',\mathbf{z})|
	|\partial_\theta f(\theta',\mathbf{z})| \\
	\le &
	4(n+1)D^{3/2}(1+|x|)^3 (1+\vsob{\sigma})^{2n+4}
	(1+|\theta|+|\theta'|)^{2n+1}|\theta-\theta'| \\
	 & +
	2(n+1)(1+|x|)^2(1+\vsob{\sigma})^{2n+2}(1+|\theta|+|\theta'|)^{2n}|\theta-\theta'| \\\le&
	8(n+1)D^{3/2}(1+|x|)^3(1+\vsob{\sigma})^{2n+4}(1+|\theta|+|\theta'|)^{2n+1}
	|\theta-\theta'|,
\end{align*}
which completes the proof.	
\end{proof}

\section{Tables of constants} \label{TablesofConstants}

{We conclude the Appendix by presenting two tables of constants, which appear in our main results, either written in full analytic form or by declaring their dependencies on key parameters. }

\begin{table}[htb]
\renewcommand{\arraystretch}{2}
\centering
    \caption{Analytic expressions of constants}
    \label{tab:Other constants}
\begin{sc}
\scriptsize
    \begin{tabular}{@{}ll@{}} %\hline
        \toprule
      Constant & \multicolumn{1}{c}{Full expression} \\ %\hline
      \midrule
      $\bar{M}_{p}$& $\sqrt{1 / 3+4 B /(3 A)+4 d /(3 A \beta)+4(p-2) /(3 A \beta)}$ \\\hline  $\bar{c}(p)$&$A p / 4$  \\\hline $\tilde{c}(p)$&$(3 / 4) A p v_{p}\left(\bar{M}_{p}\right)$\\ \hline
      $\tilde{C}_1 $& $9\sqrt{\E[K^{4}(X_0)]+\eta^4 (\E|\theta_0|^4+C'_2) + \frac{48}{\beta^4}d^2}$\\ \hline
      $\tilde{C}_2$ & $3^{4l}\sqrt{\E K^{8l}(X_0)+\eta^{8l}(\E|\theta_0|^{8l}+C'_{4l}+(\frac{2}{\beta})^{8l}d^{4l}(8l-1)!!}$\\ \hline
          $C_1$ &$2\frac{L}{a}2^{4l}\tilde{C}_1(\E(1+X_0)^{8\rho})^{\frac{1}{4}}\left(1+2^{8l}(\E|\theta_0|^{8l}+C'_{4l})+\tilde{C}_2\right)^\frac{1}{4}.$\\ \hline
          $C_2$ &  $\sqrt{C'_{4r}+\E|\theta_0|^{8r}}\bigg(\frac{L^2}{a}(\E(1+|X_0|)^{8\rho})^{\frac{1}{4}}2^{2l+2}\left(\E(1+|\theta_0|)^{16l} + \E|\theta_0|^{16l}+C'_{8l}\right)^\frac{1}{8} $   \nonumber \\& $+\left(\E|\theta_0|^{16} +\E |\theta_0|^{16}+C'_8\right)^\frac{1}{8}\bigg)
  +\frac{4}{a}\sqrt{\E|H(\theta_0,X_0)|^4}$
         \\ \hline
         $C_3$ & $2\sqrt{2^{2q+3} \E K^2(X_0)(1+\E |\theta_0|^{2q}+C'_q)+2\eta^2(\E |\theta_0|^{4r+2}+C'_{2r+1})}$
         \nonumber\\&$\times 2 L \sqrt{ \E |H(0,X_0)|^2+2^{2l+2} \E (1+|X_0|)^{2\rho}(1+C'_{l+1}+\E|\theta_0|^{2l+2})}$ \\ \hline
         $z_1$ &$  \frac{\hat{c}}{1-exp(-\dot{c})}\left[\sqrt{e^{3a}(C_1+C_2+C_3)} + 3\left(5+4C'_2 \frac{\tilde{c}(4)}{\bar{c}(4)}+4\mathbb{E}|\theta_0|^4 \right)\right]$\\ \hline
         %\vspace{+20pt}
         $z_2$& $\sqrt{2\hat{c}}\frac{e^{3a}(C_1+C_2+C_3)+1+\sqrt{2\mathbb{E}|\theta_0|^4+2+2C'_2 +\tilde{c}(4)/\bar{c}(4)}+\sqrt{2\mathbb{E}|\theta_0|^4+2+2C'_2}}{1-exp(-\dot{c}/2)}.$ \\ %\hline
          \bottomrule
    \end{tabular}
    \end{sc}
\end{table}

{Taking a closer look at the constants in the two tables, one observes that the constants $z_1$, $z_2$, which appear in our convergence estimates in $W_1$, $W_2$ respectively, exhibit exponential dependence on the dimension of the problem. In fact, one can trace this exponential dependence to $\hat{c}$, a constant which is produced from the application of the  contraction results in \cite{Harris} to our non-convex setting. Note that our setting assumes only local Lipschitz continuity for the gradient of the non-convex objective  function. In other words, any problem-specific information which can improve the contraction estimates in \cite{Harris} by reducing their dependence to the dimension from exponential to polynomial, produces the same reduction in our estimates. \\
One also observes the effect of the regularisation parameter $\eta$ to the magnitude of our main constants. In particular, it is clear that $C'_p$, which is a class of constants most notably appearing in the moment estimates, depends on the negative $p$-th power of $\eta$. This is a direct consequence of the proposed regularization.\\
Another interesting observation is the relationship between $d$ and $\beta$ and their interplay with key constants such as $C'_p$ and $\dot{c}$. As it can be seen from Table \ref{tab:basic constants}, these constants depend on $d/\beta$. This implies that the choice of the temperature parameter can significantly reduce the impact of the dimension to these constants.\\
Finally, it is worth mentioning here that in our simulation results for the empirical risk minimization of (feed-forward) neural networks, our estimates seem not to suffer from such 'exploding' constants, which lead us to believe that in practice, and in particular in applications to non-'pathological' problems, the actual values of these constants are significantly lower than what is currently estimated.}
%\newpage
%\renewcommand{\arraystretch}{2}
\begin{table}[htb]
\renewcommand{\arraystretch}{2}
\centering
    \caption{Main constants and their dependency to key parameters}
    \label{tab:basic constants}
\begin{sc}
\scriptsize
    \begin{tabular}{@{}lllll@{}}
    \toprule
     \multicolumn{1}{c}{Constant} &  \multicolumn{4}{c}{Key parameters}   \\
    %\cmidrule(lr){2-5}
    \midrule
    %\hline
    % \phantom{Constant} & \phantom{Key parameters} & \phantom{Constant} & \phantom{Constant}& \phantom{Constant}& \phantom{Constant} \\%\hline
    \phantom{Constant} &$d$&$\beta$& Moments of $X_0$& $\eta$ \\\hline
         $A$ & - & - &$\mathcal{O}(\E K(X_0))$ & -  \\\hline
          $B$& - & - &$\mathcal{O}(\E K(X_0)^{q+2})$ &$\mathcal{O}(\frac{1}{\eta^{q+1}})$\\\hline
          $R$ & - & - &$\mathcal{O}(\E|X_0|^\rho)$& $\mathcal{O}(\frac{1}{\eta^{2r-q}})$\\\hline
          $a$ & - & - & $\mathcal{O}\left(\E |X_0|^{\rho(q-1)}\right)$& $\mathcal{O}(\frac{1}{\eta^{(2r-q)(q-1)}})$\\\hline
       $C'_p$ & $\mathcal{O}(\frac{d}{\beta})$ & $\mathcal{O}(\frac{d}{\beta})$ & $\mathcal{O}((\E K(X_0)^{2p})\frac{1}{2p})$& $\mathcal{O}(\frac{1}{\eta^p}) $ \\\hline  $\dot{c}$&  $poly(\frac{d}{\beta})$& $poly(\frac{d}{\beta})$& $\E poly (K(X_0)^{(q+1)/2})$& $poly(\frac{1}{\eta^{(q+1)/2}})$\\ \hline
       $\hat{c}$& $\mathcal{O}(e^d)$  & \multicolumn{3}{l}{Inherited from contraction estimates in \cite{eberle2019couplings}}  \\
       \bottomrule
    \end{tabular}
    \end{sc}
\end{table}

\bibliography{references_ArXiv23}

\begin{thebibliography}{24}
\providecommand{\natexlab}[1]{#1}
\providecommand{\url}[1]{\texttt{#1}}
\expandafter\ifx\csname urlstyle\endcsname\relax
  \providecommand{\doi}[1]{doi: #1}\else
  \providecommand{\doi}{doi: \begingroup \urlstyle{rm}\Url}\fi

\bibitem[Barakat and Bianchi(2019)]{Barakat2019ConvergenceAD}
A.~Barakat and P.~Bianchi.
\newblock Convergence and dynamical behavior of the adam algorithm for non
  convex stochastic optimization.
\newblock \emph{arXiv: Machine Learning}, 2019.

\bibitem[Barkhagen et~al.(2021)Barkhagen, Chau, Moulines, R{\'a}sonyi, Sabanis,
  and Zhang]{convex}
M.~Barkhagen, N.~H. Chau, {\'E}.~Moulines, M.~R{\'a}sonyi, S.~Sabanis, and
  Y.~Zhang.
\newblock On stochastic gradient {L}angevin dynamics with dependent data
  streams in the logconcave case.
\newblock \emph{Bernoulli}, 27\penalty0 (1):\penalty0 1--33, 2021.

\bibitem[Brosse et~al.(2018)Brosse, Durmus, and Moulines]{brosse2018promises}
N.~Brosse, A.~Durmus, and E.~Moulines.
\newblock The promises and pitfalls of stochastic gradient {L}angevin dynamics.
\newblock In \emph{Advances in Neural Information Processing Systems}, pages
  8268--8278, 2018.

\bibitem[Brosse et~al.(2019)Brosse, Durmus, Moulines, and Sabanis]{tula}
N.~Brosse, A.~Durmus, {\'E}.~Moulines, and S.~Sabanis.
\newblock The tamed unadjusted {L}angevin algorithm.
\newblock \emph{Stochastic Processes and their Applications}, 129\penalty0
  (10):\penalty0 3638--3663, 2019.

\bibitem[Chau et~al.(2021)Chau, Moulines, R\'{a}sonyi, Sabanis, and
  Zhang]{nonconvex}
N.~H. Chau, E.~Moulines, M.~R\'{a}sonyi, S.~Sabanis, and Y.~Zhang.
\newblock On stochastic gradient {L}angevin dynamics with dependent data
  streams: the fully nonconvex case.
\newblock \emph{SIAM J. Math. Data Sci.}, 3\penalty0 (3):\penalty0 959--986,
  2021.

\bibitem[Chen et~al.(2019)Chen, Liu, Sun, and Hong]{chen2019convergence}
X.~Chen, S.~Liu, R.~Sun, and M.~Hong.
\newblock On the convergence of a class of adam-type algorithms for non-convex
  optimization.
\newblock \emph{arXiv:1808.02941}, 2019.

\bibitem[Cheng et~al.(2018)Cheng, Chatterji, Abbasi-Yadkori, Bartlett, and
  Jordan]{berkeley}
X.~Cheng, N.~S. Chatterji, Y.~Abbasi-Yadkori, P.~L. Bartlett, and M.~I. Jordan.
\newblock Sharp convergence rates for {L}angevin dynamics in the nonconvex
  setting.
\newblock \emph{arXiv preprint arXiv:1805.01648}, 2018.

\bibitem[Dalalyan(2017)]{dalalyan2017theoretical}
A.~S. Dalalyan.
\newblock Theoretical guarantees for approximate sampling from smooth and
  log-concave densities.
\newblock \emph{Journal of the Royal Statistical Society: Series B (Statistical
  Methodology)}, 79\penalty0 (3):\penalty0 651--676, 2017.

\bibitem[Dalalyan and Karagulyan(2019)]{dalalyan2019user}
A.~S. Dalalyan and A.~Karagulyan.
\newblock User-friendly guarantees for the {L}angevin {M}onte {C}arlo with
  inaccurate gradient.
\newblock \emph{Stochastic Processes and their Applications}, 129\penalty0
  (12):\penalty0 5278--5311, 2019.

\bibitem[Durmus and Moulines(2017)]{durmus2017nonasymptotic}
A.~Durmus and E.~Moulines.
\newblock Nonasymptotic convergence analysis for the unadjusted {L}angevin
  algorithm.
\newblock \emph{The Annals of Applied Probability}, 27\penalty0 (3):\penalty0
  1551--1587, 2017.

\bibitem[Durmus and Moulines(2019)]{durmus2019high}
A.~Durmus and E.~Moulines.
\newblock High-dimensional {B}ayesian inference via the unadjusted {L}angevin
  algorithm.
\newblock \emph{Bernoulli}, 25\penalty0 (4A):\penalty0 2854--2882, 2019.

\bibitem[Eberle et~al.(2019{\natexlab{a}})Eberle, Guillin, and Zimmer]{Harris}
A.~Eberle, A.~Guillin, and R.~Zimmer.
\newblock Quantitative {H}arris-type theorems for diffusions and
  {M}c{K}ean--{V}lasov processes.
\newblock \emph{Transactions of the American Mathematical Society},
  371\penalty0 (10):\penalty0 7135--7173, 2019{\natexlab{a}}.

\bibitem[Eberle et~al.(2019{\natexlab{b}})Eberle, Guillin, and
  Zimmer]{eberle2019couplings}
A.~Eberle, A.~Guillin, and R.~Zimmer.
\newblock Couplings and quantitative contraction rates for {L}angevin dynamics.
\newblock \emph{The Annals of Probability}, 47\penalty0 (4):\penalty0
  1982--2010, 2019{\natexlab{b}}.

\bibitem[Hutzenthaler et~al.(2011)Hutzenthaler, Jentzen, and
  Kloeden]{hutzenthaler2011}
M.~Hutzenthaler, A.~Jentzen, and P.~E. Kloeden.
\newblock Strong and weak divergence in finite time of euler{\textquoteright}s
  method for stochastic differential equations with non-globally lipschitz
  continuous coefficients.
\newblock \emph{Proceedings of the Royal Society of London A: Mathematical,
  Physical and Engineering Sciences}, 467\penalty0 (2130):\penalty0 1563--1576,
  2011.
\newblock ISSN 1364-5021.

\bibitem[Hutzenthaler et~al.(2012)Hutzenthaler, Jentzen, and
  Kloeden]{hutzenthaler2012}
M.~Hutzenthaler, A.~Jentzen, and P.~E. Kloeden.
\newblock Strong convergence of an explicit numerical method for sdes with
  nonglobally lipschitz continuous coefficients.
\newblock \emph{Ann. Appl. Probab.}, 22\penalty0 (4):\penalty0 1611--1641, 08
  2012.

\bibitem[Hwang(1980)]{hwang}
C.-R. Hwang.
\newblock Laplace's method revisited: weak convergence of probability measures.
\newblock \emph{The Annals of Probability}, 8\penalty0 (6):\penalty0
  1177--1182, 1980.

\bibitem[Kingma and Ba(2015)]{kingma2015adam}
D.~P. Kingma and J.~Ba.
\newblock Adam: A method for stochastic optimization.
\newblock In \emph{International Conference on Learning Representations
  (ICLR)}, 2015.

\bibitem[Raginsky et~al.(2017)Raginsky, Rakhlin, and Telgarsky]{raginsky}
M.~Raginsky, A.~Rakhlin, and M.~Telgarsky.
\newblock Non-convex learning via {S}tochastic {G}radient {L}angevin
  {D}ynamics: a nonasymptotic analysis.
\newblock In \emph{Conference on Learning Theory}, pages 1674--1703, 2017.

\bibitem[Sabanis(2013)]{tamed-euler}
S.~Sabanis.
\newblock A note on tamed euler approximations.
\newblock \emph{Electron. Commun. Probab.}, 18\penalty0 (47):\penalty0 1--10,
  2013.

\bibitem[Sabanis(2016)]{SabanisAoAP}
S.~Sabanis.
\newblock Euler approximations with varying coefficients: the case of
  superlinearly growing diffusion coefficients.
\newblock \emph{Ann. Appl. Probab.}, 26\penalty0 (4):\penalty0 2083--2105,
  2016.

\bibitem[Sabanis and Zhang(2019)]{hola}
S.~Sabanis and Y.~Zhang.
\newblock Higher order {L}angevin {M}onte {C}arlo algorithm.
\newblock \emph{Electronic Journal of Statistics}, 13\penalty0 (2):\penalty0
  3805--3850, 2019.

\bibitem[Welling and Teh(2011)]{welling2011bayesian}
M.~Welling and Y.~W. Teh.
\newblock Bayesian learning via stochastic gradient {L}angevin dynamics.
\newblock In \emph{Proceedings of the 28th International Conference on Machine
  Learning}, pages 681--688, 2011.

\bibitem[Xu et~al.(2018)Xu, Chen, Zou, and Gu]{xu2018global}
P.~Xu, J.~Chen, D.~Zou, and Q.~Gu.
\newblock Global convergence of {L}angevin dynamics based algorithms for
  nonconvex optimization.
\newblock In \emph{Advances in Neural Information Processing Systems}, pages
  3122--3133, 2018.

\bibitem[Zhang et~al.(2019)Zhang, Akyildiz, Damoulas, and
  Sabanis]{zhang2019nonasymptotic}
Y.~Zhang, {\"O}.~D. Akyildiz, T.~Damoulas, and S.~Sabanis.
\newblock Nonasymptotic estimates for {S}tochastic {G}radient {L}angevin
  {D}ynamics under local conditions in nonconvex optimization.
\newblock \emph{arXiv preprint arXiv:1910.02008}, 2019.

\end{thebibliography}

\clearpage
\onecolumn

\end{document}